\DeclareMathOperator*{\argmax}{arg\,max}
\DeclareMathOperator*{\argmin}{arg\,min}
\newcommand{\Rmnum}[1]{\expandafter\@slowromancap\romannumeral #1@}
\newtheorem{theorem}{Theorem}
\newtheorem*{theorem*}{Theorem}
\newtheorem{definition}{Definition}
\newtheorem{lemma}{Lemma}
\newtheorem*{lemma*}{Lemma}
\newtheorem{claim}{Claim}
\newtheorem*{cor*}{Corollary}
\newtheorem{remark}{Remark}
\newtheorem{fact}{Fact}
\newcommand{\namedref}[2]{\hyperref[#2]{#1~\ref*{#2}}}
\newcommand{\eps}{\ensuremath{\epsilon}\xspace}
\definecolor{darkred}{rgb}{0.5, 0, 0} 
\definecolor{darkblue}{rgb}{0,0,0.5} 
\newcommand{\bg}{\ensuremath{{\bf g}}\xspace}
\newcommand{\bh}{\ensuremath{{\bf h}}\xspace}
\newcommand{\bx}{\ensuremath{{\bf x}}\xspace}
\newcommand{\bw}{\ensuremath{{\bf w}}\xspace}
\newcommand{\calS}{\ensuremath{\mathcal{S}}\xspace}
\newcommand{\bbE}{\ensuremath{\mathbb{E}}\xspace}
\renewcommand{\paragraph}[1]{\smallskip\noindent{\bf #1}~}
\newcommand{\vct}[1]{\boldsymbol{#1}} 
\newcommand{\mat}[1]{\boldsymbol{#1}} 
\newcommand{\bbR}{\mathbb{R}}
\newcommand{\bbP}{\mathbb{P}}
\newcommand{\bbQ}{\mathbb{Q}}
\renewcommand{\bbR}{\mathbb{R}}
\newcommand{\bbI}{\mathbb{I}}
\newcommand{\calD}{\mathcal{D}}
\newcommand{\calI}{\mathcal{I}}
\newcommand{\calM}{\mathcal{M}}
\newcommand{\calN}{\mathcal{N}}
\newcommand{\calR}{\mathcal{R}}
\newcommand{\calX}{\mathcal{X}}
\newcommand{\calY}{\mathcal{Y}}
\newcommand{\adaped}{\texttt{AdaPeD}}
\newcommand{\dpadaped}{\texttt{DP-AdaPeD}}
\renewcommand{\bg}{\boldsymbol{g}}
\renewcommand{\bh}{\boldsymbol{h}}
\renewcommand{\bw}{\boldsymbol{w}}
\renewcommand{\bx}{\boldsymbol{x}}
\newcommand{\bnu}{\boldsymbol{\nu}}
\newcommand{\Bern}{\mathsf{Bern}}
\newcommand{\Binom}{\mathsf{Binom}}
\newcommand{\Beta}{\mathsf{Beta}}
\newcommand{\priv}{\mathsf{priv}}
\newcommand{\KD}{\mathsf{KD}}
\newcommand{\KL}{\mathsf{KL}}
\newcommand{\hmu}{\widehat{\vct{\mu}}}
\newcommand{\hsigma}{\widehat{\sigma}}
\newcommand{\htheta}{\widehat{\vct{\theta}}}
\newcommand{\sigmatheta}{\sigma_{\theta}}
\newcommand{\hata}{\widehat{a}}
\newcommand{\hatp}{\widehat{p}}
\newtheorem*{claim*}{Claim}
\newtheorem*{corollary*}{Corollary}
\title{A Generative Framework for Personalized Learning and Estimation: Theory, Algorithms, and Privacy}
\author{
	Kaan Ozkara\thanks{The first and second authors made equal contribution.}
	\and
	Antonious M. Girgis\footnotemark[1]
	\and
	Deepesh Data
	\and
	Suhas Diggavi
}
\date{University of California, Los Angeles, USA\\
	kaan@ucla.edu, amgirgis@ucla.edu, deepesh.data@gmail.com, suhas@ee.ucla.edu
}
\begin{document}

	\maketitle

	A distinguishing characteristic of federated learning is that the (local) client data could have statistical heterogeneity. This heterogeneity has motivated the design of \emph{personalized} learning, where individual (personalized) models are trained, through collaboration. There have been various personalization methods proposed in literature, with seemingly very different forms and methods ranging from use of a single global model for local regularization and model interpolation, to use of multiple global models for personalized clustering, etc. In this work, we begin with a generative framework that could potentially unify several different algorithms as well as suggest new algorithms.  We apply our generative framework to personalized estimation, and connect it to the classical empirical Bayes' methodology. We develop private personalized estimation under this framework. We then use our generative framework for learning, which unifies several known personalized FL algorithms and also suggests new ones; we propose and study a new algorithm \texttt{AdaPeD} based on a Knowledge Distillation, which numerically outperforms several known algorithms. We also develop privacy for personalized learning methods with guarantees for user-level privacy and composition. We numerically evaluate the performance as well as the privacy for both the estimation and learning problems, demonstrating the advantages of our proposed methods.

	\section{Introduction} \label{sec:intro}
	A fundamental question is how one can use collaboration to help
personalized learning/estimation for users who have limited data
that they want to keep private, and that are 
potentially generated according to  heterogeneous (unknown) distributions.
This is  motivated by federated learning (FL),
where users collaboratively train machine learning models
by leveraging local data residing on user (edge) devices but without actually sharing
the local data
\cite{mcmahan2017communicationefficient,kairouz2021advances}. Due to the (statistical)
heterogeneity in local data, it has been realized that a single global
learning model might perform poorly for individual clients, hence
motivating the need for \emph{personalized} learning/estimation with
individual (personalized) schemes, obtained potentially through
collaboration. There have been a plethora of different personalization
methods proposed in
literature \cite{fallah2020personalized,dinh2020personalized,deng2020adaptive,mansour2020approaches,acar2021debiasing,li2021ditto,ozkara2021quped,zhang2021personalized,hu2020personalized},
with seemingly very different forms and methods. Despite these
approaches, it is unclear what is the fundamental statistical
framework that underlies them. The goal of this paper is to develop a
framework that could unify them and also lead to new algorithms and
performance bounds.

This problem is connected to the classical empirical Bayes' method, pioneered by 
Stein \cite{Stein56,james1961estimation}. Stein studied jointly estimating
Gaussian individual parameters, generated by an unknown (parametrized)
Gaussian population distribution. They showed a surprising result that
one can enhance the estimate of individual parameters based on the
observations of a population of Gaussian random variables
with \emph{independently} generated parameters from an unknown
(parametrized) Gaussian population distribution. Effectively, this
methodology advocated
\emph{estimating} the unknown population distribution using the individual independent
samples, and then using it effectively as an empirical prior for
individual estimates.\footnote{This was shown to uniformly improve the
mean-squared error averaged over the population, compared to an
estimate using just the single local sample.} This was studied for
Bernoulli variables with heterogeneously generated individual
parameters by Lord \cite{lord1967} and the optimal error bounds for
maximum likelihood estimates for population distributions were recently
developed in \cite{vinayak2019maximum}.

Despite this strong philosophical connection, personalized estimation
requires to address several new challenges. A main difference is that
classical  empirical Bayes' estimation is not studied for the
distributed case, which brings in information (communication and
privacy) constraints that we here study.\footnote{The homogeneous case for
distributed estimation is well-studied (see \cite{Zhang:EECS-2016-47}
and references).}  Moreover, it is not developed for \emph{distributed
learning}, where clients want to build \emph{local} predictive models
with limited local samples; we develop this framework and algorithms
in Section~\ref{sec:learning}.  However, Stein's idea serves as an
inspiration for our generative framework for personalized learning.

We consider a (statistical) generative model, where there is
  an \emph{unknown} population distribution $\bbP$ from 
  which local parameters $\{\vct{\theta}_i\}$ are generated, which in
  turn generate the local data through the distribution
  $\mathbb{Q}(\vct{\theta}_i)$. We mostly focus on
  a \emph{parametrized} population distribution
  $\mathbb{P}(\Gamma)$ (for unknown parameters $\Gamma$) for simplicity, though this can also be
  applied to non-parametric cases.  For the estimation problem, where
  users want to estimate their local parameter $\vct{\theta}_i$, we
  study the distributed case with information constraints for several
  examples of $\mathbb{P}$ and $\mathbb{Q}$ (see
  Theorems~\ref{thm:gaussian_est-q} and \ref{thm:bern_private_estimate} and
  results in the appendices). We estimate the (parametrized)
  population distribution under these information constraints and use
  this as an empirical prior for local estimation.  The effective amplification
  of local samples through collaboration, in
  Section \ref{sec:estimation}, gives insight about when collaboration
  is most useful.

In the learning framework studied in Section \ref{sec:learning},
the \emph{unknown} population distribution $\mathbb{P}$ generates
local parameters $\{\vct{\theta}_i\}$ which in turn parametrize
(unknown) local distributions $p_{\vct{\theta}_i}$, which is the local
generative model. The local samples are generated from this
distribution $p_{\vct{\theta}_i}$, and the local learner builds a
model using this data. However, inspired by the estimation approach,
one can \emph{estimate} the population parameters using these
(distributed) local data, and use such an \emph{empirical} estimate as
a prior for the local model. This is done through iterative
optimization, alternating between building a population 
model and using it to refine the local model. We develop this
framework in Section \ref{sec:learning}, and show that under different
parametric models of the population distribution, one connects to
several well-known personalized FL algorithms. For example, a
parametrized Gaussian population model, gives a
$\ell_2$-regularization between global and local models advocated
in \cite{dinh2020personalized,ozkara2021quped,hanzely2020federated,hanzely2020lower,li2021ditto};
one can also use other population models (\emph{e.g.,} to get
$\ell_1$-regularization). If the population model is a mixture, it
connects to the algorithms developed
in \cite{marfoq2021federated,zhang2021personalized,mansour2020approaches,ghosh2020efficient,smith2017federated}. However,
not every (parametrized) population model can be written as a mixture
distribution, and therefore our framework gives flexibility. For a
population model which relates to a Knowledge Distillation (KD)
regularization we connect to \cite{ozkara2021quped}. 
As one can observe, there are many other methods one can study using such a
framework.  As an illustration, when one parametrizes this population
model to include uncertainty about scaling of such a distribution, we
obtain a new algorithm, which we term \texttt{AdaPeD} (in Section~\ref{sec:adaped}), and also develop its
privatized version, which we term \dpadaped (in Section~\ref{sec:dp-adaped}), with guarantees on user-level
privacy and composition. We also develop other algorithms and results using this generative learning framework in the Appendices.

\paragraph{Contributions.} Besides the contribution of developing a statistical generative model described above (and in Sections \ref{sec:estimation} and \ref{sec:learning}), we also show the following:
\begin{itemize}[leftmargin=*,topsep=0pt,nosep]
\item We develop results for personalized estimation under information constraints for heterogeneous data under our generative model. This also allows us to identify regimes where collaboration helps with performance.
\item We extend the empirical Bayes' philosophy to personalized learning and connect it to several recently studied personalized FL algorithms. The framework also enables us to develop new algorithms.
\item We develop a new personalized learning algorithm, \texttt{AdaPeD}, which uses a KD regularization and adapts to relevance of local and population data iteratively. We also develop \dpadaped, a privatized version of \adaped, and give theoretical guarantees under user-level privacy and composition.
\item Finally, we give numerical results for both synthetic and real data for both personalized estimation and learning, and show that \adaped\ performs better than several state-of-the-art personalized FL algorithms.

\end{itemize}

\paragraph{Related Work.} We believe that ours is the first general framework that helps with the design of personalized algorithms for learning and estimation.  Our work can be seen in the intersection of personalized learning, estimation, and privacy.

\paragraph{Personalized FL:}
As mentioned earlier, there has been a significant interest in personalized FL over the past few years. Recent work adopted different approaches for learning personalized models: 
{\sf(i)} \emph{Meta-learning:} first learn a global model and then personalize it locally by updating it using clients' local data \cite{fallah2020personalized,acar2021debiasing,kohdak2019adaptive}; these methods are based on Model Agnostic Meta Learning (MAML) \cite{jiang2019improving}, and could be disadvantaged by not jointly building local and global models as done in several other works. 
{\sf (ii)} \emph{Regularization:} Combine global and local models throughout the training \cite{deng2020adaptive,mansour2020approaches,hanzely2020federated}. In particular \cite{hanzely2020federated,hanzely2020lower,dinh2020personalized} augment the traditional FL  objective via a penalty term that enables collaboration between global and personalized models; such a regularizer fits into our generative framework through different choices of the parametrized population distribution, as discussed in Section \ref{sec:learning}.
{\sf (iii)} \emph{Clustered FL:} considers multiple global models to collaborate among only those clients that share similar personalized models \cite{zhang2021personalized,mansour2020approaches,ghosh2020efficient,smith2017federated}; a generalization of this method used soft clustering and mixing these multiple global models in \cite{marfoq2021federated}; all these methods fit into our generative framework using mixture population distributions as discussed in Section \ref{sec:learning}.
{\sf (iv)} \emph{Knowledge distillation} of global model to personalized local models \cite{lin2020ensemble} and jointly training global and local models using KD \cite{lin2020ensemble,li2019fedmd,shen2020federated,ozkara2021quped}. The distillation methods could also be explained using our framework as discussed in Section \ref{sec:learning} and appendices.
{\sf (v)} \emph{Multi-task Learning (MTL):} This can enable specific relationships between client models \cite{dinh2020personalized,hanzely2020federated,smith2017federated,vanhaesebrouck2017decentralized,zantedeschi2020fully}. 
{\sf (vi)} \emph{Common representations:} There have been several recent works on assuming that users have a shared low-dimensional  subspace and each individual model is based on this (see \cite{prateek2021differentially,du2021fewshot,RaghuRBV20,tian2020rethinking} and references therein). 
As explained in  Section \ref{sec:learning}, many of these approaches can also be cast in our generative framework.

\paragraph{Privacy for Personalized Learning.} There has been a lot of work in privacy for FL when the goal is to learn a \emph{single} global model (see \cite{aistats_GirgisDDKS21} and references therein); though there are fewer papers that address user-level privacy \cite{liu-theertha-user-level-dp20,levy_user-level-dp21,ghazi-user-level-dp-correlated21}. There has been more recent work on applying these ideas to learn personalized models \cite{prateek2021differentially,geyer2017differentially,hu2020personalized,Li2020Differentially}. These are for specific algorithms/models, \emph{e.g.,} \cite{prateek2021differentially} focuses on the common representation model described earlier or on item-level privacy \cite{prateek2021differentially,hu2020personalized,Li2020Differentially}. 

\paragraph{Paper Organization.}
In Sections~\ref{sec:estimation} and~\ref{sec:learning}, we set up our generative framework for the personalized estimation and learning, respectively, and show how our framework explains the underlying statistical model behind several personalized FL algorithms from literature. For estimation, we study the Gaussian and the Bernoulli models with and without information constraints, and for learning, we present our new personalized learning algorithm \adaped\ and also \dpadaped, along with its privacy guarantees. 
Section~\ref{sec:experiments} provides numerical results. In Sections~\ref{sec:proof-est},~\ref{sec:proof-learning} we provide the proofs for our analytical results. Omitted details, such as background on DP, are provided in appendices.

	
	\section{Personalized Estimation}\label{sec:estimation}

We consider a client-server architecture, where there are $m$ clients. Let $\bbP(\Gamma)$ denote a global population distribution that is parameterized by an unknown $\Gamma$ and let $\vct{\theta}_1,\ldots,\vct{\theta}_m$ are sampled i.i.d.\ from $\bbP(\Gamma)$ and are unknown to the clients. Client $i$ is given a dataset $X_i:=(X_{i1},\ldots,X_{in})$, where $X_{ij}, j\in[n]$ are sampled i.i.d.\ from some distribution $\bbQ(\vct{\theta}_i)$, parameterized by $\vct{\theta}_i\in\bbR^d$. Note that heterogeneity in clients' datasets is induced through the variance in $\bbP(\Gamma)$, and if the variance of $\bbP(\Gamma)$ is zero, then all clients observe i.i.d.\ datasets sampled from the {\em same} underlying distribution. 

The goal at client $i$ for all $i\in[m]$ is to estimate $\vct{\theta}_i$ through the help of the server. We focus on one-round communication schemes, where client $j$ applies a (potentially randomized) mechanism $q$ on its dataset $X_j$ and sends $q_j:=q(X_j)$ to the server, who aggregates the received messages and broadcasts that to all clients. The aggregated message at the server is denoted by $\mathsf{Agg}(q_1,\ldots,q_m)$. Based on $(X_i,\mathsf{Agg}(q_1,\ldots,q_m))$, client $i$ outputs an estimate $\htheta_i$ of $\vct{\theta}$.

We measure the performance of our estimator through the Bayesian risk for mean squared error (MSE).
 More specifically, given a true prior distribution $\bbP$ and the associated true prior density $\pi$, true local parameter $\vct{\theta}_i\sim \bbP$, and an estimator $\htheta_i$; we are interested in bounding the MSE:
\begin{align}\label{eqn:mse-defn}
	\mathbb{E}_{\vct{\theta}_i\sim \bbP} \mathbb{E}_{\htheta_i,q,X_1,...,X_m} \|\htheta_i - \vct{\theta}_i\|^2 = \int \mathbb{E}_{\htheta_i,q,X_1,...,X_m} \|\htheta_i - \vct{\theta}_i\|^2 \pi(\vct{\theta}_i) d\vct{\theta}_i,
\end{align}
where $\htheta=\htheta(X_i,\mathsf{Agg}(q_1,\ldots,q_m))$.


The above-described generative framework can model many different scenarios, and we will study in detail three settings: Gaussian model, Bernoulli model, and Mixture model, out of which, first two we will in Sections~\ref{sec:est_gaussian},~\ref{sec:est_bernoulli} respectively, and the third will be presented in Section~\ref{sec:est_mixture},~\ref{app:est-mixture}.

	\subsection{Gaussian Model}\label{sec:est_gaussian}

In the Gaussian setting, 
$\bbP(\Gamma)=\calN(\vct{\mu},\sigmatheta^2\bbI_d)$ and $\bbQ(\vct{\theta}_i)=\calN(\vct{\theta}_i,\sigma_x^2\bbI_d)$ for all $i\in[m]$, which implies that $\vct{\theta}_1,\ldots,\vct{\theta}_m\sim\calN(\vct{\mu},\sigmatheta^2\bbI_d)$ i.i.d.\ and $X_{i1},\ldots,X_{in}\sim\calN(\vct{\theta}_i,\sigma_x^2\bbI_d)$ i.i.d.\ for $i\in[m]$. Here, $\sigmatheta\geq0,\sigma_x>0$ are known, and $\vct{\mu},\vct{\theta}_1,\ldots,\vct{\theta}_m$ are unknown.
For the case of a single local sample this is identical to the classical  James-Stein estimator~\cite{james1961estimation}; Theorem \ref{thm:gauss_estimate} does a simple extension for multiple local samples and is actually a stepping stone for the information constrained estimation result of Theorem \ref{thm:gaussian_est-q}.
Omitted proofs and details from this subsection are provided in Section~\ref{app:est-gaussian}. 

\paragraph{Our proposed estimator.}
Since there is no distribution on $\vct{\mu}$, and given $\vct{\mu}$, we know the distribution of $\vct{\theta}_i$'s, and subsequently, of $X_{ij}$'s. So, we consider the maximum likelihood estimator:
\begin{align}
	\htheta_1,\ldots,\htheta_m,\hmu := \argmax_{\vct{\theta}_1,\ldots,\vct{\theta}_m,\vct{\mu}} p_{\{\vct{\theta}_i,X_i\}|\vct{\mu}}\left(\vct{\theta}_1,\ldots,\vct{\theta}_m,X_1,\ldots,X_m|\vct{\mu}\right) \label{eqn:post_estimator}
\end{align}
\begin{theorem}\label{thm:gauss_estimate}
Solving \eqref{eqn:post_estimator} yields the following closed form expressions for $\hmu$ and $\htheta_1,\ldots,\htheta_m$:
\begin{align}\label{eqn:estimator}
\hmu = \frac{1}{m}\sum_{i=1}^m \overline{X}_i \qquad \text{ and } \qquad 		
\htheta_{i}=a\overline{X}_i+(1-a)\hmu, \text{ for } i\in[m], \quad \text{ where } a=\frac{\sigmatheta^{2}}{\sigmatheta^{2}+\nicefrac{\sigma_{x}^{2}}{n}}.
\end{align}
The above estimator achieves the MSE: 
$\mathbb{E}_{\vct{\theta}_i,X_1,\hdots,X_m}\|\htheta_i-\vct{\theta}_i\|^2 \leq \frac{d\sigma_x^2}{n}\Big(\frac{1-a}{m}+a\Big).$
\end{theorem}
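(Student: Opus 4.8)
The plan is to treat \eqref{eqn:post_estimator} as a strictly convex quadratic minimization in the pooled variable $(\vct{\theta}_1,\dots,\vct{\theta}_m,\vct{\mu})$, read off the optimizer from the first-order conditions, and then propagate the Gaussian noise through the resulting linear estimator to compute the MSE.

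First I would write the negative log-likelihood. Since $\vct{\theta}_i\mid\vct{\mu}\sim\calN(\vct{\mu},\sigmatheta^2\bbI_d)$ and $X_{ij}\mid\vct{\theta}_i\sim\calN(\vct{\theta}_i,\sigma_x^2\bbI_d)$, up to an additive constant free of the optimization variables,
\[
-\log p=\sum_{i=1}^m\frac{\|\vct{\theta}_i-\vct{\mu}\|^2}{2\sigmatheta^2}+\sum_{i=1}^m\sum_{j=1}^n\frac{\|X_{ij}-\vct{\theta}_i\|^2}{2\sigma_x^2}.
\]
For $\sigmatheta>0$ and $n\ge 1$ this is a positive-definite quadratic in $(\vct{\theta}_1,\dots,\vct{\theta}_m,\vct{\mu})$ (the $X$-terms pin down each $\vct{\theta}_i$, and then the prior terms pin down $\vct{\mu}$), so its unique stationary point is the global minimizer; the degenerate case $\sigmatheta=0$ forces $\vct{\theta}_i=\vct{\mu}$ and is recovered as the limit $a\to 0$. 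Setting $\nabla_{\vct{\theta}_i}(-\log p)=0$ for fixed $\vct{\mu}$ gives $\sigma_x^2(\vct{\theta}_i-\vct{\mu})+n\sigmatheta^2(\vct{\theta}_i-\overline{X}_i)=0$, i.e. $\htheta_i=a\overline{X}_i+(1-a)\vct{\mu}$ with $a=\nicefrac{\sigmatheta^2}{(\sigmatheta^2+\nicefrac{\sigma_x^2}{n})}$. Substituting this back, using $\vct{\theta}_i-\vct{\mu}=a(\overline{X}_i-\vct{\mu})$, $X_{ij}-\htheta_i=(X_{ij}-\overline{X}_i)+(1-a)(\overline{X}_i-\vct{\mu})$, and $\sum_j(X_{ij}-\overline{X}_i)=0$, the objective collapses to $\big(\tfrac{a^2}{2\sigmatheta^2}+\tfrac{n(1-a)^2}{2\sigma_x^2}\big)\sum_i\|\overline{X}_i-\vct{\mu}\|^2$ plus a $\vct{\mu}$-free remainder, which is minimized at $\hmu=\tfrac1m\sum_i\overline{X}_i$. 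This establishes \eqref{eqn:estimator}.

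For the MSE I would write $\overline{X}_i=\vct{\theta}_i+\overline{Z}_i$ with $\overline{Z}_i\sim\calN(0,\tfrac{\sigma_x^2}{n}\bbI_d)$ and $\vct{\theta}_i=\vct{\mu}+W_i$ with $W_i\sim\calN(0,\sigmatheta^2\bbI_d)$, all mutually independent, and expand
\[
\htheta_i-\vct{\theta}_i=(1-a)(\hmu-\vct{\theta}_i)+a\overline{Z}_i=\Big(\tfrac{1-a}{m}+a\Big)\overline{Z}_i+\tfrac{1-a}{m}\sum_{k\neq i}\overline{Z}_k-(1-a)\tfrac{m-1}{m}W_i+\tfrac{1-a}{m}\sum_{k\neq i}W_k.
\]
Since the summands are independent and isotropic, $\mathbb{E}\|\htheta_i-\vct{\theta}_i\|^2$ is $d$ times the sum of squared coefficients against the respective variances: the $\overline{Z}$-terms contribute $\tfrac{d\sigma_x^2}{n}\big(\tfrac{1-a^2}{m}+a^2\big)$ and the $W$-terms contribute $d\sigmatheta^2(1-a)^2\tfrac{m-1}{m}$. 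Finally I would use the identity $a\cdot\tfrac{\sigma_x^2}{n}=(1-a)\sigmatheta^2$ (immediate from the definition of $a$) to rewrite $\sigmatheta^2(1-a)^2=\tfrac{\sigma_x^2}{n}a(1-a)$ and verify $\tfrac{1-a^2}{m}+a^2+a(1-a)\tfrac{m-1}{m}=\tfrac{1-a}{m}+a$, so the two contributions combine to exactly $\tfrac{d\sigma_x^2}{n}\big(\tfrac{1-a}{m}+a\big)$; hence the stated inequality in fact holds with equality.

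The only genuine obstacle is the bookkeeping in this last step: carefully decomposing $\hmu-\vct{\theta}_i$ into its $m$ independent $\overline{Z}_k$ components and $m$ independent $W_k$ components (tracking the $\tfrac1m$ versus $1-\tfrac1m$ coefficient on the $i$-th summand) and then carrying out the algebraic simplification above. Everything else is routine strictly-convex quadratic optimization; the only subtleties are to note positive-definiteness so that the stationary point is the global optimum, and to treat $\sigmatheta=0$ as a boundary case.
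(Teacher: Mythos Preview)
Your proof is correct and follows essentially the same approach as the paper: both identify the objective as a jointly convex quadratic, solve the first-order conditions to obtain \eqref{eqn:estimator}, and then compute the MSE by expanding the error as a linear combination of independent Gaussian noises and summing variances, using the identity $a\cdot\tfrac{\sigma_x^2}{n}=(1-a)\sigmatheta^2$ to collapse the result. The only cosmetic differences are that the paper solves the stationary equations simultaneously (rather than profiling out $\vct{\theta}_i$ first) and organizes the MSE computation as a conditional bias--variance split $\mathbb{E}\big[\mathrm{Var}(\htheta_i\mid\theta)\big]+\mathbb{E}\big[(\mathbb{E}[\htheta_i\mid\theta]-\theta_i)^2\big]$ rather than your direct decomposition into the $\overline{Z}_k$ and $W_k$ terms; both routes land on the same intermediate expression and your observation that the bound holds with equality is correct.
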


The estimators in \eqref{eqn:estimator} suggest the following scheme: Each client $i\in[m]$ sends the average $\overline{X}_i=\frac{1}{n}\sum_{j=1}^nX_{ij}$ of its local dataset and sends that to the server. Upon receiving all $m$ averages, the server further takes the average of the messages to compute $\hmu = \frac{1}{m}\sum_{i=1}^m \overline{X}_i$ and sends that back to all clients. So, the mechanism $q$ and the aggregation function $\mathsf{Agg}$ in \eqref{eqn:mse-defn} are just the average functions.

\begin{remark}[Personalized estimate vs.\ local estimate]
When $\sigmatheta\to0$, then $a\to0$, which implies that $\htheta_i\to\hmu$ and MSE $\to d\sigma_x^2/mn$. Otherwise, when $\sigmatheta^2$ is large in comparison to $\sigma_x^2/n$ or $n\to\infty$, then $a\to1$, which implies that $\htheta_i\to\overline{X}_i$ and MSE $\to d\sigma_x^2/n$. 
These conform to the facts that {\sf (i)} when there is no heterogeneity, then the global average is the best estimator, and {\sf (ii)} when heterogeneity is not small, but we have a lot of local samples, then the local average is the best estimator.
Observe that the multiplicative gap between the MSE of the proposed personalized estimator and the MSE of the local estimator (based on local data only, which gives an MSE of $d\sigma_x^2/n$) is given by $(\frac{1-a}{m}+a)\leq 1$ that proves the superiority of the personalized model over the local model, which is equal to $1/m$ when $\sigmatheta=0$ and equal to $0.01$ when $m=10^{4},n=100$ and $\sigma_x^2=10,\sigmatheta^{2}=10^{-3}$, for example.  
\end{remark}

\begin{remark}[Optimality of our personalized estimator]
In Section~\ref{app:est-gaussian}, we show the minimax lower bound: $\inf_{\widehat{\vct{\theta}}} \sup_{\vct{\theta} \in \Theta} \mathbb{E}_{X\sim\calN(\vct{\theta},\sigma_x^2)}\|\widehat{\vct{\theta}}(X)-\vct{\theta}\|^2 \geq \frac{d\sigma_x^2}{n}\big(\frac{1-a}{m}+a\big)$, which exactly matches the upper bound on the MSE in Theorem~\ref{thm:gauss_estimate}, thus establishes the optimality our personalized estimator in \eqref{eqn:estimator}. 
\end{remark}

\paragraph{Privacy and communication constraints.}
Observe that the scheme presented above does not protect privacy of clients' data and messages from the clients to the server can be made communication-efficient. These could be achieved by employing specific mechanisms $q$ at clients: For privacy, we can take a differentially-private $q$, and for communication-efficiency, we can take $q$ to be a quantizer. 
Inspired by the scheme presented above, here we consider $q$ to be a function $q:\bbR^d\to\calY$, that takes the average of $n$ data points as its input, and the aggregator function $\mathsf{Agg}$ to be the average function. Define $\hmu_q:=\frac{1}{m}\sum_{i=1}^mq(\overline{X}_i)$ and consider the following personalized estimator for the $i$-th client:
\begin{equation}\label{eqn:person_const}
	\htheta_i=a\overline{X}_i + (1-a)\hmu_q, \qquad \text{ for some } a\in[0,1].
\end{equation}

%
\begin{theorem}\label{thm:gaussian_est-q}
Suppose for all $\bx\in\bbR^d$, $q$ satisfies $\bbE[q(\bx)]=\bx$ and $\bbE\|q(\bx)-\bx\|^2\leq d\sigma_q^2$ for some finite $\sigma_q$.
Then the personalized estimator in \eqref{eqn:person_const} has MSE:
\begin{align}\label{eqn:gaussian_generic-q}
\mathbb{E}_{\vct{\theta}_i,q,X_1,\ldots,X_m}\|\htheta_i-\vct{\theta}_i\|^2 \leq \frac{d\sigma_x^2}{n}\Big(\frac{1-a}{m}+a\Big) \qquad \text{ where }\qquad a=\frac{\sigma_{\theta}^{2}+\nicefrac{\sigma_q^2}{m-1}}{\sigma_{\theta}^{2}+\nicefrac{\sigma_q^2}{m-1}+\nicefrac{\sigma_x^{2}}{n}}.
\end{align}
Furthermore, assuming $\vct{\mu}\in[-r,r]$ for some constant $r$ (but $\vct{\mu}$ is unknown), we have:
\begin{enumerate}
\item {\it Communication efficiency:} For any $k\in\mathbb{N}$, there is a $q$ whose output can be represented using $k$-bits (i.e., $q$ is a quantizer) that achieves the MSE in \eqref{eqn:gaussian_generic-q} with probability at least $1-\nicefrac{2}{mn}$ and with $\sigma_q = \frac{b}{(2^k-1)}$, 
where $b=r+\sigmatheta\sqrt{\log(m^2n)}+\frac{\sigma_x}{\sqrt{n}}\sqrt{\log(m^2n)}$. 
\item {\it Privacy:} For any $\epsilon_0\in(0,1),\delta>0$, there is a $q$ that is user-level $(\epsilon_0,\delta)$-locally differentially private, that achieves the MSE in \eqref{eqn:gaussian_generic-q} with probability at least $1-\nicefrac{2}{mn}$ and with $\sigma_q=\frac{b}{\epsilon_0}\sqrt{8\log(2/\delta)}$, 
where $b=r+\sigmatheta\sqrt{\log(m^2n)}+\frac{\sigma_x}{\sqrt{n}}\sqrt{\log(m^2n)}$. 
\end{enumerate}
\end{theorem}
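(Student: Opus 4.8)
The plan is to first derive the generic MSE bound \eqref{eqn:gaussian_generic-q} from the two stated properties of $q$ (unbiasedness and bounded variance), and then obtain parts~(1) and~(2) by exhibiting concrete mechanisms $q$ that enjoy those two properties on a high-probability event.

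\emph{Step 1: a bias--variance decomposition of the estimator.} Write $\overline{X}_i=\vct{\theta}_i+\overline{Z}_i$ with $\overline{Z}_i\sim\calN(\bzero,\tfrac{\sigma_x^2}{n}\bbI_d)$, $\vct{\theta}_i=\vct{\mu}+\vct{\nu}_i$ with $\vct{\nu}_i\sim\calN(\bzero,\sigmatheta^2\bbI_d)$, and $q(\overline{X}_i)=\overline{X}_i+W_i$ where $\bbE[W_i\mid\overline{X}_i]=\bzero$, $\bbE\|W_i\|^2\le d\sigma_q^2$, and the $W_j$ use fresh randomness so they are conditionally independent across clients. Substituting into $\htheta_i=a\overline{X}_i+(1-a)\hmu_q$ and cancelling the $\vct{\mu}$ terms, I expect
\[
\htheta_i-\vct{\theta}_i \;=\; \Big(a+\tfrac{1-a}{m}\Big)\overline{Z}_i \;-\;(1-a)\tfrac{m-1}{m}\,\vct{\nu}_i \;+\;\tfrac{1-a}{m}\!\sum_{j\ne i}\!\vct{\nu}_j\;+\;\tfrac{1-a}{m}\!\sum_{j\ne i}\!\overline{Z}_j\;+\;\tfrac{1-a}{m}\!\sum_{j=1}^m\! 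W_j .
\]
All the terms on the right are pairwise uncorrelated: the $\{\vct{\nu}_j\}$ and $\{\overline{Z}_j\}$ are independent, zero-mean, and mutually independent, while each $W_j$ contributes no cross term because $\bbE[W_j\mid\overline{X}_j]=\bzero$ (even though $W_i$ is correlated with $\vct{\nu}_i$ and $\overline{Z}_i$ through $\overline{X}_i$, the relevant conditional expectation still vanishes).

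\emph{Step 2: collapsing to the closed form.} Expanding $\bbE\|\htheta_i-\vct{\theta}_i\|^2$ and inserting $\bbE\|\overline{Z}_j\|^2=\tfrac{d\sigma_x^2}{n}$, $\bbE\|\vct{\nu}_j\|^2=d\sigmatheta^2$, $\bbE\|W_j\|^2\le d\sigma_q^2$, and using $\tfrac{(m-1)^2+(m-1)}{m^2}=\tfrac{m-1}{m}$ together with $\tfrac{m-1}{m}\sigmatheta^2+\tfrac{1}{m}\sigma_q^2=\tfrac{m-1}{m}\widetilde{\sigma}^2$ for $\widetilde{\sigma}^2:=\sigmatheta^2+\tfrac{\sigma_q^2}{m-1}$, I get
\[
\bbE\|\htheta_i-\vct{\theta}_i\|^2 \;\le\; \Big(a+\tfrac{1-a}{m}\Big)^2\tfrac{d\sigma_x^2}{n} \;+\; (1-a)^2\,\tfrac{m-1}{m}\Big(d\widetilde{\sigma}^2+\tfrac{1}{m}\tfrac{d\sigma_x^2}{n}\Big).
\]
Now $a=\widetilde{\sigma}^2/(\widetilde{\sigma}^2+\sigma_x^2/n)$ is exactly the choice making $(1-a)\widetilde{\sigma}^2=a\,\sigma_x^2/n$; substituting this and then using the (identity-in-$a$) relation $\big(a+\tfrac{1-a}{m}\big)+\tfrac{m-1}{m}(1-a)=1$ collapses the right-hand side to $\tfrac{d\sigma_x^2}{n}\big(a+\tfrac{1-a}{m}\big)$, which is \eqref{eqn:gaussian_generic-q}.

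\emph{Step 3: instantiating $q$ on a high-probability box.} Let $\mathcal{E}$ be the event that $\overline{X}_i\in[-b,b]^d$ for every $i\in[m]$. Each coordinate of $\overline{X}_i-\vct{\mu}$ is a sum of a $\calN(0,\sigmatheta^2)$ and a $\calN(0,\sigma_x^2/n)$ variable, so a Gaussian tail bound on $\vct{\nu}_i$ and on $\overline{Z}_i$ separately, the bound $\|\vct{\mu}\|_\infty\le r$, and a union bound over clients (and coordinates) give $\Pr[\mathcal{E}]\ge1-\tfrac{2}{mn}$ with $b=r+\sigmatheta\sqrt{\log(m^2n)}+\tfrac{\sigma_x}{\sqrt{n}}\sqrt{\log(m^2n)}$. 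For part~(1), take $q$ to be: clip each coordinate to $[-b,b]$, then apply coordinate-wise randomized (unbiased) quantization onto $2^k$ equispaced levels; this uses $k$ bits per coordinate, is unbiased on $[-b,b]^d$ with per-coordinate variance at most $\big(\tfrac{b}{2^k-1}\big)^2$, hence $\sigma_q=\tfrac{b}{2^k-1}$, and on $\mathcal{E}$ the clip is inactive so $q$ exactly satisfies the hypotheses of Step~1; applying Steps~1--2 conditionally on $\mathcal{E}$ gives the claimed MSE. For part~(2), take $q$ to be: clip each coordinate to $[-b,b]$, then add Gaussian noise whose standard deviation is calibrated to the $\ell_2$-sensitivity of the clipped average so that $q$ is user-level $(\epsilon_0,\delta)$-locally differentially private; the standard Gaussian-mechanism calibration (in the regime $\epsilon_0\in(0,1)$) yields per-coordinate noise variance $\sigma_q^2$ with $\sigma_q=\tfrac{b}{\epsilon_0}\sqrt{8\log(2/\delta)}$, $q$ is unbiased with $\bbE\|q(\bx)-\bx\|^2\le d\sigma_q^2$, and again Steps~1--2 conditioned on $\mathcal{E}$ finish the proof.

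\emph{Main obstacle.} The delicate part is Steps~1--2: because $\htheta_i$ involves client $i$ asymmetrically --- through $a\overline{X}_i$ and also through the $\tfrac1m q(\overline{X}_i)$ term inside $\hmu_q$ --- one must correctly identify the coefficient $a+\tfrac{1-a}{m}$ of $\overline{Z}_i$ (versus $\tfrac{1-a}{m}$ for $j\ne i$) and $-(1-a)\tfrac{m-1}{m}$ of $\vct{\nu}_i$ (versus $\tfrac{1-a}{m}$ for $j\ne i$), and then carefully argue that every cross term vanishes, in particular those involving the correlated noise $W_i$. The exact collapse to the clean formula hinges on the choice of $a$ making $(1-a)\widetilde{\sigma}^2=a\,\sigma_x^2/n$. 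By contrast, once \eqref{eqn:gaussian_generic-q} is established, parts~(1) and~(2) are routine: one only needs the ``truncate to a high-probability box, then quantize / add Gaussian noise'' recipe, taking care that the unbiasedness and variance hypotheses of \eqref{eqn:gaussian_generic-q} hold on the conditioning event $\mathcal{E}$.
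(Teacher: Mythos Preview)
Your proposal is correct and follows essentially the same route as the paper. The only cosmetic difference is in Step~1--2: the paper first conditions on $\vct{\theta}=(\vct{\theta}_1,\dots,\vct{\theta}_m)$ to split the MSE into a ``variance'' term $\bbE\big[\|\htheta_i-\bbE[\htheta_i\mid\vct{\theta}]\|^2\big]$ and a ``bias'' term $\bbE\big[\|\bbE[\htheta_i\mid\vct{\theta}]-\vct{\theta}_i\|^2\big]$ (reducing to $d=1$ first), whereas you write the error directly as an orthogonal sum of the $\overline{Z}_j$, $\vct{\nu}_j$, and $W_j$ contributions; the resulting algebra and the collapse via $(1-a)\widetilde{\sigma}^2=a\,\sigma_x^2/n$ are identical. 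For parts~(1) and~(2) your ``clip to a high-probability box, then stochastically quantize / add Gaussian noise'' construction is exactly the paper's, with the same concentration argument yielding the same $b$.
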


	\subsection{Bernoulli Model}\label{sec:est_bernoulli}
For the Bernoulli model, $\bbP$ is supported on $[0,1]$, and $p_1,\ldots,p_m$ are sampled i.i.d.\ from $\bbP$, and client $i$ is given $n$ i.i.d.\ samples $X_{i1},\ldots,X_{in}\sim\Bern(p_i)$.
This setting has been studied by \cite{tian2017learning, vinayak2019maximum} for estimating $\bbP$, whereas, our goal is to estimate individual parameter $p_i$ at client $i$ using the information from other clients. In order to derive a closed form MSE result, we assume that $\bbP$ is the Beta distribution.
\footnote{Beta distribution has a  density $\Beta(\alpha,\beta)=\frac{1}{B(\alpha,\beta)}x^{\alpha-1}(1-x)^{\beta-1}$ is defined for $\alpha,\beta>0$ and $x\in[0,1]$, where $B(\alpha,\beta)$ is a normalizing constant. Its mean is $\frac{\alpha}{\alpha+\beta}$ and the variance is $\frac{\alpha\beta}{(\alpha+\beta)^2(\alpha+\beta+1)}$.} Here, $\Gamma=(\alpha,\beta),p_1,\ldots,p_m$ are unknown, and client $i$'s goal is to estimate $p_i$ such that the Bayesian risk $\bbE_{p_i\sim\pi}\bbE_{\hatp_i,X_1,\ldots,X_m}(\hatp_i - p_i)^2$ is minimized, where $\pi$ denotes the density of the Beta distribution. Omitted proofs and details from this subsection are provided in Section~\ref{app:est-bernoulli}.

\paragraph{When $\alpha,\beta$ are known.} Analogous to the Gaussian case, we can show that if $\alpha,\beta$ are known, then the posterior mean estimator has a closed form expression: $\hatp_i=a\overline{X}_i + (1-a)\frac{\alpha}{\alpha+\beta}$ (where $a=\nicefrac{n}{\alpha+\beta+n}$) and achieves the MSE: $\bbE_{p_i\sim\pi}\bbE_{\hatp_i,X_1,\ldots,X_m}(\hatp_i - p_i)^2\leq\frac{\alpha\beta}{n(\alpha+\beta)(\alpha+\beta+1)}\frac{n}{\alpha+\beta+n}$. 
Note that $\overline{X}_i$ is the estimator based only on the local data and $\nicefrac{\alpha}{(\alpha+\beta)}$ is the true global mean. Observe that when $n\to\infty$, then $a\to1$, which implies that $\hatp_i\to\overline{X}_i$. Otherwise, when $\alpha+\beta$ is large (i.e., the variance of the beta distribution is small), then $a\to0$, which implies that $\hatp_i\to\nicefrac{\alpha}{(\alpha+\beta)}$. Both these conclusions conform to the conventional wisdom as mentioned in the Gaussian case.
It can be shown that the local estimate $\overline{X}_i$ achieves the Bayesian risk of $\bbE_{p_i\sim\pi}\bbE_{X_i}[(\overline{X}_i-p_i)^2]=\nicefrac{\bbE_{p_i\sim\pi}(p_i(1-p_i))}{n} = \nicefrac{\alpha \beta}{n(\alpha+\beta)(\alpha+\beta+1)}$, which implies that the personalized estimation with perfect prior always outperforms the local estimate with a multiplicative gain $a=\nicefrac{n}{(n+\alpha+\beta)}\leq1$.

\paragraph{When $\alpha,\beta$ are unknown.} 
In this case, inspired by the above discussion, a natural approach would be to estimate the global mean $\mu=\nicefrac{\alpha}{(\alpha+\beta)}$ and the weight $a=\nicefrac{n}{(\alpha+\beta+n)}$, and use that in the above estimator. 
Note that for $a$, we need to estimate $\alpha+\beta$, which is equal to $\nicefrac{\mu(1-\mu)}{\sigma^2}-1$, where $\sigma^2=\nicefrac{\alpha\beta}{(\alpha+\beta)^2(\alpha+\beta+1)}$ is the variance of the beta distribution. Therefore, it is enough to estimate $\mu$ and $\sigma^2$ for the personalized estimators $\{\hatp_i\}$.
In order to make our calculations of MSE simpler, instead of making one estimate of $\mu,\sigma^2$ for all clients, we let each client make its own estimate of $\mu,\sigma^2$ (without using their own data) as: $\widehat{\mu}_i=\frac{1}{m-1}\sum_{l\neq i}\overline{X}_l$ and $\hsigma_i^2=\frac{1}{m-2}\sum_{l\neq i}(\overline{X}_l-\widehat{\mu}_l)^2$,\footnote{Upon receiving $\{\overline{X}_i\}$ from all clients, the server can compute $\{\widehat{\mu}_i,\hsigma_i^2\}$ and sends $(\widehat{\mu}_i,\hsigma_i^2)$ to the $i$-th client.} and then define the local weight as $\hata_i=\frac{n}{\nicefrac{\widehat{\mu}_i(1-\widehat{\mu}_i)}{\hsigma_i^2}-1+n}$. Using these, client $i\in[m]$ uses the following personalized estimator:
\begin{equation}\label{eqn:bern_estimate_unknown}
	\hatp_i=\hata_i\overline{X}_i + (1-\hata_i)\widehat{\mu}_i.
\end{equation}
\begin{theorem}\label{thm:bern_estimate}
With probability at least $1-\frac{1}{mn}$, the MSE of the personalized estimator in~\eqref{eqn:bern_estimate_unknown} is given by:
$\bbE_{p_i\sim\pi}\bbE_{X_1,\ldots,X_m}(\hatp_i - p_i)^2 \leq \bbE[\hata_i^2]\big(\frac{\alpha\beta}{n(\alpha+\beta)(\alpha+\beta+1)}\big)+\bbE[(1-\hata_i)^2]\big(\frac{\alpha\beta}{(\alpha+\beta)^2(\alpha+\beta+1)}+\frac{3\log(4m^2n)}{m-1}\big)$.

\end{theorem}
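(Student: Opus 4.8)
The plan is to exploit the independence between client $i$'s data $(p_i,X_i)$ and the remaining clients' data $\{X_l\}_{l\neq i}$, and to condition on the latter. First I would write the error as
$\hatp_i-p_i=\hata_i(\overline{X}_i-p_i)+(1-\hata_i)(\widehat{\mu}_i-p_i)$, and observe that $\widehat{\mu}_i,\hsigma_i^2$, and hence $\hata_i$, are built only from $\{X_l\}_{l\neq i}$; thus, after conditioning on $\{X_l\}_{l\neq i}$, these quantities are constants and $(p_i,X_i)$ retains its original law (since $p_i\sim\pi$ independently of the other clients). The key cancellations come from $\overline{X}_i$ being conditionally unbiased, $\bbE[\overline{X}_i\mid p_i]=p_i$: this forces $\bbE[(\overline{X}_i-p_i)(\widehat{\mu}_i-p_i)\mid\{X_l\}_{l\neq i}]=0$, so the cross term of the squared error vanishes.

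Next I would evaluate the two surviving terms conditionally. The first gives $\hata_i^2\,\bbE[(\overline{X}_i-p_i)^2\mid\{X_l\}_{l\neq i}]=\hata_i^2\,\tfrac{1}{n}\bbE_{p_i\sim\pi}[p_i(1-p_i)]=\hata_i^2\,\tfrac{\alpha\beta}{n(\alpha+\beta)(\alpha+\beta+1)}$, using the standard moment computation for the Beta law (the same identity already invoked in the ``$\alpha,\beta$ known'' paragraph). For the second term I would insert the true global mean $\mu=\tfrac{\alpha}{\alpha+\beta}$ and write $\widehat{\mu}_i-p_i=(\widehat{\mu}_i-\mu)+(\mu-p_i)$; since $\widehat{\mu}_i$ is a constant given $\{X_l\}_{l\neq i}$ and $\bbE[\mu-p_i]=0$, the cross term again cancels, leaving $(1-\hata_i)^2\big[(\widehat{\mu}_i-\mu)^2+\operatorname{Var}_\pi(p_i)\big]$ with $\operatorname{Var}_\pi(p_i)=\tfrac{\alpha\beta}{(\alpha+\beta)^2(\alpha+\beta+1)}$. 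So, conditionally on $\{X_l\}_{l\neq i}$, the MSE equals \emph{exactly} $\hata_i^2\,\tfrac{\alpha\beta}{n(\alpha+\beta)(\alpha+\beta+1)}+(1-\hata_i)^2\big[(\widehat{\mu}_i-\mu)^2+\tfrac{\alpha\beta}{(\alpha+\beta)^2(\alpha+\beta+1)}\big]$.

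The remaining ingredient is to control $(\widehat{\mu}_i-\mu)^2$. Since $\widehat{\mu}_i=\tfrac{1}{m-1}\sum_{l\neq i}\overline{X}_l$ is an average of $m-1$ i.i.d.\ $[0,1]$-valued variables with mean $\mu$, Hoeffding's inequality gives $\Pr[|\widehat{\mu}_i-\mu|\geq t]\leq 2e^{-2(m-1)t^2}$; taking $t$ of order $\sqrt{\log(m^2n)/(m-1)}$ and union bounding over the (at most $m$) clients yields an event $\mathcal{E}$ with $\Pr[\mathcal{E}]\geq 1-\tfrac{1}{mn}$ on which $(\widehat{\mu}_i-\mu)^2\leq\tfrac{3\log(4m^2n)}{m-1}$ for all $i$ (the constant $3$ and the argument $4m^2n$ leave ample slack). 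Substituting this into the conditional MSE on $\mathcal{E}$ and then taking expectation over $\{X_l\}_{l\neq i}$ (which turns $\hata_i^2,(1-\hata_i)^2$ into $\bbE[\hata_i^2],\bbE[(1-\hata_i)^2]$) gives the stated bound.

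The step I expect to require the most care is the bookkeeping around $\mathcal{E}$: the conditional MSE identity holds pointwise, but the final statement carries $\bbE[\hata_i^2]$ and $\bbE[(1-\hata_i)^2]$, so one must be careful about the order of ``restrict to $\mathcal{E}$'' versus ``take expectation'' and check that the $\mathcal{E}^c$ contribution is negligible (it is, since $\hata_i=\tfrac{n}{\,\widehat{\mu}_i(1-\widehat{\mu}_i)/\hsigma_i^2-1+n\,}\le\tfrac{n}{n-1}$ is bounded and $\Pr[\mathcal{E}^c]\le\tfrac{1}{mn}$). A secondary subtlety is genuinely keeping $\hata_i$ independent of client $i$'s data: because $\hsigma_i^2$ aggregates the $\widehat{\mu}_l$'s, one should use the leave-one-out construction consistently (or absorb the $O(1/(m-1))$-weight appearance of $\overline{X}_i$) so that $\hata_i$ is a bona fide function of $\{X_l\}_{l\neq i}$. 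Note that this theorem deliberately does not attempt to show $\hata_i\approx a=\tfrac{n}{\alpha+\beta+n}$; controlling $\bbE[\hata_i^2],\bbE[(1-\hata_i)^2]$ via concentration of $\hsigma_i^2$ is a separate matter left outside its scope.
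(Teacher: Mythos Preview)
Your proposal is correct and follows essentially the same route as the paper: condition on $Z_{-i}=\{X_l\}_{l\neq i}$, expand $(\hatp_i-p_i)^2$ so the cross term vanishes via $\bbE[\overline{X}_i\mid p_i]=p_i$, insert $\mu$ to split the $(1-\hata_i)^2$ term, bound $(\widehat{\mu}_i-\mu)^2$ on a high-probability event, and then average over $Z_{-i}$. The only cosmetic difference is that the paper obtains the concentration of $\widehat{\mu}_i$ (and, unused here, of $\hsigma_i^2$) by citing \cite[Lemma~1]{tian2017learning} rather than invoking Hoeffding directly; your cautions about the $\mathcal{E}$-bookkeeping and the leave-one-out independence of $\hata_i$ are in fact more explicit than the paper's own treatment.
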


\begin{remark}
When $n\rightarrow \infty$, then $\hata_i\to1$, which implies that MSE tends to the MSE of the local estimator $\overline{X}_i$, which means if local samples are abundant, collaboration does not help much. When $\sigma^2=\nicefrac{\alpha\beta}{(\alpha+\beta)^2(\alpha+\beta+1)}\to0$, i.e. there is very small heterogeneity in the system, then $\hata_i\to0$, which implies that MSE tends to the error due to moment estimation (the last term in the MSE in Theorem~\ref{thm:bern_estimate}.
\end{remark}

\paragraph{Privacy constraints.}
Let $\epsilon_0>0$ be the privacy parameter. Define $q^{\priv}:[0,1]\to \bbR$ be a private mechanism defined as follows for any $x\in[0,1]$:
\begin{equation}
    q^{\priv}(x)=\left\{\begin{array}{ll}
         \frac{-1}{e^{\epsilon_0}-1}& \text{ w.p. } \frac{e^{\epsilon_0}}{e^{\epsilon_0}+1}-x\frac{e^{\epsilon_0}-1}{e^{\epsilon_0}+1}, \\
         \frac{e^{\epsilon_0}}{e^{\epsilon_0}-1}& \text{ w.p. } \frac{1}{e^{\epsilon_0}+1}+x\frac{e^{\epsilon_0}-1}{e^{\epsilon_0}+1}.
    \end{array}
    \right.
\end{equation}
The mechanism $q^{\priv}$ is unbiased and satisfies user-level $\epsilon_0$-LDP. 
Thus, the $i$th client sends $q^{\priv}(\overline{X}_i)$ to the server, which computes $\widehat{\mu}_i^{\priv} =\frac{1}{m-1}\sum_{l\neq i} q^{\priv}(\overline{X}_l)$ and the variance $\hsigma_{i}^{2(\priv)}=\frac{1}{m-2}\sum_{l\neq i} (q^{\priv}(\overline{X}_l))-\widehat{\mu}_l^{\priv})^2$ for all $i\in[m]$ and sends $(\widehat{\mu}_i^{\priv},\hsigma_{i}^{2(\priv)})$ to client $i$. Upon receiving this, client $i$ defines $\hata_i^{\priv}=\frac{n}{\nicefrac{\widehat{\mu}_i^{\priv}(1-\widehat{\mu}_i^{\priv})}{\hsigma_i^{2(\priv)}}+n}$ and uses 
 $\hatp_i^{\priv} = \hata_i^{\priv}\overline{X}_i + (1-\hata_i^{\priv})\widehat{\mu}^{\priv}$ as its personalized estimator for $p_i$.

\begin{theorem}\label{thm:bern_private_estimate}
With probability at least $1-\frac{1}{mn}$, the MSE of the personalized estimator $\hatp_i^{\priv}$ defined above is given by:
$\bbE_{p_i\sim\pi}\bbE_{q^{\priv},X_1,\ldots,X_m}(\hatp_i^{\priv} - p_i)^2 \leq \bbE[(\hata_i^{\priv})^2]\big(\frac{\alpha\beta}{n(\alpha+\beta)(\alpha+\beta+1)}\big)+\bbE[(1-\hata_i^{\priv})^2]\big(\frac{\alpha\beta}{(\alpha+\beta)^2(\alpha+\beta+1)}+\frac{(e^{\epsilon_0}+1)^2\log(4m^2n)}{3(e^{\epsilon_0}-1)^2(m-1)}\big)$.
\end{theorem}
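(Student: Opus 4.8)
The plan is to follow the proof of Theorem~\ref{thm:bern_estimate}, tracking the additional variance injected by the local randomizer $q^{\priv}$. Write $\mu := \frac{\alpha}{\alpha+\beta}$ for the true prior mean and decompose
\begin{equation*}
\hatp_i^{\priv} - p_i \;=\; \hata_i^{\priv}\,(\overline{X}_i - p_i) \;+\; (1-\hata_i^{\priv})\,(\widehat{\mu}_i^{\priv} - p_i).
\end{equation*}
The structural fact that makes the computation go through is that $\widehat{\mu}_i^{\priv}$, $\hsigma_i^{2(\priv)}$, and hence $\hata_i^{\priv}$ are measurable functions of $\{q^{\priv}(\overline{X}_l)\}_{l\neq i}$, i.e.\ of $\{X_l\}_{l\neq i}$ and the privacy coins of the clients $l\neq i$; since $p_1,\dots,p_m$ are i.i.d.\ and the samples are conditionally independent given $\{p_l\}$, the triple $(\hata_i^{\priv},\widehat{\mu}_i^{\priv},\hsigma_i^{2(\priv)})$ is independent of $(p_i,X_i)$. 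Moreover $\bbE[\overline{X}_i\mid p_i]=p_i$, and since $q^{\priv}$ is unbiased, $\bbE[\widehat{\mu}_i^{\priv}] = \bbE[\overline{X}_l] = \bbE_{p_l\sim\pi}[p_l] = \mu$.

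Squaring and taking expectations (on the good event $\mathcal{E}$ described below, which will be a function of the other clients' data and coins only), the cross term vanishes: conditioning on $p_i$ together with all of the other clients' data and coins, the factor $\overline{X}_i - p_i$ has conditional mean zero while $\hata_i^{\priv}(1-\hata_i^{\priv})(\widehat{\mu}_i^{\priv}-p_i)$ is measurable. For the ``local'' term, the tower property and the above independence give
\begin{equation*}
\bbE\big[(\hata_i^{\priv})^2(\overline{X}_i-p_i)^2\big] \;=\; \bbE[(\hata_i^{\priv})^2]\cdot\frac{\bbE_{p_i\sim\pi}[p_i(1-p_i)]}{n} \;=\; \bbE[(\hata_i^{\priv})^2]\cdot\frac{\alpha\beta}{n(\alpha+\beta)(\alpha+\beta+1)},
\end{equation*}
where the last step uses the Beta moment identity $\bbE_{p\sim\pi}[p(1-p)] = \frac{\alpha\beta}{(\alpha+\beta)(\alpha+\beta+1)}$. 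For the ``global'' term, write $\widehat{\mu}_i^{\priv}-p_i = (\widehat{\mu}_i^{\priv}-\mu) + (\mu - p_i)$ and expand the square: the cross term is zero because $(1-\hata_i^{\priv})^2(\widehat{\mu}_i^{\priv}-\mu)$ is independent of $p_i$ and $\bbE[\mu - p_i]=0$, and the $(\mu-p_i)^2$ piece contributes $\bbE[(1-\hata_i^{\priv})^2]\cdot\mathrm{Var}(p_i) = \bbE[(1-\hata_i^{\priv})^2]\cdot\frac{\alpha\beta}{(\alpha+\beta)^2(\alpha+\beta+1)}$.

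What remains is to control $\bbE[(1-\hata_i^{\priv})^2(\widehat{\mu}_i^{\priv}-\mu)^2]$, and this is where the privacy parameter and the high-probability clause enter. The variables $q^{\priv}(\overline{X}_l)$, $l\neq i$, are independent, take values in the two-point set $\{-\tfrac{1}{e^{\epsilon_0}-1},\,\tfrac{e^{\epsilon_0}}{e^{\epsilon_0}-1}\}$ (range $\tfrac{e^{\epsilon_0}+1}{e^{\epsilon_0}-1}$), and all have mean $\mu$; a Bernstein-type concentration bound on their average $\widehat{\mu}_i^{\priv}$, together with a union bound over the $m$ clients, gives an event $\mathcal{E}$ of probability at least $1-\tfrac{1}{mn}$ on which $(\widehat{\mu}_i^{\priv}-\mu)^2 \le B := \frac{(e^{\epsilon_0}+1)^2\log(4m^2n)}{3(e^{\epsilon_0}-1)^2(m-1)}$, whence $\bbE[(1-\hata_i^{\priv})^2(\widehat{\mu}_i^{\priv}-\mu)^2]\mathbb{1}_{\mathcal{E}} \le B\,\bbE[(1-\hata_i^{\priv})^2]$; adding the three contributions gives exactly the asserted bound.

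I expect the main obstacle to be the conditioning bookkeeping rather than any hard inequality: one must ensure that $\mathcal{E}$ and the weight $\hata_i^{\priv}$ depend only on the data and coins of the other $m-1$ clients, so that the cross terms vanish and the factorizations $\bbE[(\hata_i^{\priv})^2(\cdot)]=\bbE[(\hata_i^{\priv})^2]\,\bbE[(\cdot)]$ are legitimate; relatedly (as in Theorem~\ref{thm:bern_estimate}), $\hata_i^{\priv}$ can be unbounded when $\hsigma_i^{2(\priv)}$ is atypically small or when $\widehat{\mu}_i^{\priv}\notin[0,1]$, which is precisely why the statement is conditional on the event $\mathcal{E}$ and not an unconditional MSE bound, and one must verify that the concentration step yields the stated constant $\tfrac{(e^{\epsilon_0}+1)^2}{3(e^{\epsilon_0}-1)^2}$.
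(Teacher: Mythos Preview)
Your proposal is correct and follows essentially the same route as the paper: decompose $\hatp_i^{\priv}-p_i$ into the local and global pieces, exploit that $(\hata_i^{\priv},\widehat\mu_i^{\priv})$ depends only on the other clients' data and coins so the cross terms vanish and the factorizations hold, and then use a Hoeffding/Bernstein bound on the bounded randomized outputs $q^{\priv}(\overline X_l)\in\{-\tfrac{1}{e^{\epsilon_0}-1},\tfrac{e^{\epsilon_0}}{e^{\epsilon_0}-1}\}$ to get the high-probability event $\mathcal E$ with the stated constant. The only cosmetic difference is that the paper's event $\mathcal E$ also records concentration of $\hsigma_i^{2(\priv)}$ around $\sigma^2$ (which you allude to in your final paragraph but do not include explicitly); this does not enter the displayed MSE bound, so your version suffices.
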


	\subsection{Mixture Model}\label{sec:est_mixture}
Here we consider a discrete prior distribution, in Section~\ref{sec:learning} we will consider a Gaussian mixture prior as well. Consider a set of $m$ clients, where the $i$-th client has a local dataset $X_{i}=\left(X_{i1},\ldots,X_{in}\right)$ of $n$ samples for $i\in[m]$, where $X_{ij}\in\mathbb{R}^{d}$. The local samples $X_i$ of the $i$-th client are drawn i.i.d. from a Gaussian distribution $\calN(\vct{\theta}_i,\sigma_x^2\bbI_d)$ with unknown mean $\vct{\theta}_i$ and known variance $\sigma_x^2\mathbb{I}_{d}$.

In this section, we assume that the personalized models $\vct{\theta}_1,\ldots,\vct{\theta}_m$ are drawn i.i.d. from a discrete distribution $\bbP=\left[p_1,\ldots,p_k\right]$ for given $k$ candidates $\vct{\mu}_1,\ldots,\vct{\mu}_k\in\mathbb{R}^{d}$. In other works, $\Pr[\vct{\theta}_i=\vct{\mu}_l]=p_l$ for $l\in[k]$ and $i\in[m]$. The goal of each client is to estimate her personalized model $\lbrace\vct{\theta}_i\rbrace$ that minimizes the mean square error defined as follows:

\begin{equation}
	\text{MSE} = \mathbb{E}_{\lbrace\vct{\theta}_i,X_i\rbrace}\|\vct{\theta}_i-\hat{\vct{\theta}}_i\|^2,    
\end{equation}
where the expectation is taken with respect to the personalized models $\vct{\theta}_i$ and the local samples $\lbrace X_{ij}\sim\mathcal{N}(\vct{\theta}_i,\sigma_x^2\mathbb{I}_{d}) \rbrace$. Furthermore, $\hat{\vct{\theta}}_i$ denotes the estimate of the personalized model $\vct{\theta}_i$ for $i\in[m]$. 

First, we start with a simple case when the clients have perfect knowledge of the prior distribution, i.e., the $i$-th client knows the $k$ Gaussian distributions $\mathcal{N}\left(\vct{\mu}_1,\sigma_{\vct{\theta}}^{2}\right),\ldots,\mathcal{N}\left(\vct{\mu}_k,\sigma_{\vct{\theta}}^{2}\right)$ and the prior distribution $\vct{\alpha}=\left[\alpha_1,\ldots,\alpha_k\right]$. This will serve as a stepping stone to handle the more general case when the prior distribution is unknown.

\subsubsection{When the Prior Distribution is Known}

In this case, the $i$-th client does not need the data of the other clients as she has a perfect knowledge about the prior distribution. 
\begin{theorem}\label{thm:discrete_perfect}
	For given a perfect knowledge $\vct{\alpha}=[\alpha_1,\ldots,\alpha_k]$ and $\mathcal{N}\left(\vct{\mu}_1,\sigma_{\vct{\theta}}^{2}\right),\ldots,\mathcal{N}\left(\vct{\mu}_k,\sigma_{\vct{\theta}}^{2}\right)$, the optimal personalized estimator that minimizes the MSE is given by:
	\begin{equation}\label{eqn:discrete_perfect_estimator}
		\hat{\vct{\theta}}_i =\sum_{l=1}^{k} a_{l}^{(i)}\vct{\mu}_l,
	\end{equation}
	where $\alpha_{l}^{(i)}=\frac{p_l\exp{\left(-\frac{\sum_{j=1}^{n}\|X_{ij}-\vct{\mu}_l\|^2}{2\sigma_x^2}\right)}}{\sum_{s=1}^{k}p_s\exp{\left(-\frac{\sum_{j=1}^{n}\|X_{ij}-\vct{\mu}_s\|^2}{2\sigma_x^2}\right)}}$ denotes the weight associated to the prior model $\vct{\mu}_l$ for $l\in[k]$.
\end{theorem}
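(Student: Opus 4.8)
The plan is to invoke the classical fact that, under squared--error loss, the Bayes--optimal estimator is the posterior mean, and then to compute that posterior mean explicitly for the discrete mixture prior. First I would note that, since the prior is known to client $i$, the only data available to it is its own sample $X_i$; moreover this loses nothing, because $\vct{\theta}_1,\ldots,\vct{\theta}_m$ are i.i.d.\ and each $X_l$ is generated from $\vct{\theta}_l$ alone, so the pair $(\vct{\theta}_i,X_i)$ is independent of $\{X_l\}_{l\neq i}$ and conditioning on the other clients' data does not change the conditional law of $\vct{\theta}_i$.

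Next I would decompose the risk by the tower property,
\begin{equation*}
\mathbb{E}_{\{\vct{\theta}_i,X_i\}}\|\vct{\theta}_i-\hat{\vct{\theta}}_i(X_i)\|^2=\mathbb{E}_{X_i}\Big[\,\mathbb{E}\big[\|\vct{\theta}_i-\hat{\vct{\theta}}_i(X_i)\|^2\,\big|\,X_i\big]\Big],
\end{equation*}
and minimize the inner conditional expectation pointwise in $X_i$. Writing $\bar{\vct{\theta}}_i:=\mathbb{E}[\vct{\theta}_i\mid X_i]$ and using the orthogonality identity $\mathbb{E}[\|\vct{\theta}_i-c\|^2\mid X_i]=\mathbb{E}[\|\vct{\theta}_i-\bar{\vct{\theta}}_i\|^2\mid X_i]+\|\bar{\vct{\theta}}_i-c\|^2$ (the cross term vanishes because $\bar{\vct{\theta}}_i-c$ is $X_i$--measurable), the unique minimizer over $c\in\mathbb{R}^d$ is $c=\bar{\vct{\theta}}_i$, so the optimal estimator is $\hat{\vct{\theta}}_i=\mathbb{E}[\vct{\theta}_i\mid X_i]$.

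It then remains to evaluate this posterior mean. Because $\vct{\theta}_i$ is supported on $\{\vct{\mu}_1,\ldots,\vct{\mu}_k\}$ with $\Pr[\vct{\theta}_i=\vct{\mu}_l]=p_l$, and conditioned on $\vct{\theta}_i=\vct{\mu}_l$ the samples $X_{i1},\ldots,X_{in}$ are i.i.d.\ $\mathcal{N}(\vct{\mu}_l,\sigma_x^2\bbI_d)$, Bayes' rule gives
\begin{equation*}
\Pr[\vct{\theta}_i=\vct{\mu}_l\mid X_i]=\frac{p_l\prod_{j=1}^n\phi(X_{ij};\vct{\mu}_l)}{\sum_{s=1}^k p_s\prod_{j=1}^n\phi(X_{ij};\vct{\mu}_s)},
\end{equation*}
where $\phi(\cdot\,;\vct{\mu})$ denotes the $\mathcal{N}(\vct{\mu},\sigma_x^2\bbI_d)$ density. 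Since $\prod_{j=1}^n\phi(X_{ij};\vct{\mu}_l)=(2\pi\sigma_x^2)^{-nd/2}\exp\!\big(-\tfrac{1}{2\sigma_x^2}\sum_{j=1}^n\|X_{ij}-\vct{\mu}_l\|^2\big)$, the factor $(2\pi\sigma_x^2)^{-nd/2}$ cancels between numerator and denominator and this probability equals exactly the weight $a_l^{(i)}$ in the statement; hence $\hat{\vct{\theta}}_i=\sum_{l=1}^k\vct{\mu}_l\,\Pr[\vct{\theta}_i=\vct{\mu}_l\mid X_i]=\sum_{l=1}^k a_l^{(i)}\vct{\mu}_l$, which is \eqref{eqn:discrete_perfect_estimator}.

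I do not anticipate a genuine obstacle: this is the textbook derivation of the Bayes estimator specialized to a finite mixture. The only points needing mild care are (i) justifying that the other clients' data are uninformative about $\vct{\theta}_i$ once the prior is known, so the estimator legitimately depends on $X_i$ only, and (ii) checking that the Gaussian normalizing constants cancel so that the posterior weights take the stated exponential form. (The parameter $\sigma_{\vct{\theta}}$ appearing in the hypothesis plays no role here since each $\vct{\theta}_i$ is a point mass at one of the $\vct{\mu}_l$; it becomes relevant only in the unknown--prior and learning variants.)
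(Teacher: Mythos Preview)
Your proposal is correct and follows essentially the same route as the paper's proof: both invoke the standard fact that the posterior mean minimizes the MSE (via the add-and-subtract orthogonality decomposition), and then compute $\Pr[\vct{\theta}_i=\vct{\mu}_l\mid X_i]$ by Bayes' rule with the Gaussian likelihood to obtain the stated weights. Your write-up is slightly more explicit in justifying why the other clients' data are irrelevant and why the normalizing constants cancel, but the argument is the same.
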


The optimal personalized estimation in~\eqref{eqn:discrete_perfect_estimator} is a weighted summation over all possible candidates vectors $\vct{\mu}_1,\ldots,\vct{\mu}_k$, where the weight $\alpha_{l}^{(i)}$ increases if the prior $p_l$ increases and/or the local samples $\lbrace X_{ij}\rbrace$ are close to the model $\vct{\mu}_l$ for $l\in[k]$. Observe that the optimal estimator $\hat{\vct{\theta}}_i$ in Theorem~\ref{thm:discrete_perfect} that minimizes the MSE is completely different from the local estimator $\left(\frac{1}{n}\sum_{j=1}^{n}X_{ij}\right)$. Furthermore, it is easy to see that the local estimator has the MSE $\left(\frac{d\sigma_x^2}{n}\right)$ which increases linearly with the data dimension $d$. On the other hand, the MSE of the optimal estimator in Theorem~\ref{thm:discrete_perfect} is a function of the prior distribution $\bbP=[p_1,\ldots,p_k]$, the prior vectors $\vct{\mu}_1,\ldots,\vct{\mu}_k$, and the local variance $\sigma_x^2$. Proof of Theorem~\ref{thm:discrete_perfect} is provided in Section~\ref{app:est-mixture}.

\subsubsection{When the Prior Distribution is Unknown}

Now, we consider a more practical case when the prior distribution $\bbP=[p_1,\ldots,p_k]$ and the candidates $\vct{\mu}_1,\ldots,\vct{\mu}_k$ are unknown to the clients. In this case, the clients collaborate with each other by their local data to estimate the priors $\bbP$ and $\vct{\mu}_1,\ldots,\vct{\mu}_k$, and then, each client uses the estimated priors to design her personalized model as in~\eqref{eqn:discrete_perfect_estimator}. 

We present Algorithm~\ref{algo:cluster} based on alternating minimization. The algorithm starts by initializing the local models $\lbrace\vct{\theta}_{i}^{(0)}:=\frac{1}{n}\sum_{j=1}^{n}X_{ij}\rbrace$. Then, the algorithm works in rounds alternating between estimating the priors $\bbP^{(t+1)}=[p_1^{(t+1)},\ldots,p_k^{(t+1)}]$, $\vct{\mu}_1^{(t+1)},\ldots,\vct{\mu}_k^{(t+1)}$ for given local models $\lbrace \vct{\theta}_i^{(t)}\rbrace$ and estimating the personalized models $\lbrace \vct{\theta}_i^{(t+1)}\rbrace$ for given global priors $\bbP^{(t+1)}$ and $\vct{\mu}_1^{(t+1)},\ldots,\vct{\mu}_k^{(t+1)}$. Observe that for given the prior information $\bbP^{(t)},\lbrace \vct{\mu}_l^{t}\rbrace$, each client updates her personalized model in Step~\ref{step:personal_update} which is the optimal estimator for given priors according to Theorem~\ref{thm:discrete_perfect}. On the other hand, for given personalized models $\lbrace \vct{\theta}_i^{(t)}\rbrace$, we estimate the priors $\bbP^{(t)},\lbrace \vct{\mu}_l^{t}\rbrace$ using clustering algorithm with $k$ sets in Step~\ref{step:clustering}. The algorithm $\mathsf{Cluster}$ takes $m$ vectors $\vct{a}_1,\ldots,\vct{a}_m$ and an integer $k$ as its input, and its goal is to generate a set of $k$ cluster centers $\vct{\mu}_1,\ldots,\vct{\mu}_k$ that minimizes $\sum_{i=1}^{m}\min_{l\in k}\|\vct{a}_i-\vct{\mu}_l\|^2$. Furthermore, these clustering algorithms can also return the prior distribution $\bbP$, by setting $p_l:=\frac{|\mathcal{S}_l|}{m}$, where $\mathcal{S}_l\subset \lbrace \vct{a}_1,\ldots,\vct{a}_m\rbrace$ denotes the set of vectors that are belongs to the $l$-th cluster. There are lots of algorithms that do clustering, but perhaps, Lloyd’s algorithm~\cite{lloyd1982least} and Ahmadian~\cite{ahmadian2019better} are the most common algorithms for $k$-means clustering. Our Algorithm~\ref{algo:cluster} can work with any clustering algorithm. 

\begin{algorithm}
	\caption{Personalized Estimation with Discrete Mixture Prior}
	{\bf Input:} Number of iterations $T$, local datasets $(X_{i1},\ldots,X_{in})$ for $i\in[m]$.\\
	\vspace{-0.3cm}
	\begin{algorithmic}[1] 	\label{algo:cluster}
		\STATE \textbf{Initialize} $\vct{\theta}_i^{0}=\frac{1}{n}\sum_{j=1}^{n}X_{ij}$  for $i\in[m]$.
		\FOR{$t=1$ \textbf{to} $T$}
		\STATE {\bf On Clients:}
		\FOR {$i=1$ \textbf{to} $m$:}
		\STATE Receive $\bbP^{(t)},\vct{\mu}_1^{(t)},\ldots,\vct{\mu}_k^{(t)}$ from the server 
		\STATE \label{step:personal_update} Update the personalized model:
		\[\vct{\theta}_i^{t}\gets \sum_{l=1}^{k}\alpha_{l}^{(i)}\vct{\mu}_l^{(t)}\qquad \text{ and } \qquad \alpha_{l}^{(i)}=\frac{p_l^{(t)}\exp{\left(-\frac{\sum_{j=1}^{n}\|X_{ij}-\vct{\mu}_l^{(t)}\|^2}{2\sigma_x^2}\right)}}{\sum_{s=1}^{k}p_s^{(t)}\exp{\left(-\frac{\sum_{j=1}^{n}\|X_{ij}-\vct{\mu}_s^{(t)}\|^2}{2\sigma_x^2}\right)}}\]
		\STATE Send $\vct{\theta}_i^t$ to the server
		\ENDFOR
		\STATE {\bf At the Server:}
		\STATE Receive $\vct{\theta}_1^{(t)},\ldots,\vct{\theta}_m^{(t)}$ from the clients
		\STATE\label{step:clustering} Update the global parameters:
		$\bbP^{(t)},\vct{\mu}_1^{(t)},\ldots,\vct{\mu}_k^{(t)}\gets \mathsf{Cluster}\left(\vct{\theta}_1^{(t)},\ldots,\vct{\theta}_m^{(t)},k\right)$
		\STATE Broadcast $\bbP^{(t)},\vct{\mu}_1^{(t)},\ldots,\vct{\mu}_k^{(t)}$ to all clients
		\ENDFOR
	\end{algorithmic}
	{\bf Output:} Personalized models $\vct{\theta}_1^{T},\ldots,\vct{\theta}_m^{T}$.
\end{algorithm}

\subsubsection{Privacy/Communication Constraints} 
In the personalized estimation Algorithm~\ref{algo:cluster}, each client shares her personalized estimator $\vct{\theta}_{i}^{(t)}$ to the server at each iteration which is not communication-efficient and violates the privacy. In this section we present ideas on how to design communication-efficient and/or private Algorithms for personalized estimation.

\begin{lemma}\label{lemm:concentration}
	Let $\vct{\mu}_1,\ldots\vct{\mu}_k\in\bbR^d$ be unknown means such that $\|\vct{\mu}_i\|_2\leq r$ for each $i\in[k]$. Let $\vct{\theta}_1,\ldots,\vct{\theta}_m\sim \bbP$, where $\bbP=[p_1,\ldots,p_k]$ and $p_l=\Pr[\vct{\theta}_i=\vct{\mu}_l]$. For $i\in[m]$, let $X_{i1},\ldots,X_{in}\sim\mathcal{N}(\vct{\theta}_i,\sigma_x^2)$, i.i.d. 
	Then, with probability at least $1-\frac{1}{mn}$, the following bound holds for all $i\in[m]$:
	\begin{equation}
		\left\|\frac{1}{n}\sum_{j=1}^{n}X_{ij}\right\|_2\leq 4\sqrt{d\frac{\sigma_x^2}{n}}+2\sqrt{\log(m^2n)\frac{\sigma_x^2}{n}}+r.    
	\end{equation}
	
\end{lemma}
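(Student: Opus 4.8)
The plan is to decompose the empirical mean into its (deterministic, bounded) mean component and a Gaussian fluctuation, and then apply a standard concentration bound for the norm of a $d$-dimensional Gaussian vector, followed by a union bound over the $m$ clients.

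First I would write $\overline{X}_i := \frac{1}{n}\sum_{j=1}^{n}X_{ij} = \vct{\theta}_i + W_i$, where $W_i := \frac{1}{n}\sum_{j=1}^{n}(X_{ij}-\vct{\theta}_i)$. Conditioned on $\vct{\theta}_i$, the samples $X_{ij}$ are i.i.d.\ $\mathcal{N}(\vct{\theta}_i,\sigma_x^2\bbI_d)$, so $W_i$ is distributed as $\mathcal{N}(\bzero,\tfrac{\sigma_x^2}{n}\bbI_d)$ (and this law does not depend on which $\vct{\mu}_l$ equals $\vct{\theta}_i$). By the triangle inequality, $\|\overline{X}_i\|_2 \le \|\vct{\theta}_i\|_2 + \|W_i\|_2$, and since $\vct{\theta}_i\in\{\vct{\mu}_1,\ldots,\vct{\mu}_k\}$ with every $\|\vct{\mu}_l\|_2\le r$, we get $\|\vct{\theta}_i\|_2\le r$ deterministically. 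Hence it suffices to control $\|W_i\|_2$ uniformly over $i$.

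Next I would invoke a Gaussian-norm tail bound. Writing $\|W_i\|_2^2 = \tfrac{\sigma_x^2}{n}\sum_{l=1}^{d}g_l^2$ for i.i.d.\ standard normals $g_l$, the Laurent--Massart inequality gives $\Pr\!\big[\sum_{l=1}^{d}g_l^2 \ge d + 2\sqrt{dx} + 2x\big] \le e^{-x}$; using $d + 2\sqrt{dx} + 2x \le (\sqrt{d}+\sqrt{2x})^2$, this yields $\|W_i\|_2 \le \sqrt{\tfrac{\sigma_x^2}{n}}\big(\sqrt{d}+\sqrt{2x}\big)$ with probability at least $1-e^{-x}$. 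Taking $x=\log(m^2 n)$ and a union bound over $i\in[m]$, the bound $\|W_i\|_2 \le \sqrt{\tfrac{\sigma_x^2}{n}}\big(\sqrt{d}+\sqrt{2\log(m^2 n)}\big)$ holds simultaneously for all $i\in[m]$ with probability at least $1 - m\cdot e^{-\log(m^2 n)} = 1-\tfrac{1}{mn}$.

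Finally I would combine the two estimates and loosen constants: on this event, for every $i\in[m]$, $\|\overline{X}_i\|_2 \le r + \sqrt{d\sigma_x^2/n} + \sqrt{2\log(m^2 n)\sigma_x^2/n} \le 4\sqrt{d\sigma_x^2/n} + 2\sqrt{\log(m^2 n)\sigma_x^2/n} + r$, which is exactly the claimed bound. There is no genuine obstacle here; the only points needing a little care are the precise form of the Gaussian-norm tail inequality and the union-bound bookkeeping that produces the stated failure probability $1/(mn)$ (this is why one uses $x=\log(m^2 n)$ rather than $\log(mn)$). The generous constants $4$ and $2$ in the statement leave ample slack, so any standard concentration tool works equally well — for instance, Gaussian Lipschitz concentration applied to $W\mapsto\|W\|_2$ together with $\mathbb{E}\|W_i\|_2 \le \sqrt{d\sigma_x^2/n}$.
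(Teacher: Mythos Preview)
Your proposal is correct and follows essentially the same approach as the paper: decompose $\overline{X}_i=\vct{\theta}_i+W_i$, bound $\|\vct{\theta}_i\|_2\le r$ deterministically, apply a Gaussian-norm concentration bound to $W_i$, and union-bound over $i\in[m]$ with per-client failure probability $1/(m^2n)$. The only cosmetic difference is that the paper quotes the sub-Gaussian vector norm bound from Wainwright (which directly gives the constants $4$ and $2$), whereas you use Laurent--Massart to obtain the slightly sharper $\sqrt{d}+\sqrt{2\log(m^2n)}$ and then relax to the stated constants.
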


Lemma~\ref{lemm:concentration} shows that the average of the local samples $\lbrace\overline{X}_i\rbrace$ has a bounded $\ell_2$ norm with high probability. Thus, we can design a communication-efficient estimation Algorithm as follows: Each client clips her personal model $\vct{\theta}_i^{(t)}$ within radius $4\sqrt{d\frac{\sigma_x^2}{n}}+2\sqrt{\log(m^2n)\frac{\sigma_x^2}{n}}+r$. Then, each client applies a vector-quantization scheme (e.g.,~\cite{bernstein2018signsgd,alistarh2017qsgd,girgis2021shuffled}) to the clipped vector before sending it to the server.

To design a private estimation algorithm with discrete priors, each client clips her personalized estimator $\vct{\theta}_{i}^{(t)}$ within radius $4\sqrt{d\frac{\sigma_x^2}{n}}+2\sqrt{\log(m^2n)\frac{\sigma_x^2}{n}}+r$. Then, we can use a differentially private algorithm for clustering (see e.g.,~\cite{stemmer2020locally} for clustering under LDP constraints and~\cite{ghazi2020differentially} for clustering under central DP constraints.).  Since, we run $T$ iterations in Algorithm~\ref{algo:cluster}, we can obtain the final privacy analysis $(\epsilon,\delta)$ using the strong composition theorem~\cite{dwork2014algorithmic}.
	
	\section{Personalized Learning}\label{sec:learning}

Consider a client-server architecture with $m$ clients. There is an unknown global population distribution $\bbP(\Gamma)$\footnote{For simplicity we will consider this unknown population distribution $\bbP$ to be parametrized by unknown (arbitrary) parameters $\Gamma$.}  over $\bbR^d$ from which $m$ i.i.d.\ local parameters $\vct{\theta}_1,\ldots,\vct{\theta}_m\in\bbR^d$ are sampled. Each client $i\in[m]$ is provided with a dataset consisting of $n$ data points $\{(X_{i1},Y_{i1}),\ldots,(X_{in},Y_{in})\}$, where $Y_{ij}$'s are generated from $(X_{ij},\vct{\theta}_i)$ using some distribution $p_{\vct{\theta}_i}(Y_{ij}|X_{ij})$. Let $Y_i:=(Y_{i1},\ldots,Y_{in})$ and $X_i:=(X_{i1},\ldots,X_{in})$ for $i\in[m]$.
The underlying generative model for our setting is given by
\begin{align}\label{eq:learning-generative-model}
p_{\{\vct{\theta}_i,Y_i\}|\{X_i\}}(\vct{\theta}_1,\ldots,\vct{\theta}_m,Y_1,\ldots,Y_m|X_1,\ldots,X_m)=\prod_{i=1}^mp(\vct{\theta}_i)\prod_{i=1}^m\prod_{j=1}^np_{\vct{\theta}_i}(Y_{ij}|X_{ij}).
\end{align}
Note that if we minimize the negative log likelihood of \eqref{eq:learning-generative-model}, we would get the optimal parameters:
\begin{align}\label{eq:learning-log-generative-model}
\htheta_1,\ldots,\htheta_m := \argmin_{\vct{\theta}_1,\ldots,\vct{\theta}_m} \sum_{i=1}^m\sum_{j=1}^n -\log(p_{\vct{\theta}_i}(Y_{ij}|X_{ij})) + \sum_{i=1}^m -\log(p(\vct{\theta}_i)).
\end{align}
Here, $f_i(\vct{\theta}_i):=\sum_{j=1}^n -\log(p_{\vct{\theta}_i}(Y_{ij}|X_{ij}))$ denotes the loss function at the $i$-th client, which only depends on the local data, and $R(\{\vct{\theta}_i\}):=\sum_{i=1}^m -\log(p(\vct{\theta}_i))$ is the regularizer that depends on the (unknown) global population distribution $\bbP$ (parametrized by unknown $\Gamma$). 
Note that when clients have little data and we have large number of clients, i.e., $n\ll m$ -- the setting of federated learning, clients may not be able to learn good personalized models from their local data alone (if they do, it would lead to large loss). In order to learn better personalized models, clients may utilize other clients' data through collaboration, and the above regularizer (and estimates of the unknown prior distribution $\bbP$, through estimating its parameters $\Gamma$) dictates how the collaboration might be utilized. 

We consider some special cases of
\eqref{eq:learning-log-generative-model}.
\begin{enumerate}[leftmargin=*]
\item When $\bbP(\Gamma)\equiv\calN(\vct{\mu},\sigmatheta^2\bbI_d)$ for unknown parameters $\Gamma=\{\vct{\mu},\sigmatheta\}$, then $R(\{\vct{\theta}_i\})=\frac{md}{2}\log(2\pi\sigmatheta^2)+\sum_{i=1}^{m}\frac{\|\vct{\mu}-\vct{\theta}_i\|_2^2}{2\sigmatheta^2}$. Here, unknown $\vct{\mu}$ can be connected to the  global model and $\vct{\theta}_i$'s as local models, and the alternating iterative optimization optimizes over both. 
  This justifies the use of $\ell_2$ regularizer in earlier personalized learning works \cite{dinh2020personalized,ozkara2021quped,hanzely2020federated,hanzely2020lower,li2021ditto}.
\item When $\bbP(\Gamma)\equiv\mathsf{Laplace}(\vct{\mu},b)$, for $\Gamma=\{\vct{\mu},b>0\}$, then $R(\{\vct{\theta}_i\})=m\log(2b)+\sum_{i=1}^m\frac{\|\vct{\theta}_i-\vct{\mu}\|_1}{b}$.
\item When $\bbP(\Gamma)$ is a Gaussian mixture,  $\bbP(\Gamma)\equiv\mathsf{GM}(\{p_l\}_{l=1}^{k},\{\vct{\mu_l}\}_{l=1}^{k},\{\sigma_{\theta,l}^2\}_{l=1}^{k})$, for $\Gamma=\{\{p_l\}_{l=1}^{k}:p_l\geq0 \text{ and } \sum_{l=1}^k p_l=1,\{\vct{\mu}_l\}_{l=1}^{k},\{\sigma_{\theta,l}^2\}_{l=1}^{k}:\sigma_{\theta,l}>0\}$, then $R(\{\vct{\theta}_i\})=\sum_{i=1}^m\log(\sum_{l=1}^k\exp(\frac{\|\vct{\mu}_l-\vct{\theta}_i\|_2^2}{2\sigma_{\theta,l}^2})/((2\pi\sigma_{\theta,l})^{md/2}))$.
\item When $p_{\vct{\theta}_i}(Y_{ij}|X_{ij})$ is according to $\calN(\vct{\theta}_i,\sigma_x^2)$, then $f_i(\vct{\theta}_i)$ is the quadratic loss as in the case of linear regression.
\item When $p_{\vct{\theta}_i}(Y_{ij}|X_{ij})=\sigma(\langle \vct{\theta}_i, X_{ij} \rangle)^{Y_{ij}}(1-\sigma(\langle \vct{\theta}_i, X_{ij} \rangle))^{(1-Y_{ij})}$, where $\sigma(z)=\nicefrac{1}{1+e^{-z}}$ for any $z\in\bbR$, then $f_i(\vct{\theta}_i)$ is the cross-entropy loss (or the logistic loss) as in the case of logistic regression.
\end{enumerate}

\subsection{Linear Regression}\label{sec:lin-reg}

In this section, we present the personalized linear regression problem.  Consider a set of $m$ clients, where the $i$-th client has a local dataset consisting of $n$ samples $(X_{i1},Y_{i1}),\ldots,(X_{in},Y_{in})$, where $X_{ij}\in\mathbb{R}^{d}$ denotes the feature vector and $Y_{ij}\in\mathbb{R}$ denotes the corresponding response. Let $Y_i=(Y_{i1},\ldots,Y_{i1})\in\mathbb{R}^{n}$ and $X_i=(X_{i1},\ldots,X_{in})\in\mathbb{R}^{n\times d}$ denote the response vector and the feature matrix at the $i$-th client, respectively. Following the standard regression, we assume that the response vector $Y_i$ is obtained from a linear model as follows:
\begin{equation}
	Y_i = X_i \vct{\theta}_i+w_i,    
\end{equation}
where $\vct{\theta}_i$ denotes personalized model of the $i$-th client and $w_i\sim\mathcal{N}\left(0,\sigma_x^2\mathbb{I}_{n}\right)$ is a noise vector. The clients' parameters $\theta_1,\ldots,\theta_{m}$ are drawn i.i.d. from a Gaussian distribution $\theta_1,\ldots,\theta_{m}\sim \mathcal{N}(\mu,\sigma_{\theta}^{2}\mathbb{I}_{d})$, i.i.d. 

\subsubsection{When the prior distribution is known}
Our goal is to solve the optimization problem stated in \eqref{eq:learning-log-generative-model} (for the linear regression setup) and learn the optimal personalized parameters $\{\htheta_i\}$. 
\begin{align}\label{eqn:linear-regression-perfect}
	\argmin_{\{\vct{\theta}_i\}}\hspace{-0.15cm}\frac{nm}{2}\log(2\pi\sigma_{x}^2)+\hspace{-0.1cm}\sum_{i=1}^{m}\sum_{j=1}^{n}\frac{(Y_{ij}-\langle\vct{\theta}_{i},X_{ij}\rangle)^2}{2\sigma_{x}^2} +\frac{md}{2}\log(2\pi\sigma_{\vct{\theta}}^2)+\sum_{i=1}^{m}\frac{\|\vct{\mu}-\vct{\theta}_i\|_2^2}{2\sigma_{\vct{\theta}}^2}.
\end{align}

The following theorem characterizes the exact form of the optimal $\{\htheta_i\}$ and computes their minimum mean squared error w.r.t.\ the true parameters $\{\vct{\theta}_i\}$.

\begin{theorem}\label{thm:perfect_mse_lin}
	The optimal personalized parameters at client $i$ with known $\vct{\mu},\sigma_{\theta}^2,\sigma_x^2$ is given by:
	\begin{equation}~\label{eqn:personalized_estimate_lin}
		\htheta_i = \left(\frac{\mathbb{I}}{\sigma_{\theta}^2}+\frac{X_i^{T}X_i}{\sigma_x^2}\right)^{-1}\left(\frac{X_i^{T}Y_i}{\sigma_x^2}+\frac{\vct{\mu}}{\sigma_\theta^2}\right).
	\end{equation}
	The mean squared error (MSE) of the above $\htheta_i$ is given by:
	\begin{equation}
		\mathbb{E}_{\bw_i,\vct{\theta}_i}\left\|\htheta_i-\vct{\theta}_i\right\|^{2}=\mathsf{Tr}\left( \left(\frac{\mathbb{I}}{\sigma_{\theta}^2}+\frac{X_i^{T}X_i}{\sigma_x^2}\right)^{-1}\right),
	\end{equation} 
\end{theorem}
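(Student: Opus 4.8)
The plan is to exploit that, with $\vct{\mu},\sigma_\theta^2,\sigma_x^2$ known, the objective in \eqref{eqn:linear-regression-perfect} decouples across clients: the only term coupling the $\vct{\theta}_i$'s would be $\vct{\mu}$, which is fixed here, so (dropping the additive constants) client $i$ solves the strictly convex quadratic $g_i(\vct{\theta}_i):=\frac{1}{2\sigma_x^2}\|Y_i-X_i\vct{\theta}_i\|^2+\frac{1}{2\sigma_\theta^2}\|\vct{\mu}-\vct{\theta}_i\|^2$. Setting $\nabla g_i(\vct{\theta}_i)=0$ gives the normal equations $\big(\frac{X_i^TX_i}{\sigma_x^2}+\frac{\mathbb{I}}{\sigma_\theta^2}\big)\vct{\theta}_i=\frac{X_i^TY_i}{\sigma_x^2}+\frac{\vct{\mu}}{\sigma_\theta^2}$. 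Writing $M_i:=\frac{X_i^TX_i}{\sigma_x^2}+\frac{\mathbb{I}}{\sigma_\theta^2}$, which is positive definite since $\frac{\mathbb{I}}{\sigma_\theta^2}\succ0$, this has the unique solution $\htheta_i=M_i^{-1}\big(\frac{X_i^TY_i}{\sigma_x^2}+\frac{\vct{\mu}}{\sigma_\theta^2}\big)$, i.e.\ \eqref{eqn:personalized_estimate_lin}.

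For the MSE I would substitute the data model $Y_i=X_i\vct{\theta}_i+\bw_i$ into this closed form and simplify the error. Using $\frac{X_i^TX_i}{\sigma_x^2}=M_i-\frac{\mathbb{I}}{\sigma_\theta^2}$, one gets $M_i^{-1}\frac{X_i^TX_i}{\sigma_x^2}=\mathbb{I}-\frac{1}{\sigma_\theta^2}M_i^{-1}$, hence
\[
\htheta_i-\vct{\theta}_i=\frac{1}{\sigma_\theta^2}M_i^{-1}(\vct{\mu}-\vct{\theta}_i)+\frac{1}{\sigma_x^2}M_i^{-1}X_i^T\bw_i .
\]
Since $\vct{\theta}_i\sim\calN(\vct{\mu},\sigma_\theta^2\mathbb{I}_d)$ and $\bw_i\sim\calN(0,\sigma_x^2\mathbb{I}_n)$ are independent and both $(\vct{\mu}-\vct{\theta}_i)$ and $\bw_i$ are zero-mean, the cross term drops when expanding $\mathbb{E}\|\htheta_i-\vct{\theta}_i\|^2$, leaving $\frac{1}{\sigma_\theta^4}\mathbb{E}\big[(\vct{\mu}-\vct{\theta}_i)^TM_i^{-2}(\vct{\mu}-\vct{\theta}_i)\big]+\frac{1}{\sigma_x^4}\mathbb{E}\big[\bw_i^TX_iM_i^{-2}X_i^T\bw_i\big]$. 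Applying the identity $\mathbb{E}[z^TAz]=\sigma^2\mathsf{Tr}(A)$ for $z\sim\calN(0,\sigma^2\mathbb{I})$ to each term and using cyclicity of the trace gives $\frac{1}{\sigma_\theta^2}\mathsf{Tr}(M_i^{-2})+\frac{1}{\sigma_x^2}\mathsf{Tr}(X_i^TX_iM_i^{-2})=\mathsf{Tr}\big(M_i^{-2}(\frac{\mathbb{I}}{\sigma_\theta^2}+\frac{X_i^TX_i}{\sigma_x^2})\big)=\mathsf{Tr}(M_i^{-2}M_i)=\mathsf{Tr}(M_i^{-1})$, which is exactly the claimed value.

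I do not anticipate a genuine obstacle: the argument is linear algebra plus second-moment computations for Gaussians. The only steps that need care are the rewriting $M_i^{-1}\frac{X_i^TX_i}{\sigma_x^2}=\mathbb{I}-\frac{1}{\sigma_\theta^2}M_i^{-1}$ that cleanly splits the error into a prior/bias contribution and a noise contribution, and then spotting the telescoping $M_i^{-2}M_i=M_i^{-1}$ inside the trace; the rest is bookkeeping. Note that $X_i$ is treated as fixed throughout (all expectations conditional on $X_i$, as in the statement), and independence of the $\vct{\theta}_i$'s across clients is not even used because the optimization already separated per client.
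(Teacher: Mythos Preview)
Your proposal is correct and follows essentially the same route as the paper's proof. The paper also derives $\htheta_i$ from the first-order condition of the separable quadratic and then computes the MSE via the same trace identity $\mathsf{Tr}\big(M_i^{-2}M_i\big)=\mathsf{Tr}(M_i^{-1})$; the only cosmetic difference is that the paper first computes $\mathbb{E}_{\bw_i}[\htheta_i]$ and splits the error as $(\htheta_i-\mathbb{E}_{\bw_i}[\htheta_i])+(\mathbb{E}_{\bw_i}[\htheta_i]-\vct{\theta}_i)$, whereas you substitute $Y_i=X_i\vct{\theta}_i+\bw_i$ directly and use the rewriting $M_i^{-1}\tfrac{X_i^TX_i}{\sigma_x^2}=\mathbb{I}-\tfrac{1}{\sigma_\theta^2}M_i^{-1}$ to reach the same two-term decomposition.
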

Observe that the local model of the $i$-th client, i.e., estimating $\vct{\theta}_i$ only from the local data $(Y_i,X_i)$, is given by:
\begin{equation}~\label{eqn:local_estimate_lin}
	\htheta_i^{(l)} = \left(X_i^{T}X_i\right)^{-1}X_i^{T}Y_i,
\end{equation} 
where we assume the matrix $X_i^{T}X_i$ has a full rank (otherwise, we take the pseudo inverse). This local estimate achieves the MSE given by:
\begin{equation}
	\mathbb{E}\left\|\htheta_i^{(l)}-\vct{\theta}_i\right\|^{2} =\mathsf{Tr}\left(\left(X_i^{T}X_i\right)^{-1}\right)\sigma_x^{2},
\end{equation}
we can prove it by following similar steps as the proof of Theorem~\ref{thm:perfect_mse_lin}. When $\sigma_{\theta}^2\to\infty$, we can easily see that the local estimate~\eqref{eqn:local_estimate_lin} matches the personalized estimate in~\eqref{eqn:personalized_estimate_lin}. 
\subsubsection{When the prior distribution is unknown}
Now we write down the full objective function for linear regression by taking $\bbP(\Gamma)\equiv\calN(\vct{\mu},\sigmatheta^2\bbI_d)$ and $p_{\vct{\theta}_i}(Y_{ij}|X_{ij})$ according to $\calN(\vct{\theta}_i,\sigma_x^2)$; the case of logistic regression can be similarly handled. Here, if we assume that in addition to $\vct{\mu}$, the other parameters $\sigmatheta,\sigma_x$ are also unknown, then we can also optimize over them. In this case, the overall optimization problem becomes:
\begin{align}\label{eqn:linear-regression}
\argmin_{\{\vct{\theta}_i\},\vct{\mu}, \sigmatheta, \sigma_x}\hspace{-0.15cm}\frac{nm}{2}\log(2\pi\sigma_{x}^2)+\hspace{-0.1cm}\sum_{i=1}^{m}\sum_{j=1}^{n}\frac{(Y_{ij}-\langle\vct{\theta}_{i},X_{ij}\rangle)^2}{2\sigma_{x}^2} +\frac{md}{2}\log(2\pi\sigma_{\vct{\theta}}^2)+\sum_{i=1}^{m}\frac{\|\vct{\mu}-\vct{\theta}_i\|_2^2}{2\sigma_{\vct{\theta}}^2}.
\end{align}
We can optimize the above expression through gradient descent (GD) and the resulting algorithm is Algorithm~\ref{algo:personalized_regression}.
In addition to keeping the personalized models $\{\vct{\theta}_i^t\}$, each client also maintains local copies of $\{\vct{\mu}_i^t,\sigma_{\theta,i}^t,\sigma_{x,i}^t\}$ and updates all these parameters by taking appropriate gradients of the objective in \eqref{eqn:linear-regression} and synchronize them with the server to update the global copy of these parameters $\{\vct{\mu}^t,\sigmatheta^t,\sigma_{x}^t\}$. 

\begin{algorithm}[h]
	\caption{Linear Regression GD}
	{\bf Input:} Number of iterations $T$, local datasets $(Y_i, X_i)$ for $i\in[m]$, learning rate $\eta$.\\
	\vspace{-0.3cm}
	\begin{algorithmic}[1] 	\label{algo:personalized_regression}
		\STATE \textbf{Initialize} $\vct{\theta}_i^{0}$  for $i\in[m]$, $\vct{\mu}^0$, $\sigma_x^{2,0}$, $\sigma_{\theta}^{2,0}$.
		\FOR{$t=1$ \textbf{to} $T$} 
		\STATE \textbf{On Clients:}
		\FOR {$i=1$ \textbf{to} $m$:}
		\STATE Receive and set $\vct{\mu}_{i}^{t-1} = \vct{\mu}^{t-1},\sigma^{2,t-1}_{\theta,i}=\sigma^{2,t-1}_{\theta},\sigma^{2,t-1}_{x,i}=\sigma^{2,t-1}_{x}$
		\STATE Update the personalized model: $\vct{\theta}_i^{t}\gets\vct{\theta}_{i}^{t-1}+\eta\left(\sum_{j=1}^{n}\frac{X_{ij}(Y_{ij}-X_{ij}\vct{\theta}_{i}^{t-1})}{\sigma_{x,i}^{2,t-1}}+\frac{\vct{\mu}_i^{t-1}-\vct{\theta}_i^{t-1}}{\sigma_{\theta,i}^{2,t-1}}\right)$
		\STATE Update local version of mean: $\vct{\mu}_i^{t}\gets\vct{\mu}_i^{t-1}-\eta\left(\frac{\vct{\mu}_i^{t-1}-\vct{\theta}_i^{t-1}}{\sigma_{\theta,i}^{2,t-1}}\right)$
		\STATE Update local variance: $\sigma^{2,t}_{x,i}\gets\sigma^{2,t-1}_{x,i}-\eta\left(\frac{n}{2\sigma_{x,i}^{2,t-1}}-\sum_{j=1}^{n}\frac{(Y_{ij}-X_{ij}\vct{\theta}_{i}^{t-1})^2}{2(\sigma_{x,i}^{2,t-1})^2}\right)$
		\STATE Update global variance: $\sigma^{2,t}_{\theta,i}\gets\sigma^{2,t-1}_{\theta,_i}-\eta\left(\frac{d}{2\sigma_{\theta,i}^{2,t-1}}-\frac{\|\vct{\mu}_i^{t-1}-\vct{\theta}^{t-1}_i\|^2}{2(\sigma_{\theta,i}^{2,t-1})^2}\right)$
		\ENDFOR
		\STATE \textbf{At the Server:}
		\STATE Aggregate mean: $\vct{\mu}^{t}=\frac{1}{m}\sum_{i=1}^{m}\vct{\mu}_i^t$
		\STATE Aggregate global variance: $\sigma^{2,t}_{\theta}= \frac{1}{m}\sum_{i=1}^{m}\sigma^{2,t}_{\theta,i}$
		\STATE Aggregate local variance: $\sigma^{2,t}_{x}= \frac{1}{m}\sum_{i=1}^{m}\sigma^{2,t}_{x,i}$
		\STATE Broadcast $\vct{\mu}^{t},\sigma^{2,t}_{\theta},\sigma^{2,t}_{x}$ 
		\ENDFOR
	\end{algorithmic}
	{\bf Output:} Personalized models $\vct{\theta}_1^{T},\ldots,\vct{\theta}_m^{T}$.
\end{algorithm}

\subsection{Logistic Regression}\label{sec:log-reg}
As described in Section~\ref{sec:learning}, by taking $\bbP(\Gamma)\equiv\calN(\vct{\mu},\sigmatheta^2\bbI_d)$ and $p_{\theta_i}(Y_{ij}|X_{ij})=\sigma(\langle \vct{\theta}_i, X_{ij} \rangle)^{Y_{ij}}(1-\sigma(\langle \vct{\theta}_i, X_{ij} \rangle))^{(1-Y_{ij})}$, where $\sigma(z)=\nicefrac{1}{1+e^{-z}}$ for any $z\in\bbR$, then the overall optimization problem becomes:

\begin{align}\label{eqn:logistic-regression}
	&\argmin_{\{\vct{\theta}_i\},\vct{\mu}, \sigmatheta}\sum_{i=1}^{m}\sum_{j=1}^{n}\left[Y_{ij}\log\left(\frac{1}{1+e^{-\langle\vct{\theta}_i,X_{ij}\rangle}}\right) + (1-Y_{ij})\log\left(\frac{1}{1+e^{\langle\vct{\theta}_i,X_{ij}\rangle}}\right)\right] \notag \\
	&\hspace{5cm} +\frac{md}{2}\log(2\pi\sigma_{\theta}^2)+\sum_{i=1}^{m}\frac{\|\vct{\mu}-\vct{\theta}_i\|_2^2}{2\sigma_{\theta}^2}.
\end{align}

When $\vct{\mu}$ and $\sigma_{\theta}^2$ are unknown, we would like to learn them by gradient descent, as in the linear regression case. The corresponding algorithm is described in Algorithm~\ref{algo:personalized_logsitic_regression}.

\begin{algorithm}[H]
	\caption{Logistic Regression GD}
	{\bf Input:} Number of iterations $T$, local datasets $(Y_i, X_i)$ for $i\in[m]$, learning rate $\eta$.\\
	\vspace{-0.3cm}
	\begin{algorithmic}[1] 	\label{algo:personalized_logsitic_regression}
		\STATE \textbf{Initialize} $\vct{\theta}_i^{0}$  for $i\in[m]$, $\vct{\mu}^0$, $\sigma_{\theta}^{2,0}$.
		\FOR{$t=1$ \textbf{to} $T$} 
		\STATE \textbf{On Clients:}
		\FOR {$i=1$ \textbf{to} $m$:}
		\STATE Receive $(\vct{\mu}^{t},\sigma^{2,t}_{\theta})$ from the server and set $\vct{\mu}_{i}^t := \vct{\mu}^{t},\sigma^{2,t}_{\theta,i}:=\sigma^{2,t}_{\theta}$
		\STATE Update the personalized model: \[\vct{\theta}_i^{t}\gets\vct{\theta}_{i}^{t-1}-\eta\left(\sum_{j=1}^{n}\nabla_{\theta_i^{t-1}}l_{CE}^{(p)}(\vct{\theta}_i^{t-1}, (X_i^j,Y_i^j))+\frac{\vct{\mu}_i^{t-1}-\vct{\theta}_i^{t-1}}{\sigma_{\theta,i}^{2,t-1}}\right),\]
		where $l_{CE}^{(p)}$ denotes the cross-entropy loss.
		\STATE Update local version of mean: $\vct{\mu}_i^{t}\gets\vct{\mu}_i^{t-1}-\eta\left(\frac{\vct{\mu}_i^{t-1}-\vct{\theta}_i^{t-1}}{\sigma_{\theta,_i}^{2,t-1}}\right)$
		\STATE Update global variance: $\sigma^{2,t}_{\theta,_i}\gets\sigma^{2,t-1}_{\theta,_i}-\eta\left(\frac{d}{2\sigma_{\theta,i}^{2,t-1}}-\frac{\|\vct{\mu}_i^{t-1}-\vct{\theta}^{t-1}_i\|^2}{2(\sigma_{\theta,i}^{2,t-1})^2}\right)$
		\STATE Send $(\vct{\mu}_i^t,\sigma_{\theta,i}^{2,t})$ to the server
		\ENDFOR
		\STATE \textbf{At the Server:}
		\STATE Receive $\{(\vct{\mu}_i^t,\sigma_{\theta,i}^{2,t})\}$ from the clients
		\STATE Aggregate mean: $\vct{\mu}^{t}=\frac{1}{m}\sum_{i=1}^{m}\vct{\mu}_i^t$
		\STATE Aggregate global variance: $\sigma^{2,t}_{\theta}= \frac{1}{m}\sum_{i=1}^{m}\sigma^{2,t}_{\theta,i}$
		\STATE Broadcast $(\vct{\mu}^{t},\sigma^{2,t}_{\theta})$ to all clients
		\ENDFOR
	\end{algorithmic}
	{\bf Output:} Personalized models $\vct{\theta}_1^{T},\ldots,\vct{\theta}_m^{T}$.
\end{algorithm}

\subsection{Gaussian Mixture Prior}\label{sec:gauss-mixture}

When we take $\bbP(\Gamma)\equiv\mathsf{GM}(\{p_l\}_{l=1}^{k},\{\vct{\mu_l}\}_{l=1}^{k},\{\sigma_{\theta,l}^2\}_{l=1}^{k})$ and $p_{\theta_i}(Y_{ij}|X_{ij})=\sigma(\langle \vct{\theta}_i, X_{ij} \rangle)^{Y_{ij}}(1-\sigma(\langle \vct{\theta}_i, X_{ij} \rangle))^{(1-Y_{ij})}$, where $\sigma(z)=\nicefrac{1}{1+e^{-z}}$ for any $z\in\bbR$, then the overall optimization problem becomes:

\begin{align}\label{eqn:gmm-log-regression}
	&\argmin_{\{\vct{\theta}_i\},\{\vct{\mu}_l\},\{p_l\}, \{\sigma_{\theta,l}\}}\sum_{i=1}^{m}\sum_{j=1}^{n}\left[Y_{ij}\log\left(\frac{1}{1+e^{-\langle\vct{\theta}_i,X_{ij}\rangle}}\right) + (1-Y_{ij})\log\left(\frac{1}{1+e^{\langle\vct{\theta}_i,X_{ij}\rangle}}\right)\right] \notag \\
	&\hspace{5cm} +\sum_{i=1}^m\log(\sum_{l=1}^k\exp(\frac{\|\vct{\mu}_l-\vct{\theta}_i\|_2^2}{2\sigma_{\theta,l}^2})/((2\pi\sigma_{\theta,l})^{md/2}))
\end{align}

This can easily be extended to a generic neural network loss function with multi-class softmax output layer and cross entropy loss, defining such a local loss function in client $i$ as $f_i(\vct{\theta}_i)$ (omitted dependence on data samples) we obtain:

\begin{align}\label{eqn:gmm-nn-loss}
	\argmin_{\{\vct{\theta}_i\},\{\vct{\mu}_l\},\{p_l\}, \{\sigma_{\theta,l}\}}\sum_{i=1}^{m} f_i(\vct{\theta}_i) 
	 +\sum_{i=1}^m\log(\sum_{l=1}^k\exp(\frac{\|\vct{\mu}_l-\vct{\theta}_i\|_2^2}{2\sigma_{\theta,l}^2})/((2\pi\sigma_{\theta,l})^{md/2}))
\end{align}

To solve this problem we can either use an alternating gradient descent approach as in Algorithm~\ref{algo:personalized_regression},\ref{algo:personalized_logsitic_regression} or we can use a clustering based approach where the server runs a clustering algorithm on received personalized models. Here we describe the second one as it provides an interesting point of view and can be combined with DP clustering algorithms. As a result, we propose Algorithm~\ref{algo:cluster_gmm}.

\begin{algorithm}[h]
	\caption{Personalized Learning with Gaussian Mixture Prior}
	{\bf Input:} Number of iterations $T$, local datasets $(X_i,Y_i)$ for $i\in[m]$, learning rate $\eta$.\\
	\vspace{-0.3cm}
	\begin{algorithmic}[1] 	\label{algo:cluster_gmm}
		\STATE \textbf{Initialize} $\vct{\theta}_i^{0}$  for $i\in[m]$ and $\bbP^{(0)},\vct{\mu}_1^{(0)},\ldots,\vct{\mu}_k^{(0)}$.
		\FOR{$t=1$ \textbf{to} $T$}
		\STATE {\bf On Clients:}
		\FOR {$i=1$ \textbf{to} $m$:}
		\STATE Receive $\bbP^{(t-1)},\vct{\mu}_1^{(t-1)},\ldots,\vct{\mu}_k^{(t-1)},\sigma_{\theta,1}^{(t-1)},\ldots,\sigma_{\theta,k}^{(t-1)}$ from the server 
		\STATE\label{step:personal_update} Update the personalized parameters:
		\begin{align*}
			\vct{\theta}_i^{t} \gets \vct{\theta}_i^{t-1} - \eta\nabla_{\vct{\theta}_i^{t-1}}\Big[ f_i(\vct{\theta}_i^{t-1}) 
			+\log(\sum_{l=1}^k\exp(\frac{\|\vct{\mu}_l^{(t-1)}-\vct{\theta}_i^{t-1}\|_2^2}{2(\sigma_{\theta,l}^{(t)})^2})/((2\pi\sigma_{\theta,l}^{(t)})^{md/2})) \Big]
		\end{align*}
		\STATE Send $\vct{\theta}_i^{(t)}$ to the server
		\ENDFOR
		\STATE {\bf At the Server:}
		\STATE Receive $\vct{\theta}_1^{(t)},\ldots,\vct{\theta}_m^{(t)}$ from the clients
		\STATE\label{step:clustering} Update the global parameters:
		$\bbP^{(t)},\vct{\mu}_1^{(t)},\ldots,\vct{\mu}_k^{(t)},\sigma_{\theta,1}^{(t)},\ldots,\sigma_{\theta,k}^{(t)}\gets \mathsf{Cluster}\left(\vct{\theta}_1^{(t)},\ldots,\vct{\theta}_m^{(t)},k\right),$
		\STATE Broadcast $\bbP^{(t)},\vct{\mu}_1^{(t)},\ldots,\vct{\mu}_k^{(t)}$ to all clients
		\ENDFOR
	\end{algorithmic}
	{\bf Output:} Personalized models $\vct{\theta}_1^{T},\ldots,\vct{\theta}_m^{T}$.
\end{algorithm}

\paragraph{Description of Algorithm~\ref{algo:cluster_gmm}.} Here clients receive the global parameters from the server and do a local iteration on the personalized model (multiple local iterations can be introduced as in FedAvg), later the clients broadcast the personalized models. Receiving the personalized models, server initiates a clustering algorithm that outputs global parameters. In general server is assumed to have vast amount of computational resources which makes running clustering algorithm feasible.

\paragraph{Adding DP.} DP clustering algorithms such as \cite{ghazi2020differentially} can be utilized to create a DP version of Algorithm~\ref{algo:cluster_gmm}.

A discrete mixture model as in Section~\ref{sec:est_mixture} can be proposed as a special case of GM with 0 variance. With this we can recover a similar algorithm as in \cite{marfoq2021federated}. Further details are presented in Appendix~\ref{app:disc-mixture}.

\subsection{\adaped: Adaptive Personalization via Distillation}\label{sec:adaped}
\begin{algorithm}[h]
	\caption{Adaptive Personalization via Distillation (\adaped)}
	{\bf Parameters:} local variances $\{\psi_i^{0}\}$, personalized models $\{\vct{\theta}_i^{0}\}$, local copies of the global model $\{\vct{\mu}_i^0\}$, synchronization gap $\tau$, learning rates $\eta_1,\eta_2,\eta_3$, number of sampled clients $K$. 
	\begin{algorithmic}[1] \label{algo:personalized}
		\FOR{$t=0$ \textbf{to} $T-1$}
		\IF{$\tau$ divides $t$}
		\STATE \textbf{On Server do:}\\
		\STATE Choose a subset $\mathcal{K}^t \subseteq [n]$ of $K$ clients \\
		\STATE Broadcast $\vct{\mu}^t$ and $\psi^{t}$
		\STATE \textbf{On Clients} $i\in\mathcal{K}^t$ (in parallel) \textbf{do}:\\
		\STATE Receive $\vct{\mu}^t$ and $\psi^{t}$; set $\vct{\mu}_i^t = \vct{\mu}^t$, $\psi_i^{t} = \psi^{t}$
		\ENDIF	
		\STATE \textbf{On Clients} $i\in\mathcal{K}^t$ (in parallel) \textbf{do}:
		
		\STATE Compute $\bg_{i}^{t} := \nabla_{\vct{\theta}_{i}^{t}} f_i(\vct{\theta}_{i}^{t}) +  \frac{\nabla_{\vct{\theta}_{i}^{t}}f^{\KD}_i(\vct{\theta}_{i}^{t}, \vct{\mu}_{i}^{t})}{2\psi_i^{t}}  $
		\STATE Update: $\vct{\theta}_{i}^{t+1}=\vct{\theta}_{i}^{t} - \eta_1 \bg_{i}^{t}$\\
		\STATE  Compute $\bh_{i}^{t} := \frac{\nabla_{\vct{\mu}_{i}^{t}}f^{\KD}_i(\vct{\theta}_{i}^{t+1}, \vct{\mu}_{i}^{t})}{2\psi_i^{t}}$ \\
		\STATE Update: $\vct{\mu}_{i}^{t+1} = \vct{\mu}_{i}^{t}-\eta_2\bh_{i}^{t}  $\\
		\STATE  Compute $k_{i}^{t} := \frac{1}{2\psi_i^{t}}-\frac{f^{\KD}_i(\vct{\theta}_{i}^{t+1}, \vct{\mu}_{i}^{t+1})}{2(\psi_i^{t})^2}$ \\
		\STATE Update: $\psi_i^{t+1}=\psi_i^{t}-\eta_3k_i^t$ 
		\IF{$\tau$ divides $t+1$}
		\STATE Clients send $\vct{\mu}_{i}^{t}$ and $\psi_i^{t}$ to \textbf{Server}
		\STATE Server receives $\{\vct{\mu}_{i}^{t}\}_{i\in\mathcal{K}^t}$ and $\{\psi_i^{t}\}_{i\in\mathcal{K}^t}$ \\
		\STATE Server computes $\vct{\mu}^{t+1} = \frac{1}{K} \sum_{i\in\mathcal{K}^t} \vct{\mu}_i^{t}$ and $\psi^{t+1} = \frac{1}{K} \sum_{i\in\mathcal{K}^t} \psi_i^{t}$\\
		\ENDIF
		\ENDFOR
	\end{algorithmic}
	{\bf Output:} Personalized models $(\vct{\theta}_i^{T})_{i=1}^m$
\end{algorithm}

It has been empirically observed that the knowledge distillation (KD) regularizer (between local and global models) results in better performance \cite{ozkara2021quped} than the $\ell_2$ regularizer. 
In fact, using our generative framework, we can define a certain prior distribution that gives the KD regularizer (details are provided in Section~\ref{app:adaped}). The specific form of the regularizer that we use in the loss function at the $i$-th client is the following:

\begin{align}
f_i(\vct{\theta}_i)+\frac{1}{2}\log(2\psi)+\frac{f^{\KD}_i(\vct{\theta}_i,\vct{\mu})}{2\psi} \label{loc_loss}
\end{align}

where $\vct{\mu}$ denotes the global model, $\vct{\theta}_i$ denotes the personalized model at client $i$, 
and $\psi$ can be viewed as controlling heterogeneity. Note that the goal for each client is to minimize its local loss function, so individual components cannot be too large. For the second term, this implies that $\psi$ cannot be unbounded. For the third term, if $f^{\KD}_i(\vct{\theta}_i,\vct{\mu})$ is large, then $\psi$ will also increase (implying that the local parameters are too deviated from the global parameter), hence, it is better to emphasize local training loss to make the first term small. If $f^{\KD}_i(\vct{\theta}_i,\vct{\mu})$ is small, then $\psi$ will also decrease (implying that the local parameters are close to the global parameter), so it is better for clients to collaborate and learn better personalized models.
To optimize \eqref{loc_loss} we propose an alternating minimization approach, which we call \adaped\ and is presented in Algorithm~\ref{algo:personalized}.

\paragraph{Description of \adaped.} Besides the personalized model $\vct{\theta}_i^t$, each client $i$ keeps local copies of the global model $\vct{\mu}_i^t$ and of the variance $\psi_i^{t}$, and at synchronization times, server aggregates them to obtain global versions of these $\vct{\mu}^t,\psi^t$. At each iteration $t$ divisible by synchronization gap $\tau$, the server samples a subset $\mathcal{K}_t \subseteq [m]$ of $K\leq m$ clients and broadcasts $\vct{\mu}^t,\psi^t$ to the selected clients, who set their set $\vct{\mu}_i^t=\vct{\mu}^t$ and $\psi_i^t=\psi^t$. When $t+1$ is not divisible by $\tau$, if $i\in\mathcal{K}_t$, client $i$ updates $\vct{\theta}_i^t,\vct{\mu}_i^t,\psi_i^t$ by taking gradients of the local loss function w.r.t.\ the respective parameters (lines $11,13,15$).
Thus, the local training of $\vct{\theta}_i^t$ also incorporates knowledge from other clients' data through $\vct{\mu}_i^{t}$.
When $t+1$ is divisible by $\tau$, sampled clients upload $\{\vct{\mu}_i^{t},\psi_{i}^{t}:i\in\mathcal{K}_t\}$ to the server which aggregates them (lines $18,19$).
At the end of training, clients have learned their personalized models $\{\vct{\theta}_i^{T}\}_{i=1}^{m}$. 

\subsection{\dpadaped: Differentially Private Adaptive Personalization via Distillation}\label{sec:dp-adaped}
Note that client $i$ communicates $\vct{\mu}_i^t,\psi_i^t$ (which are updated by accessing the dataset for computing the gradients $\bh_i^t,k_i^t$) to the server. So, to privatize $\vct{\mu}_i^t,\psi_i^t$, client $i$ adds appropriate noise to $\bh_i^k,k_i^t$, respectively. In order to obtain \dpadaped, we replace lines 13 and 15 by the update rules:
%
\begin{align*}
\text{Line }13:\qquad & \vct{\mu}_{i}^{t+1} = \vct{\mu}_{i}^{t}-\eta_2 \Big(\frac{\bh_i^t}{\max\lbrace \|\bh_i^t\|/C_1,1\rbrace} +\bnu_1\Big), \\
\text{Line }15:\qquad & \psi_i^{t+1}=\psi_i^{t}-\eta_3 \Big(\frac{k_i^t}{\max\lbrace |k_i^t|/C_2,1\rbrace} +\nu_2\Big),
\end{align*}
where $\bnu_1\sim\calN(0,\sigma_{q_1}^2\bbI_d)$ and $\nu_2\sim\calN(0,\sigma_{q_2}^2)$, for some $\sigma_{q_1},\sigma_{q_2}>0$ that depend on the desired privacy level and $C_1,C_2$, which are some predefined constants.

In the following theorem, we state the R\'enyi Differential Privacy (RDP) guarantees of \dpadaped. 

\begin{theorem}\label{thm:privacy_personal}
After $T$ iterations, \emph{\dpadaped}\ satisfies $(\alpha,\epsilon(\alpha))$-RDP, where $\epsilon(\alpha)=\left(\frac{K}{m}\right)^26\left(\frac{T}{\tau}\right)\alpha \left(\frac{C_1^2}{K\sigma_{q_1}^2}+\frac{C_2^2}{K\sigma_{q_2}^2}\right)$ for $\alpha>1$, where $\frac{K}{m}$ denotes the sampling ratio of the clients at each global iteration. 
\end{theorem}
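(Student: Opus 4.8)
From a privacy standpoint, the only data‑dependent information that leaves the clients is what gets uploaded at the $T/\tau$ synchronization rounds, and the quantity against which privacy is measured is the sequence of aggregated global parameters $\{(\vct{\mu}^{t},\psi^{t})\}$ released each round (equivalently, the clients' clipped, noised uploads are combined—e.g.\ via secure aggregation—before being exposed); the aggregates $\vct{\mu}^t,\psi^t$ and the final personalized models $\vct{\theta}_i^{T}$ are all post‑processings of these releases. So it suffices to bound, in the RDP sense and for the user‑level neighboring relation (replace one client's entire dataset), the privacy of the adaptively composed mechanism $\mathcal{M}=(\mathcal{M}_1,\dots,\mathcal{M}_{T/\tau})$, where $\mathcal{M}_r$ is the $r$‑th communication round. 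The plan is: (i) view each $\mathcal{M}_r$ as a \emph{subsampled Gaussian mechanism} and bound its RDP; (ii) apply adaptive RDP composition over the $T/\tau$ rounds.

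\textbf{One round.} At the start of round $r$ the server draws a uniformly random subset $\mathcal{K}^t$ of $K$ of the $m$ clients—subsampling without replacement at rate $q=K/m$—each sampled client $i$ draws fresh noise $\bnu_1\sim\calN(0,\sigma_{q_1}^2\bbI_d)$ and $\nu_2\sim\calN(0,\sigma_{q_2}^2)$, performs its clipped update of $\vct{\mu}_i,\psi_i$, and uploads them; the server releases $\vct{\mu}^{t+1}=\frac1K\sum_{i\in\mathcal{K}^t}\vct{\mu}_i$ and $\psi^{t+1}=\frac1K\sum_{i\in\mathcal{K}^t}\psi_i$. Conditioned on $\mathcal{K}^t$, and since the gradients feeding $\vct{\mu}_i$ (resp.\ $\psi_i$) are clipped to $\ell_2$‑norm $C_1$ (resp.\ magnitude $C_2$), changing one client's dataset changes the $\vct{\mu}$‑query by at most $2C_1/K$ and the $\psi$‑query by at most $2C_2/K$, while the noise carried into the released averages is $\calN(0,(\sigma_{q_1}^2/K)\bbI_d)$ (resp.\ $\calN(0,\sigma_{q_2}^2/K)$). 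By the Gaussian‑mechanism RDP bound (order‑$\alpha$ RDP of a sensitivity‑$s$ query perturbed by $\calN(0,\varsigma^2)$ is $\alpha s^2/2\varsigma^2$) and RDP composition across the two coordinates, the \emph{un}subsampled round is $\big(\alpha,\ \tfrac{2\alpha}{K}(\tfrac{C_1^2}{\sigma_{q_1}^2}+\tfrac{C_2^2}{\sigma_{q_2}^2})\big)$‑RDP. Feeding this into the RDP amplification‑by‑subsampling bound for sampling without replacement at rate $q=K/m$ (which, in the regime where $\sigma_{q_1},\sigma_{q_2}$ are not too small and $\alpha$ is not too large, contributes a $q^2$ factor together with an absolute constant $\le 3$) shows $\mathcal{M}_r$ is $\big(\alpha,\ 6\,(K/m)^2\,\alpha\,(\tfrac{C_1^2}{K\sigma_{q_1}^2}+\tfrac{C_2^2}{K\sigma_{q_2}^2})\big)$‑RDP.

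\textbf{Composition.} The $T/\tau$ rounds are an adaptively chosen sequence (round $r$'s sampling and noise may depend on $\mathcal{M}_1,\dots,\mathcal{M}_{r-1}$), and order‑$\alpha$ RDP adds under composition; summing the per‑round bound over the $T/\tau$ rounds gives $\epsilon(\alpha)=6\,(K/m)^2\,(T/\tau)\,\alpha\,\big(\tfrac{C_1^2}{K\sigma_{q_1}^2}+\tfrac{C_2^2}{K\sigma_{q_2}^2}\big)$, which is exactly the claimed bound.

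\textbf{Main obstacle.} The delicate point is the bookkeeping inside a synchronization period: as written, each sampled client performs $\tau$ successive gradient updates of $\vct{\mu}_i$ and $\psi_i$ before uploading, each injecting independent noise and each touching the local data, so a literal sensitivity/variance accounting for the round picks up $\tau$‑dependent factors that only partially cancel. Matching the stated bound—which has $T/\tau$, not $T$—requires arguing that what is effectively released per round is governed by a single noisy aggregated update (e.g.\ because the clipping is applied per step to gradients of bounded norm and the privacy‑relevant object is the \emph{aggregated, noised} update, or via an amplification‑by‑iteration argument), and then carefully tracking how the $1/K$ from server‑side averaging, the $(K/m)^2$ from client subsampling, and the $\alpha$ from the Gaussian mechanism combine. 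A secondary point is justifying the explicit constant $6$: this needs the precise subsampled‑Gaussian RDP bound together with its validity conditions on $\sigma_{q_1},\sigma_{q_2}$ and on the range of $\alpha$, and a careful treatment of the replace‑one‑client sensitivity factor of $2$.
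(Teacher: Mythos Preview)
Your high-level strategy matches the paper's: treat each synchronization round as a subsampled Gaussian mechanism on the pair $(\vct{\mu},\psi)$ and compose adaptively over the $T/\tau$ rounds. The genuine gap in your write-up is precisely what you flagged under ``Main obstacle'': you do not actually resolve how the $\tau$ local iterations per round leave the per-round RDP $\tau$-free. The paper's proof fills this in as follows. Over the $\tau$ local steps it bounds the squared $\ell_2$ sensitivity of the server aggregate by $\|\vct{\mu}^{t+1}-\vct{\mu}'^{t+1}\|_2^2 \leq \tau C_1^2/K^2$, and observes that the $\tau$ independent per-step Gaussian noises (each $\calN(0,\sigma_{q_1}^2\bbI_d)$, summed over the local steps and then averaged over the $K$ sampled clients) are equivalent to a single Gaussian on the aggregate with variance $\tau\sigma_{q_1}^2/K$. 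The $\tau$'s in the sensitivity-to-variance ratio cancel, so the per-round RDP is $\tau$-free; composing over the $T/\tau$ synchronization rounds then yields the stated $T/\tau$ dependence. This is the missing argument, in place of the amplification-by-iteration detour you speculated about.

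On the bookkeeping of constants: the paper does not decompose the $6$ as ``$2$ from replace-one-user sensitivity $\times$ $3$ from subsampling'' the way you do. It simply feeds the sensitivity/noise pair above into the subsampled-Gaussian RDP bounds from the literature and arrives directly at $(K/m)^2\cdot 6\alpha C_1^2/(K\sigma_{q_1}^2)$ for $\vct{\mu}$ (and analogously for $\psi$), then composes the two and the $T/\tau$ rounds via RDP composition. Your instinct that the $\tau$ accounting is the delicate point is well placed, however: the paper's squared-sensitivity bound $\tau C_1^2/K^2$ amounts to the $\tau$ per-step clipped-gradient differences combining in $\ell_2$ (so that the $\ell_2$ sensitivity is $\sqrt{\tau}C_1/K$), which is stronger than the worst-case triangle-inequality bound $\tau C_1/K$ one would expect from $\tau$ adaptive clipped updates; this step is worth scrutinizing if you want a fully rigorous version of the argument.
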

A proof of Theorem~\ref{thm:privacy_personal} is presented in Section~\ref{app:dp-adaped}. 
Observe that here we bound the RDP, as it gives better privacy composition than using the strong composition~\cite{GirgisDDSK_RDP_Shuffle_CCS21}. We can also convert our results to user-level $(\epsilon,\delta)$-DP by using the standard conversion from RDP to approximate $(\epsilon,\delta)$-DP \cite{canonne2020discrete}.

\subsection{Connecting to Existing Methods}\label{sec:unification}
In Section \ref{sec:intro}, we alluded to connections with other
personalized FL methods. Here we provide a more detailed discussion.

\paragraph{Regularization:} As noted earlier using
\eqref{eq:learning-log-generative-model} with the Gaussian population
prior connects to the use of $\ell_2$ regularizer in earlier
personalized learning works
\cite{dinh2020personalized,ozkara2021quped,hanzely2020federated,hanzely2020lower,li2021ditto},
which also iterates between local and global model estimates. This form can be explicitly seen in Section~\ref{sec:lin-reg}, where in Algorithm \ref{algo:personalized_regression}, we see that the Gaussian prior along with iterative optimization yields the regularized form seen in these methods. In these cases\footnote{One can generalize these by including $\sigmatheta^2$ in the unknown parameters.}, $\bbP(\Gamma)\equiv\calN(\vct{\mu},\sigmatheta^2\bbI_d)$ for unknown parameters $\Gamma=\{\vct{\mu}\}$. Note that since the parameters of the population distribution are unknown, these need to be estimated during the iterative learning process. In the algorithm, \ref{algo:personalized_regression} it is seen the $\vct{\mu}$ plays the role of the global model (and is truly so for the linear regression problem studied in  Section~\ref{sec:lin-reg}). 

\paragraph{Clustered FL:} If one uses a \emph{discrete} mixture
model for the population distribution then the iterative algorithm
suggested by our framework connects to
\cite{zhang2021personalized,mansour2020approaches,ghosh2020efficient,smith2017federated,marfoq2021federated}. In
particular, consider a population model with parameters in the $m$-dimensional probability simplex
$\{\vct{\alpha}:\vct{\alpha}=[\alpha_1,\ldots,\alpha_k], \alpha_i\geq
0, \forall i, \sum_i\alpha_i=1\}$ which describing a distribution. If
there are $m$ (unknown) discrete distributions
$\{\mathcal{D}_1,\ldots,\mathcal{D}_m\}$, one can consider these as
the unknown description of the population model in addition to
$\vct{\alpha}$. Therefore, each local data are generated either as a
mixture as in \cite{marfoq2021federated} or by choosing one of the
unknown discrete distributions with probability $\vct{\alpha}$
dictating the probability of choosing $\mathcal{D}_i$, when hard
clustering is used (\emph{e.g.,} \cite{mansour2020approaches}). Each
node $j$ chooses a mixture probability $\vct{\alpha}^{(j)}$ uniformly
from the $m$-dimensional probability simplex. In the former case, it
uses this mixture probability to generate a local mixture
distribution. In the latter it chooses $\mathcal{D}_i$ with
probability $\vct{\alpha}^{(j)}_i$.

As mentioned earlier, not all parametrized distributions can be
written as a mixture of \emph{finite} number distributions, which is
the assumption for discrete mixtures. Consider a unimodal Gaussian
population distribution (as also studied in Section
\ref{sec:lin-reg}). Since
$\bbP(\Gamma)\equiv\calN(\vct{\mu},\sigmatheta^2\bbI_d)$, for node
$i$, we sample $\vct{\theta}_i\sim \bbP(\Gamma)$. We see that the
actual data distribution for this node is
$p_{\vct{\theta}_i}(y|\vct{x})=\calN(\vct{\theta_i}^{\top}\vct{x},\sigma^2)$. Clearly
the set of such distributions $\{p_{\vct{\theta}_i}(y|\vct{x})\}$
\emph{cannot} be written as any finite mixture as
$\vct{\theta_i}\in\mathbb{R}^d$ and $p_{\vct{\theta}_i}(y|\vct{x})$ is
a unimodal Gaussian distribution, with same parameter $\vct{\theta_i}$
for all data generated in node $i$. Essentially the generative
framework of finite mixtures (as in \cite{marfoq2021federated}) could
be restrictive as it does not capture such parametric models.

\paragraph{Knowledge distillation:} The population distribution
related to a regularizer based on Kullback-Leibler divergence
(knowledge distillation) has been shown in Section
\ref{app:adaped}. Therefore this can be cast in terms of information
geometry where the probability falls of exponentially with in this
geometry. Hence these connect to methods such as
\cite{lin2020ensemble,li2019fedmd,shen2020federated,ozkara2021quped},
but the exact regularizer used does not take into account the full
parametrization, and one can therefore improve upon these methods.

\paragraph{FL with Multi-task Learning (MTL):} In this framework,
a \emph{fixed} relationship between tasks is usually assumed
\cite{smith2017federated}. Therefore one can model this as a Gaussian
model with \emph{known} parameters relating the individual models. The
individual models are chosen from a joint Gaussian with particular
(known) covariance dictating the different models, and therefore
giving the quadratic regularization used in FL-MTL
\cite{smith2017federated}. In this the parameters of the Gaussian
model are known and fixed.

\paragraph{Common representations:} The works in
\cite{du2021fewshot,prateek2021differentially} use a linear model
where $y\sim\mathcal{N}(\vct{x}^{\top}\vct{\theta}_i,\sigma^2)$ can be
considered a local generative model for node $i$. The common
representation approach assumes that $\vct{\theta}_i=\sum_{j=1}^k
\mat{B}\vct{w}_j^{(i)}$, for some $k\ll d$, where
$\vct{\theta}_i\in\mathbb{R}^d$. Therefore, one can parametrize a
population by this (unknown) common basis 
$\mat{B}$, and under
a mild assumption that the weights are bounded, we can choose a
uniform measure in this bounded cube to choose $\vct{w}^{(i)}$ for
each node $i$. The alternating optimization iteratively discovers the
global common representation and the local weights as done in
\cite{du2021fewshot,prateek2021differentially} (and references
therein).

	\section{Experiments} \label{sec:experiments}

\textbf{Personalized estimation.}
We run two types of experiments, one with synthetic data and one with real world data. Here we report {\sf (i)} Bernoulli setting experiment with synthetic data, and {\sf (ii)} Bernoulli setting experiment with US county level election data.

$\bullet$ \textbf{Synthetic experiments in Bernoulli setting.} For this setting, for $\bbP$  we consider three distributions that \cite{tian2017learning} considered: normal, uniform and `3-spike' which have equal weight at 1/4, 1/2, 3/4. Additionally, we consider a $\Beta$ prior. We compute squared error of personalized estimators and local estimators ($\frac{Z_i}{n}$) w.r.t.\ the true $p_i$ and report the average over all clients. We use $m=10000$ clients and $14$ local samples similar to \cite{tian2017learning}. Personalized estimator provides a decrease in MSE by $37.1\pm3.9\%, 12.0\pm1.6\%, 24.3\pm2.8\%, 34.0\pm3.7\%$, respectively, for each aforementioned population distribution compared to their corresponding local estimators. Furthermore, as theoretically noted, less spread out prior distributions (low data heterogeneity) results in higher MSE gap between personalized and local estimators.

$\bullet$ \textbf{Political tendencies on county level.} One natural application of Bernoulli setting is modelling bipartisan elections \cite{tian2017learning}. In this experiment we did a case study by using US presidential elections on county level between 2000-2020 (there are 3112 counties in our dataset, $m=3112$) . For each county the goal is to determine the political tendency parameter $p_i$.  Given 6 election data we did 1-fold cross validation. Local estimator is taking the average of 5 training samples and personalized estimator is the aforementioned posterior mean. We use 5 elections as training data, to simulate a Bernoulli setting we set the data equal to 1 if Republican party won the election and 0 otherwise; and 1 election as test data to measure the MSE of the estimator. As a result, we observe that personalized estimator provides an MSE (averaged over 6 runs) gain of $10.7\pm1.9\%$  compared to local estimator .    

\textbf{Linear regression.}
For this, we create a setting similar to \cite{jain2021differentially}. We set $m=10,000$, $n=10$; and sample client true models according to a Gaussian centered at some randomly chosen $\mu$ with variance $\sigma_\theta^2$. We randomly generate design matrices $X_i$ and create $Y_i$ at each client by adding a zero mean Gaussian noise with true variance $\sigma_x^2$ to $X_i\theta_i$. We measure the average MSE over all clients with and compare personalized and local methods. When $d=50$, personalized regression has an MSE gain of $8.0\pm0.8\%,14.8\pm1.2\%$, and when $d=100$, $9.2\pm1.1\%, 12.3\pm2.0\%$ compared to local and FedAvg regression, respectively. Moreover, compared to personalized regression where $\mu,\sigma_{\theta},\sigma_x$ are known, alternating algorithm only results in $1\%$ and $4.7\%$ increase in MSE respectively for $d=50$ and $d=100$. The results suggest that personalized regression with unknown generative parameters is able to outperform local and federated approaches.


\textbf{AdaPeD.} First we describe the experiment setting and then the results, for \dpadaped\ as well.

$\bullet$ \textbf{Experiment setting.} We consider image classification tasks on MNIST and FEMNIST~\cite{caldas2018leaf}, and train a CNN, similar to the one considered in \cite{mcmahan2017communicationefficient}, that has 2 convolutional and 3 fully connected layers. We set $m=66$ for FEMNIST and $m=50$ for MNIST. Similar to \cite{ozkara2021quped}, for FEMNIST, we use a subset of 198 writers so that each client has access to data from 3 authors, which results in a natural type of data heterogeneity due to writing styles of authors. Consequently, number of training samples is between 203-336 and test samples is between 25-40. On MNIST we introduce pathological heterogeneity by letting each client sample data from 3 randomly selected classes only. For both experiments we set $\tau=10$ and vary the batch size so that each epoch consists of 60 iterations. On MNIST we run the experiment for 50 epochs, on FEMNIST we run it for 40 and 80 epochs, for $0.33$ and $0.15$ client sampling ratio, respectively. For each method, we tune hyperparameters such as learning rates on a set of candidate values.

\begin{figure}\label{fig:dp-adaped}
	\centering
	\captionof{table}{Test accuracy (in \%) for 5-layer CNN model.}
	\begin{tabular}{lccl} \toprule 
		Method   & MNIST ($\frac{K}{n}=0.1$) & FEMNIST($\frac{K}{n}=0.33$) & FEMNIST($\frac{K}{n}=0.15$) \\ \midrule
		FedAvg  &  $92.93 \pm 0.09 $&  $93.14 \pm 0.15$& $92.18 \pm 0.13
		$ \\ 
		FedAvg+fine tuning \cite{jiang2019improving}  &  $95.42 \pm 0.25 $&  $93.81 \pm 0.30$& $94.12 \pm 0.26
		$ \\ 
		\texttt{AdaPeD} (Ours) & $\mathbf{98.53} \pm 0.11$& $\mathbf{96.88} \pm0.29$ &  $\mathbf{96.55} \pm 0.32$\\
		pFedMe  \cite{dinh2020personalized} & $97.83 \pm 0.04$ & $ 94.03  \pm 0.61$&  $ 94.95  \pm 0.55$  \\
		Per-FedAvg  \cite{fallah2020personalized} &$95.88 \pm 0.27$  & $93.28  \pm 0.41$& $93.51  \pm 0.31$ \\
		QuPeD (FP) \cite{ozkara2021quped}  &$98.14 \pm 0.39$  &$96.11 \pm 0.13$ &  $95.99 \pm 0.08$ \\
		Federated ML  \cite{shen2020federated} & $97.93 \pm 0.33 $ &$94.83 \pm 0.59 $&  $ 95.12  \pm 0.18$
	\end{tabular}
	\label{tab:Table client sampling}
\end{figure}

\begin{figure}
	\centering
	\includegraphics[scale=0.5]{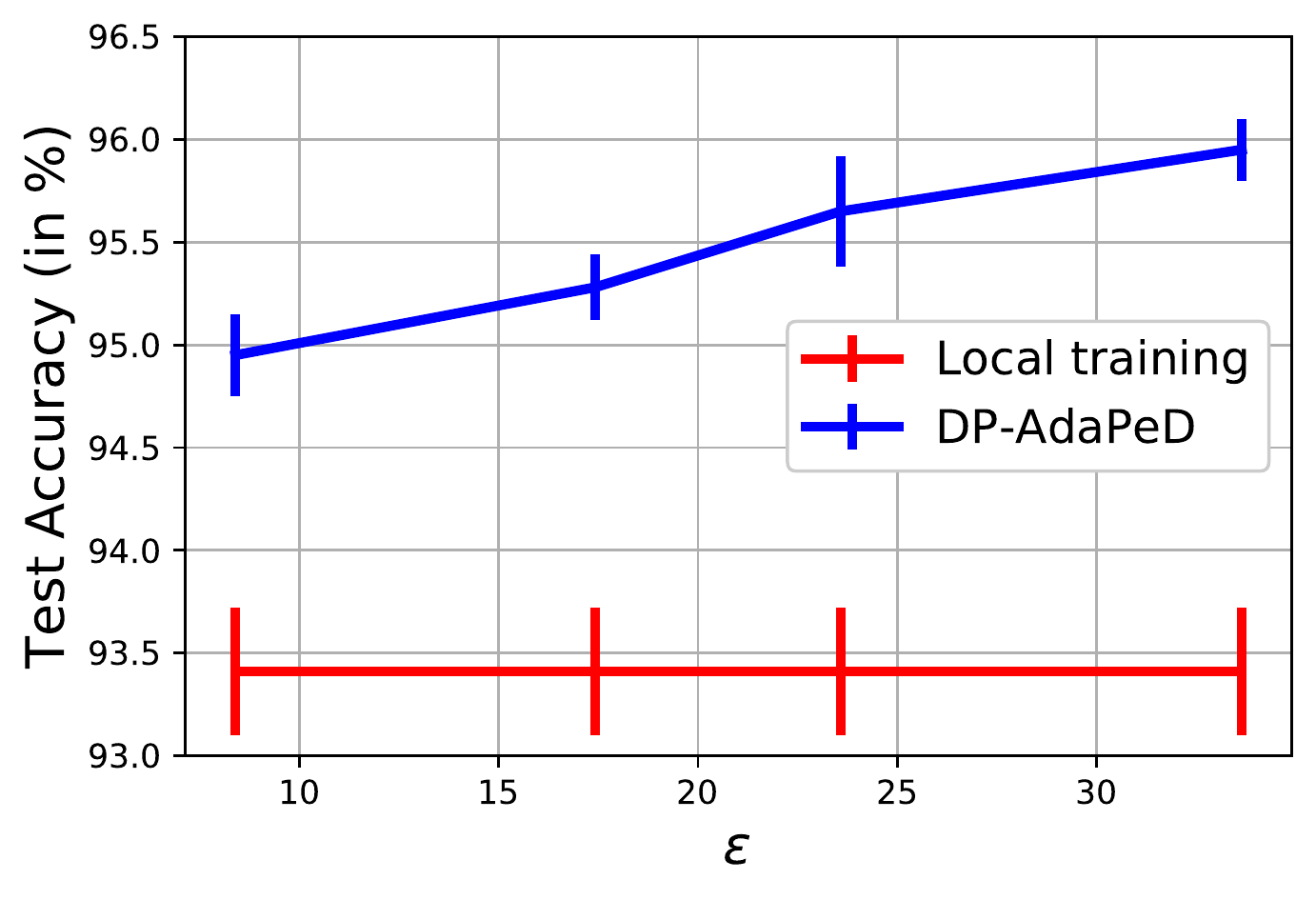}
	\caption{\dpadaped\ Test Accuracy (in \%) vs $\epsilon$ on FEMNIST with client sampling ratio of 0.33. Note that the local training is completely private, however, for convenience, we plot it against $\epsilon$, but it remains constant.}
	\label{fig:dp-adaped}
\end{figure}

$\bullet$ \textbf{Results.} We compare \adaped\ against FedAvg \cite{mcmahan2017communicationefficient}, FedAvg+ \cite{jiang2019improving} and various poersonalized FL algorithms:  pFedMe \cite{dinh2020personalized}, Per-FedAvg \cite{fallah2020personalized}, QuPeD \cite{ozkara2021quped} without model compression, and Federated ML \cite{shen2020federated}. We observe \adaped\ consistently outperforms other methods. It can be seen that methods that use knowledge distillation perform better, on top of this \adaped\ enables us adjust the dependence on collaboration according to the compatibility of global and local decisions/scores. For instance, we set $\sigma_\theta^2$ a certain value initially, and observe as the algorithm runs it decreases which implies clients start to rely on the collaboration more and more. Interestingly, this is not always the case; in particular, for \dpadaped\ we first observe a decrease in $\sigma_\theta^2$ and later the variance estimate increases. This suggests: while there is not much accumulated noise, clients prefer to collaborate; and as the noise accumulation on the global model increases due to DP noise, clients prefer to stop collaboration. This is exactly the type of autonomous behavior that we aimed with adaptive regularization.

$\bullet$ \textbf{DP-AdaPeD.}  In Figure~\ref{fig:dp-adaped}, we observe performance of \dpadaped\ under different $\epsilon$ values. Due to adaptive regularization, clients are able to determine when to collaborate, hence they outperform local training even for low values of $\epsilon$.

\textbf{DP-AdaPeD with local finetuning.} When we employ local finetuning described in Section~\ref{app:adaped}, we can increase AdaPeD's performance under more aggressive privacy constants $\eps$. Another change compared to the result in Section~\ref{sec:experiments} is that now we concatenate the model and $\psi$ and then apply clipping and add noise to the resulting $(d+1)$-dimensional vector. This results in a better performance for relatively low privacy budget. For instance, for FEMNIST with $1/3$ client sampling and the same experimental setting in Section~\ref{sec:experiments} we have the following result in Figure~\ref{fig:dp-adaped-lf}.  

\begin{figure}[H]
	\centering
	\includegraphics[scale=0.5]{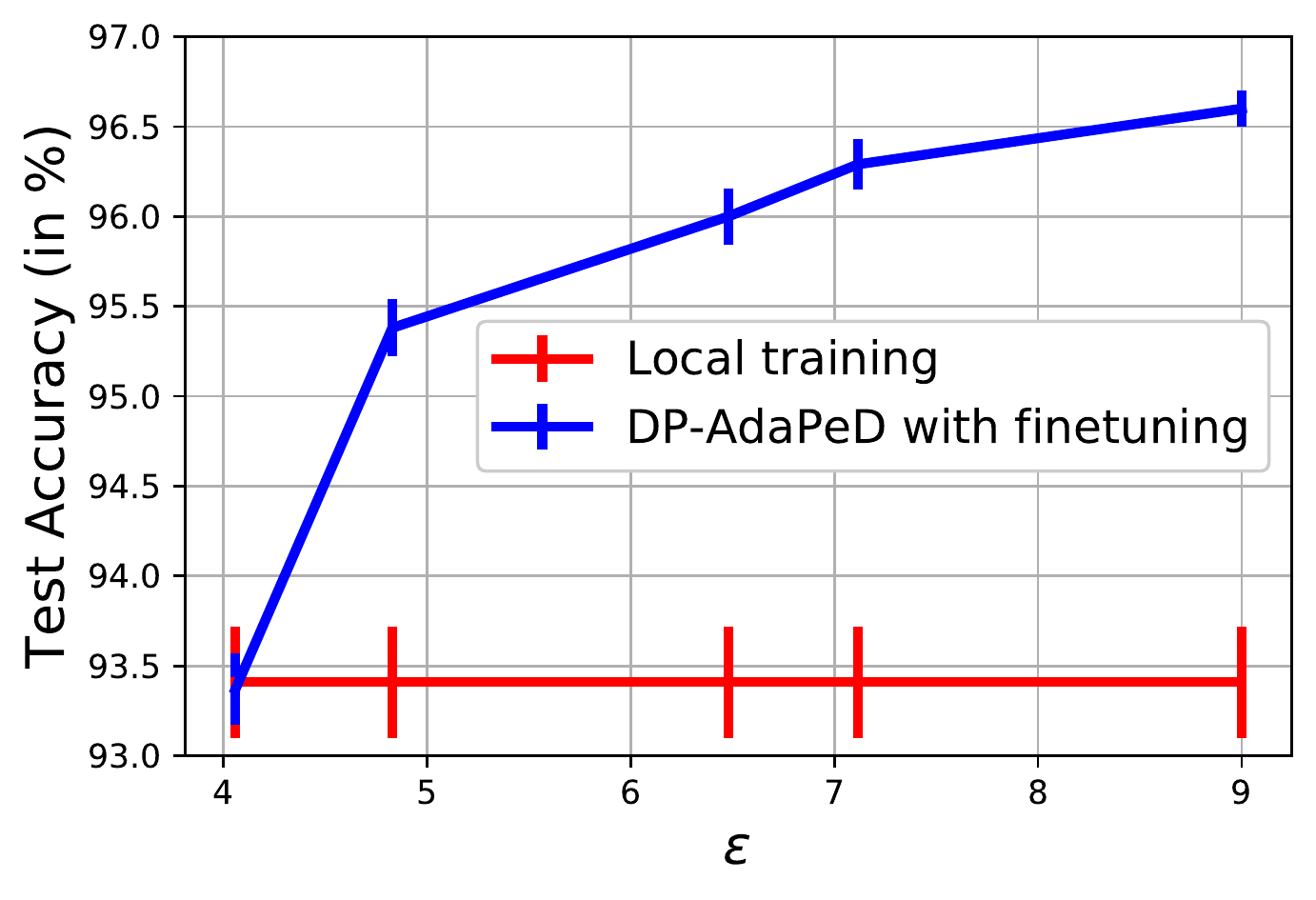}
	\caption{\dpadaped\ with local finetuning Test Accuracy (in \%) vs $\epsilon$ on FEMNIST with client sampling ratio of 0.33. Note that the local training is completely private, however, for convenience, we plot it against $\epsilon$, but it remains constant.}
	\label{fig:dp-adaped-lf}
\end{figure}

\textbf{Comparing to DP version of FedEM~\cite{marfoq2021federated}.} Clustering and multiple global model based methods could have competitive performance in certain scenarios. However, when there are privacy concerns sharing many parameters degrade performance for high privacy, i.e., low $\epsilon$; moreover, since \cite{marfoq2021federated} is based on global models and does not posses a separate local model (which is not just a combination of global models) its performance could be degraded by privatization even more. An additional downside of FedEM \cite{marfoq2021federated} is, when there is client sampling, it requires more iterations for empirical convergence compared to other methods. Hence, to give an advantage to FedEM, we compare AdaPeD and FedEM under full client sampling. On MNIST AdaPeD achieves $99.04\pm0.11\%$ and FedEM achieves $98.90\pm0.20\%$. Therefore, with full client sampling and without privacy the performance is comparable. When we employ Gaussian mechanism for privacy on both methods for parameters that are shared with the server, we obtain the result presented in Figure~\ref{fig:dp-adaped-fedem}.

\begin{figure}[H]
	\centering
	\includegraphics[scale=0.5]{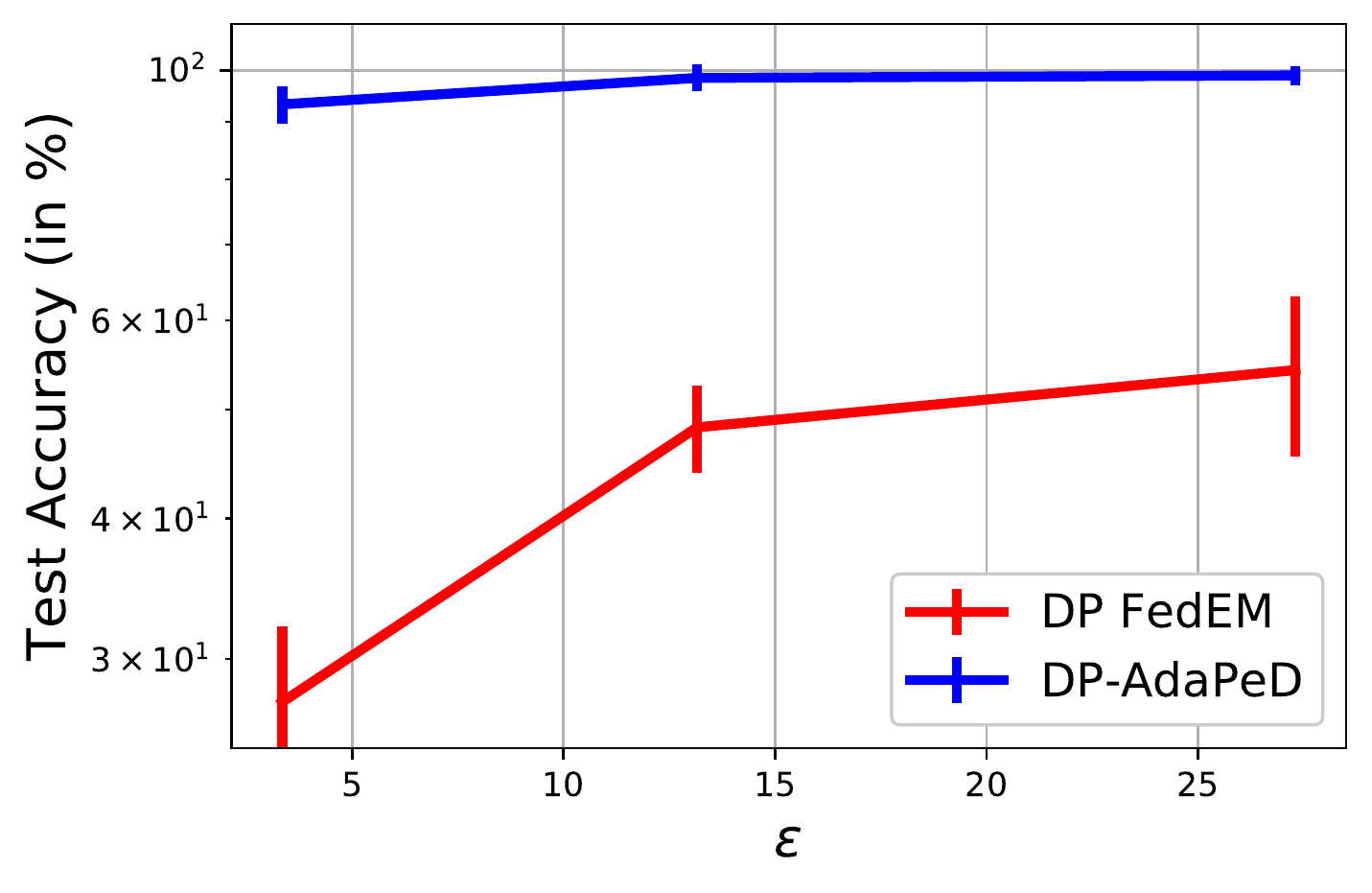}
	\caption{\dpadaped\ and DP FedEM Test Accuracy (in \%) vs $\epsilon$ on MNIST with log scale y axis. Error bar is standard deviation multiplied by 3 for increased visibility.}
	\label{fig:dp-adaped-fedem}
\end{figure}

In DP FedEM -- the private version of FedEM, similar to the concatenation we do in \dpadaped\ for $\psi$ and model parameters, we concatenate two local versions of the global model before adding privacy. As can be seen in Figure~\ref{fig:dp-adaped-fedem}, DP FedEM performs poorly even though FedEM had comparable performance to AdaPed when no privacy was required. This is due to two reasons. One is that more is being shared by each node (multiple models). Secondly, since FedEM just combines global models but does not maintain a separate individual model (which is not a combination of global models) there is further loss in performance due to direct private updates on those models. 

\textbf{DP Personalized Estimation.} To measure the performance tradeoff of the DP mechanism described in Section~\ref{sec:est_gaussian}, we create a synthetic experiment for Gaussian setting. We let $m=10000, n=15$ and $\sigma_{\theta}=0.1,\sigma_{x}=0.5$, and create a dataset at each client as described in Gaussian setting. Applying the DP mechanism we obtain the following result in Figure~\ref{fig:dp-gauss-est}.

\begin{figure}[H]
	\centering
	\includegraphics[scale=0.5]{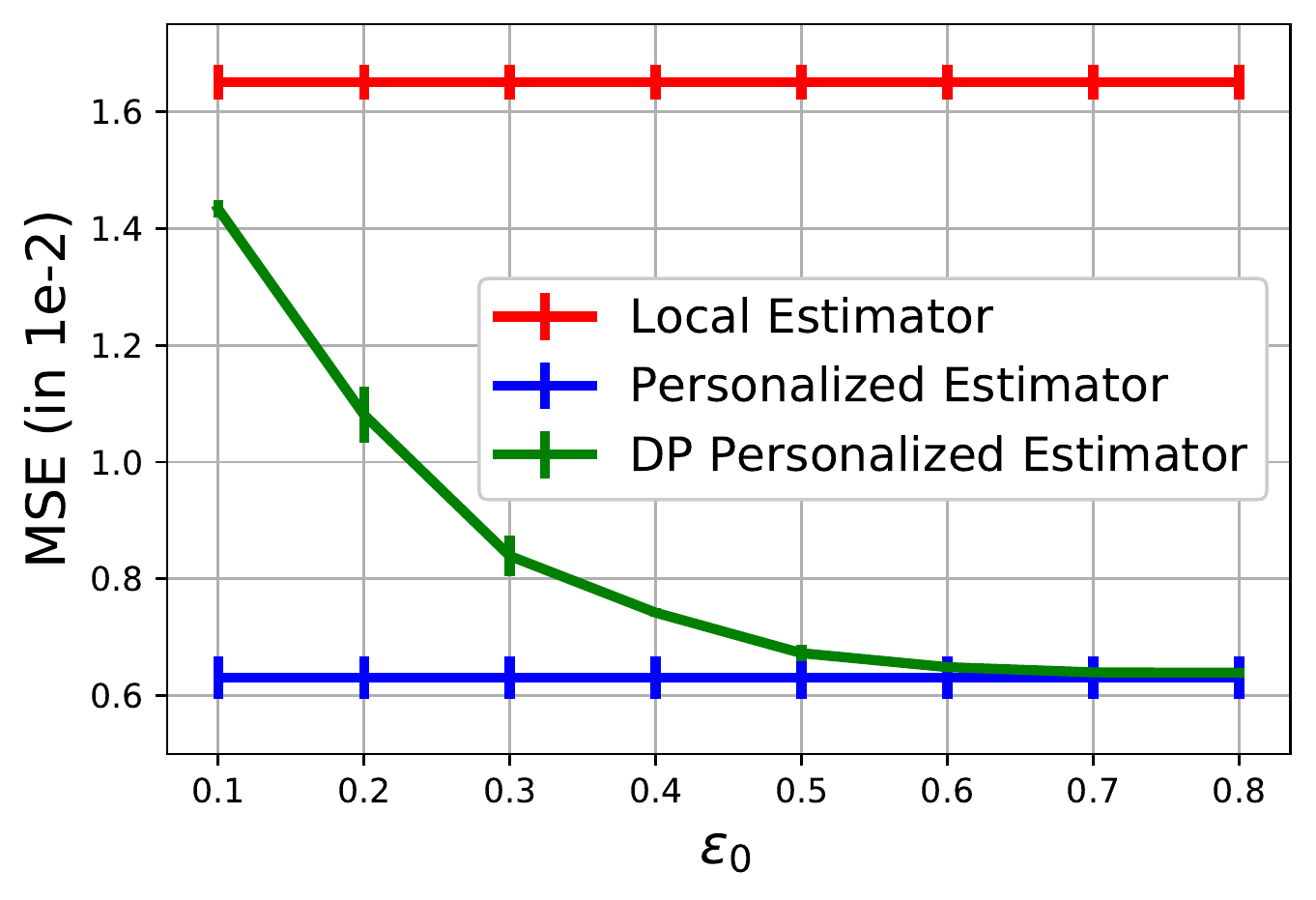}
	\caption{MSE vs $\epsilon_0$. Note that the local training is completely private and personalized training is non-private, however, for convenience, we plot them against $\epsilon$, but they remains constant.}
	\label{fig:dp-gauss-est}
\end{figure}

Here, as expected, we observe that when privacy is low ($\epsilon_0$ is high) the private personalized estimator recovers the regular personalized estimator. When we need higher privacy (lower $\epsilon_0$) the private estimator’s performance starts to become slightly worse than the non-private estimator.

\subsection{Implementation Details }

In this section we give further details on implementation and setting of the experiments that were used in Section~\ref{sec:experiments}. 

\textbf{Linear Regression.} In this experiment we set true values $\sigma_{\theta}^2=0.01, \sigma_{x}^2=0.05$ and we sample each component of $\mu$ from a Gaussian with 0 mean and 0.1 standard deviation and each component of $X$ from a Gaussian with 0 mean and variance 0.05, both i.i.d.

\textbf{Federated Setting.} We implemented Per-FedAvg and pFedMe based on the code from GitHub,\footnote{{\small \url{https://github.com/CharlieDinh/pFedMe}}} and FedEM based on the code from GitHub.\footnote{{\small \url{https://github.com/omarfoq/FedEM}}} Other implementations were not available online, so we implemented ourselves. For each of the methods we tuned learning rate in the set $\{0.2, 0.15, 0.125, 0.1, 0.075, 0.05\}$ and have a decaying learning schedule such that learning rate is multiplied by 0.99 at each epoch. We use weight decay of $1e-4$. For all the methods for both personalized and global models we used a 5-layer CNN, the first two layers consist of convolutional layers of filter size $5\times5$ with 6 and 16 filters respectively. Then we have 3 fully connected layers of dimension $256\times120$, $120\times84$, $84\times10$ and lastly a softmax operation.

\begin{itemize}[leftmargin=*]
	\item AdaPeD\footnote{For federated experiments we have used PyTorch's Data Distributed Parallel package.}: We fine-tuned $\psi$ in between $0.5-5$ with 0.5 increments and set it to 4. We set $\eta_3=3e-2$. We manually prevent $\psi$ becoming smaller than 0.5 so that local loss does not become dominated by the KD loss. We use $\eta_2=0.1$ and $\eta_1=0.1$. \footnote{We use {\small \url{https://github.com/tao-shen/FEMNIST_pytorch}} to import FEMNIST dataset.}
	
	\item Per-FedAvg \cite{fallah2020personalized} and pFedMe \cite{dinh2020personalized}: For Per-FedAvg, we used 0.075 as the learning rate and $\alpha = 0.001$. For pFedMe we used the same learning rate schedule for main learning rate, $K=3$ for the number of local iterations; and we used $\lambda=0.5$, $\eta=0.2$.
	
	\item QuPeD \cite{ozkara2021quped}: We choose $\lambda_p = 0.25$, $\eta_1 = 0.1$ and $\eta_3=0.1$ as stated.
	
	\item Federated Mutual Learning \cite{shen2020federated}: Since authors do not discuss the hyperparameters in the paper, we used $\alpha=\beta=0.25$, global model has the same learning schedule as the personalized models. 
	
	\item FedEM \cite{marfoq2021federated}: Here we use M=2 global models minding the privacy budget. We let  learning rate to 0.2 and use the same schedule and weight decay as other methods.
\end{itemize}

	\section{Proofs and Additional Details for Personalized Estimation}\label{sec:proof-est}
\subsection{Personalized Estimation -- Gaussian Model}\label{app:est-gaussian}
\subsubsection{Proof of Theorem~\ref{thm:gauss_estimate}}\label{app:proof_thm_gauss}
\begin{theorem*}[Restating Theorem~\ref{thm:gauss_estimate}]
	Solving \eqref{eqn:post_estimator} yields the following closed form expressions for $\hmu$ and $\htheta_1,\ldots,\htheta_m$:
	\begin{align}\label{eqn:estimator}
		\hmu = \frac{1}{m}\sum_{i=1}^m \overline{X}_i \qquad \text{ and } \qquad 		
		\htheta_{i}=a\overline{X}_i+(1-a)\hmu, \text{ for } i\in[m], \quad \text{ where } a=\frac{\sigmatheta^{2}}{\sigmatheta^{2}+\nicefrac{\sigma_{x}^{2}}{n}}.
	\end{align}
	The above estimator achieves the MSE: 
	$\mathbb{E}_{\vct{\theta}_i,X_1,\hdots,X_m}\|\htheta_i-\vct{\theta}_i\|^2 \leq \frac{d\sigma_x^2}{n}\Big(\frac{1-a}{m}+a\Big).$
\end{theorem*}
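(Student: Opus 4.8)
The plan is to (i) solve the maximum-likelihood program \eqref{eqn:post_estimator} in closed form via first-order optimality, and then (ii) compute the resulting MSE by writing the estimation error as a sum of independent Gaussian contributions and carrying out the (short) algebra.

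\textbf{Solving the MLE.} Since the joint density factors as $\prod_i p(\vct{\theta}_i\mid\vct{\mu})\prod_i\prod_j p(X_{ij}\mid\vct{\theta}_i)$, maximizing it is the same as minimizing, up to additive constants,
\[
F(\vct{\theta}_1,\dots,\vct{\theta}_m,\vct{\mu})=\sum_{i=1}^{m}\frac{\|\vct{\theta}_i-\vct{\mu}\|^2}{2\sigmatheta^2}+\sum_{i=1}^{m}\sum_{j=1}^{n}\frac{\|X_{ij}-\vct{\theta}_i\|^2}{2\sigma_x^2},
\]
which is jointly (strictly, once $n\ge1$) convex, so its unique minimizer is characterized by $\nabla F=0$. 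Setting $\nabla_{\vct{\theta}_i}F=0$ gives $\vct{\theta}_i=a\,\overline X_i+(1-a)\vct{\mu}$ with $a=\sigmatheta^2/(\sigmatheta^2+\sigma_x^2/n)$, and $\nabla_{\vct{\mu}}F=0$ gives $\vct{\mu}=\frac1m\sum_i\vct{\theta}_i$. Substituting the first relation into the second yields $\vct{\mu}=a\,\overline X+(1-a)\vct{\mu}$ with $\overline X:=\frac1m\sum_i\overline X_i$, hence $\hmu=\overline X$ when $a>0$; the degenerate case $\sigmatheta=0$ forces $\vct{\theta}_i\equiv\vct{\mu}$ and gives the same $\hmu$. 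Plugging $\hmu$ back gives $\htheta_i=a\,\overline X_i+(1-a)\hmu$, as claimed.

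\textbf{Computing the MSE.} Write $\vct{\theta}_i=\vct{\mu}+\vct{z}_i$ with $\vct{z}_i\sim\calN(0,\sigmatheta^2\bbI_d)$ and $\overline X_i=\vct{\theta}_i+\overline{\vct{w}}_i$ with $\overline{\vct{w}}_i\sim\calN(0,(\sigma_x^2/n)\bbI_d)$, all mutually independent. Put $\sigma^2:=\sigmatheta^2+\sigma_x^2/n$, so $\vct{u}_l:=\vct{z}_l+\overline{\vct{w}}_l$ are i.i.d.\ $\calN(0,\sigma^2\bbI_d)$. A direct substitution of $\hmu=\frac1m\sum_l(\vct{\mu}+\vct{u}_l)$ and $\overline X_i=\vct{\mu}+\vct{u}_i$ into $\htheta_i-\vct{\theta}_i$ gives
\[
\htheta_i-\vct{\theta}_i=c\,\vct{u}_i-\vct{z}_i+\frac{1-a}{m}\sum_{l\neq i}\vct{u}_l,\qquad c:=a+\frac{1-a}{m}.
\]
The first two terms are functions of $(\vct{z}_i,\overline{\vct{w}}_i)$ only, while the last is zero-mean and independent of them, so the cross terms vanish in $\mathbb{E}\|\cdot\|^2$, leaving
\[
\mathbb{E}\|\htheta_i-\vct{\theta}_i\|^2=d\Big[(c-1)^2\sigmatheta^2+c^2\tfrac{\sigma_x^2}{n}+\big(\tfrac{1-a}{m}\big)^2(m-1)\sigma^2\Big].
\]
Using $c-1=-(1-a)\frac{m-1}{m}$, $\sigmatheta^2=a\sigma^2$ and $\sigma_x^2/n=(1-a)\sigma^2$, the bracket becomes $\frac{\sigma^2(1-a)}{m^2}\big[(1-a)a(m-1)^2+(a(m-1)+1)^2+(1-a)(m-1)\big]$; with $b:=m-1$ the inner sum factors as $(ab+1)(b+1)=m\big(a(m-1)+1\big)$, so everything collapses to $d\,\sigma^2(1-a)\big(a+\frac{1-a}{m}\big)=\frac{d\sigma_x^2}{n}\big(\frac{1-a}{m}+a\big)$, which is the stated bound (in fact an equality).

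The only genuinely delicate point is this last algebraic collapse — in particular spotting the factorization $(1-a)ab^2+(ab+1)^2+(1-a)b=(ab+1)(b+1)$ — together with remembering to invoke joint convexity so that the stationary point is the global optimum, and to dispatch the $\sigmatheta=0$ boundary case; the rest is routine Gaussian second-moment bookkeeping.
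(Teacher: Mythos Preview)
Your proof is correct and follows essentially the same route as the paper: first-order conditions on the jointly convex negative log-likelihood to get the closed-form estimators, then a Gaussian second-moment computation followed by the algebraic collapse using $a=\sigmatheta^2/(\sigmatheta^2+\sigma_x^2/n)$. The only cosmetic difference is that the paper organizes the MSE calculation as a conditional bias--variance split (conditioning on $\vct{\theta}$ and handling the two pieces in a separate claim), whereas you write the error directly in the innovations $\vct{z}_i,\overline{\vct{w}}_i,\vct{u}_l$ and exploit independence in one shot; your route is slightly more streamlined and, as you note, shows the stated bound is actually an equality.
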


\begin{proof}

We will derive the optimal estimator and prove the MSE for one dimensional case, i.e., for $d=1$; the final result can be obtained by applying these to each of the $d$ coordinates separately.

The posterior estimators of the local means $\theta_1,\ldots,\theta_m$ and the global mean $\mu$ are obtained by solving the following optimization problem:
\begin{align}~\label{eqn:post_estimator}
		\hat{\theta}_1,\ldots,\hat{\theta}_m,\hat{\mu}&=\argmax_{\theta_1,\ldots,\theta_m,\mu} p_{\mathbf{X}|\theta}\left(X_1,\ldots,X_m|\theta_1,\ldots,\theta_m\right)p_{\theta|\mu}(\theta_1,\ldots,\theta_m|\mu)\\
		&=\argmin_{\theta_1,\ldots,\theta_m,\mu}-\log\left(p_{\mathbf{X}|\theta}\left(X_1,\ldots,X_m|\theta_1,\ldots,\theta_m\right)\right)-\log\left(p_{\theta|\mu}(\theta_1,\ldots,\theta_m|\mu)\right)\\
		&=\argmin_{\theta_1,\ldots,\theta_m,\mu} C + \sum_{i=1}^{m}\sum_{j=1}^{n}\frac{\left(X_{i}^{j}-\theta_i\right)^2}{\sigma_x^2}+\sum_{i=1}^{m}\frac{\left(\theta_i-\mu\right)^2}{\sigma_{\theta}^2},
\end{align}
where the second equality is obtained from the fact that the $\log$ function is a monotonic function, and $C$ is a constant independent of the variables $\theta=\left(\theta_1,\ldots,\theta_{m}\right)$. Observe that the objective function $F(\theta,\mu)=\sum_{i=1}^{m}\sum_{j=1}^{n}\frac{\left(X_{i}^{j}-\theta_i\right)^2}{\sigma_x^2}+\sum_{i=1}^{m}\frac{\left(\theta_i-\mu\right)^2}{\sigma_{\theta}^2}$ is jointly convex in $(\theta,\mu)$. Thus, the optimal is obtained by setting the derivative to zero as it is an unbounded optimization problem. 

\begin{align}
	\frac{\partial F}{\partial \theta_i}\bigg|_{\mu=\hat{\mu},\theta_i=\hat{\theta}_i} &=\frac{\sum_{j=1}^{n}2(\hat{\theta}_i-X_{i}^{j})}{\sigma_x^2}+\frac{2(\hat{\theta}_i-\hat{\mu})}{\sigma_{\theta}^2}=0,\qquad \forall i\in[m]\\
	\frac{\partial F}{\partial\mu}\bigg|_{\mu=\hat{\mu},\theta_i=\hat{\theta}_i} &= \frac{\sum_{i=1}^{m}2(\hat{\mu}-\hat{\theta}_i)}{\sigma_{\theta}^2}=0.
\end{align}    
By solving these $m+1$ equations in $m+1$ unknowns, we get:
\begin{equation}~\label{eqn:app_estimate}
	\hat{\theta}_{i}=\alpha\left(\frac{1}{n}\sum_{j=1}^{n}X_{i}^{j}\right)+(1-\alpha)\left(\frac{1}{mn}\sum_{i=1}^{m}\sum_{j=1}^{n}X_{i}^{j}\right),
\end{equation}
where $\alpha=\frac{\sigma_{\theta}^{2}}{\sigma_{\theta}^{2}+\frac{\sigma_{x}^{2}}{n}}$. 
By letting $\overline{X}_i=\frac{1}{n}\sum_{j=1}^{n}X_{i}^{j}$ for all $i\in[m]$, we can write $\hat{\theta}_{i} = \alpha \overline{X}_i + (1-\alpha)\frac{1}{m}\sum_{i=1}^m\overline{X}_i$.

Observe that $\mathbb{E}\left[\hat{\theta}_i|\theta\right]=\alpha\theta_i+\frac{1-\alpha}{m}\sum_{l=1}^{m}\theta_l$, where $\theta=(\theta_1,\ldots,\theta_m)$. Thus, the estimator~\eqref{eqn:app_estimate} is an unbiased estimate of $\lbrace\theta_i\rbrace$. Substituting the $\hat{\theta}_i$ in the MSE, we get that
\begin{align}
	\mathbb{E}_{X_1,\ldots,X_m}\left[\left(\hat{\theta}_i-\theta_i\right)^2\right]&=\mathbb{E}_{\theta}\left[\mathbb{E}_{X_1,\ldots,X_m}\left[\left(\hat{\theta}_i-\theta_i\right)^2|\theta\right]\right] \notag \\
	&=\mathbb{E}_{\theta}\left[\mathbb{E}_{X_1,\ldots,X_m}\left[\left(\hat{\theta}_i-\mathbb{E}\left[\hat{\theta}_i|\theta\right]+\mathbb{E}\left[\hat{\theta}_i|\theta\right]-\theta_i\right)^2|\theta\right]\right] \notag \\
	&=\mathbb{E}_{\theta}\left[\mathbb{E}_{X_1,\ldots,X_m}\left[\left(\hat{\theta}_i-\mathbb{E}\left[\hat{\theta}_i|\theta\right]\right)^2|\theta\right]\right]+\mathbb{E}_{\theta}\left[\mathbb{E}_{X_1,\ldots,X_m}\left[\left(\mathbb{E}\left[\hat{\theta}_i|\theta\right]-\theta_i\right)^2|\theta\right]\right] \label{proof_gauss_interim2}
\end{align}
\begin{claim}\label{claim:proof_gauss_claim}
	\begin{align*}
		\mathbb{E}_{\theta}\left[\mathbb{E}_{X_1,\ldots,X_m}\left[\left(\hat{\theta}_i-\mathbb{E}\left[\hat{\theta}_i|\theta\right]\right)^2|\theta\right]\right] &= \alpha^2\frac{\sigma_x^2}{n}+(1-\alpha)^2\frac{\sigma_x^2}{mn}+2\alpha(1-\alpha)\frac{\sigma_x^2}{mn} \\
		\mathbb{E}_{\theta}\left[\mathbb{E}_{X_1,\ldots,X_m}\left[\left(\mathbb{E}\left[\hat{\theta}_i|\theta\right]-\theta_i\right)^2|\theta\right]\right] &= (1-\alpha)^2\mathbb{E}_{\theta}\left[\left(\frac{1}{m}\sum_{k=1}^{m}\theta_k-\theta_i\right)^2\right] \leq  (1-\alpha)^2\frac{\sigma_{\theta}^2(m-1)}{m}
	\end{align*}
\end{claim}

Substituting the result of Claim~\ref{claim:proof_gauss_claim} into \eqref{proof_gauss_interim2}, we get
\begin{align}
	\mathbb{E}_{X_1,\ldots,X_m}\left[\left(\hat{\theta}_i-\theta_i\right)^2\right]&\leq\alpha^2\frac{\sigma_x^2}{n}+(1-\alpha)^2\frac{\sigma_x^2}{mn}+2\alpha(1-\alpha)\frac{\sigma_x^2}{mn}+(1-\alpha)^2\frac{\sigma_{\theta}^2(m-1)}{m} \label{proof_gauss_interim3} \\
	&\stackrel{\text{(a)}}{=}\frac{\sigma_x^2}{n}\left(\alpha^2+\frac{(1-\alpha)^2+2\alpha(1-\alpha)}{m}+\alpha(1-\alpha)\frac{m-1}{m}\right) \notag \\
	&=\frac{\sigma_x^2}{n}\left(\alpha+\frac{1-\alpha}{m}\right), \notag
\end{align}    
where in (a) we used $\alpha=\frac{\sigma_{\theta}^2}{\sigma_{\theta}^2+\frac{\sigma_x^2}{n}}$ for the last term to write $(1-\alpha)^2\frac{\sigma_{\theta}^2(m-1)}{m} = \frac{\sigma_x^2}{n}\alpha(1-\alpha)\frac{m-1}{m}$.

Observe that the estimator in~\eqref{eqn:app_estimate} is a weighted summation between two estimators: the local estimator $\overline{X}_i=\frac{1}{n}\sum_{j=1}^{n}X_{i}^{j}$, and the global estimator $\hat{\mu}=\frac{1}{m}\sum_{i=1}^{m}\overline{X}_i$. Thus, the MSE in (a) consists of four terms: 1) The variance of the local estimator ($\frac{\sigma_x^2}{n}$). 2) The variance of the global estimator ($\frac{\sigma_x^2}{nm}$). 3) The correlation between the local estimator and the global estimator ($\frac{\sigma_x^2}{nm}$). 4) The bias term $\mathbb{E}_{\theta}\left[\mathbb{E}_{X_1,\ldots,X_m}\left[\left(\mathbb{E}\left[\hat{\theta}_i|\theta\right]-\theta_i\right)^2|\theta\right]\right]$. 
This completes the proof of Theorem~\ref{thm:gauss_estimate}. 
\end{proof}

\begin{proof}[Proof of Claim~\ref{claim:proof_gauss_claim}]
	For the first equation:
	\begin{align*}
		&\mathbb{E}_{\theta}\left[\mathbb{E}_{X_1,\ldots,X_m}\left[\left(\hat{\theta}_i-\mathbb{E}\left[\hat{\theta}_i|\theta\right]\right)^2|\theta\right]\right] = \mathbb{E}_{\theta}\left[\mathbb{E}_{X_1,\ldots,X_m}\left[\left(\alpha(\overline{X}_i-\theta_i) + (1-\alpha)\frac{1}{m}\sum_{k=1}^m(\overline{X}_k-\theta_k)\right)^2\ |\ \theta\right]\right] \\
		&\qquad= \alpha^2\mathbb{E}\left[\mathbb{E}\left[(\overline{X}_i-\theta_i)^2\ |\ \theta\right]\right] + (1-\alpha)^2\mathbb{E}\left[\mathbb{E}\left[\left(\frac{1}{m}\sum_{k=1}^m(\overline{X}_k-\theta_k)\right)^2\ |\ \theta\right]\right] \\
		&\hspace{3cm}+ 2\alpha(1-\alpha)\mathbb{E}\left[\mathbb{E}\left[\frac{1}{m}\sum_{k=1}^m(\overline{X}_i-\theta_i)(\overline{X}_k-\theta_k)\ |\ \theta\right]\right] \\
		&\qquad= \alpha^2\frac{\sigma_x^2}{n} + (1-\alpha)^2\frac{\sigma_x^2}{mn} + 2\alpha(1-\alpha)\frac{\sigma_x^2}{mn}
	\end{align*}
	For the second equation, first note that $\mathbb{E}\left[\hat{\theta}_i|\theta\right]-\theta_i = \alpha\theta_i+\frac{1-\alpha}{m}\sum_{k=1}^{m}\theta_k - \theta_i = (1-\alpha)\left(\frac{1}{m}\sum_{k=1}^{m}\theta_k - \theta_i\right)$:
	\begin{align*}
		&\mathbb{E}_{\theta}\left[\mathbb{E}_{X_1,\ldots,X_m}\left[\left(\mathbb{E}\left[\hat{\theta}_i|\theta\right]-\theta_i\right)^2|\theta\right]\right] = (1-\alpha)^2\mathbb{E}\left[\left(\frac{1}{m}\sum_{k=1}^{m}\theta_k - \theta_i\right)^2\right] \\
		&\qquad= \frac{(1-\alpha)^2}{m^2}\mathbb{E}\left[\left(\sum_{k\neq i}(\theta_k - \theta_i)\right)^2\right] \\
		&\qquad= \frac{(1-\alpha)^2}{m^2}\left[\sum_{k\neq i}\mathbb{E}(\theta_k - \theta_i)^2 + \sum_{k\neq i, l\neq i, k\neq l}\mathbb{E}(\theta_k - \theta_i)(\theta_l - \theta_i)\right] \\
		&\qquad\leq \frac{(1-\alpha)^2}{m^2}\left[\sum_{k\neq i}[\mathbb{E}(\theta_k - \mu)^2 + \mathbb{E}(\theta_i - \mu)^2] + \sum_{k\neq i, l\neq i, k\neq l}\mathbb{E}(\theta_k - \theta_i)(\theta_l - \theta_i)\right] \\
		&\qquad= \frac{(1-\alpha)^2}{m^2}\left[2(m-1)\sigma_{\theta}^2 + \sum_{k\neq i, l\neq i, k\neq l}\mathbb{E}(\theta_k - \theta_i)(\theta_l - \theta_i)\right] \\
		&\qquad= \frac{(1-\alpha)^2}{m^2}\left[2(m-1)\sigma_{\theta}^2 + \sum_{k\neq i, l\neq i, k\neq l}\mathbb{E}(\mu - \theta_i)^2\right] \tag{Since $\mathbb{E}[\theta_k]=\mu$ for all $k\in[m]$} \\
		&\qquad= \frac{(1-\alpha)^2}{m^2}\left[2(m-1)\sigma_{\theta}^2 + (m-1)(m-2)\sigma_{\theta}^2\right] \\
		&\qquad= (1-\alpha)^2\frac{\sigma_{\theta}^2(m-1)}{m}
	\end{align*}
	This concludes the proof of Claim~\ref{claim:proof_gauss_claim}.
\end{proof}

\subsubsection{Proof of Theorem~\ref{thm:gaussian_est-q}}

\begin{theorem*}[Restating Theorem~\ref{thm:gaussian_est-q}]
	Suppose for all $\bx\in\bbR^d$, $q$ satisfies $\bbE[q(\bx)]=\bx$ and $\bbE\|q(\bx)-\bx\|^2\leq d\sigma_q^2$ for some finite $\sigma_q$.
	Then the personalized estimator in \eqref{eqn:person_const} has MSE:
	\begin{align}\label{eqn:gaussian_generic-q}
		\mathbb{E}_{\vct{\theta}_i,q,X_1,\ldots,X_m}\|\htheta_i-\vct{\theta}_i\|^2 \leq \frac{d\sigma_x^2}{n}\Big(\frac{1-a}{m}+a\Big) \qquad \text{ where }\qquad a=\frac{\sigma_{\theta}^{2}+\nicefrac{\sigma_q^2}{m-1}}{\sigma_{\theta}^{2}+\nicefrac{\sigma_q^2}{m-1}+\nicefrac{\sigma_x^{2}}{n}}.
	\end{align}
	Furthermore, assuming $\vct{\mu}\in[-r,r]$ for some constant $r$ (but $\vct{\mu}$ is unknown), we have:
	\begin{enumerate}
		\item {\it Communication efficiency:} For any $k\in\mathbb{N}$, there is a $q$ whose output can be represented using $k$-bits (i.e., $q$ is a quantizer) that achieves the MSE in \eqref{eqn:gaussian_generic-q} with probability at least $1-\nicefrac{2}{mn}$ and with $\sigma_q = \frac{b}{(2^k-1)}$, 
		where $b=r+\sigmatheta\sqrt{\log(m^2n)}+\frac{\sigma_x}{\sqrt{n}}\sqrt{\log(m^2n)}$. 
		\item {\it Privacy:} For any $\epsilon_0\in(0,1),\delta>0$, there is a $q$ that is user-level $(\epsilon_0,\delta)$-locally differentially private, that achieves the MSE in \eqref{eqn:gaussian_generic-q} with probability at least $1-\nicefrac{2}{mn}$ and with $\sigma_q=\frac{b}{\epsilon_0}\sqrt{8\log(2/\delta)}$, 
		where $b=r+\sigmatheta\sqrt{\log(m^2n)}+\frac{\sigma_x}{\sqrt{n}}\sqrt{\log(m^2n)}$. 
	\end{enumerate}
\end{theorem*}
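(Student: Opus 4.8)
The plan is to first establish the generic bound \eqref{eqn:gaussian_generic-q} treating $q$ as a black box satisfying $\bbE[q(\bx)]=\bx$ and $\bbE\|q(\bx)-\bx\|^2\le d\sigma_q^2$, and then to instantiate $q$ twice: once as a dithered $k$-bit quantizer (part~1 of the ``furthermore'') and once as a clip-then-Gaussian mechanism (part~2), in each case invoking Gaussian tail bounds to guarantee the inputs lie in a bounded range so that the black-box hypotheses hold with the claimed $\sigma_q$.

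For the generic bound I would mimic the proof of Theorem~\ref{thm:gauss_estimate}, working one coordinate at a time and multiplying by $d$ at the end. Write $\htheta_i-\vct{\theta}_i = a(\overline{X}_i-\vct{\theta}_i)+(1-a)(\hmu_q-\vct{\theta}_i)$ and split each quantized average as $q(\overline{X}_j)-\vct{\theta}_j = \big(q(\overline{X}_j)-\overline{X}_j\big)+\big(\overline{X}_j-\vct{\theta}_j\big)$; call the first summand $W_j$ and the second $Z_j$. Conditioning on $\vct{\theta}=(\vct{\theta}_1,\dots,\vct{\theta}_m)$, unbiasedness of $q$ and of the sample means gives $\bbE[\htheta_i\mid\vct{\theta}]=a\vct{\theta}_i+\tfrac{1-a}{m}\sum_j\vct{\theta}_j$, so the bias is $(1-a)\big(\tfrac1m\sum_j\vct{\theta}_j-\vct{\theta}_i\big)$ whose squared expectation over $\vct{\theta}$ is exactly $(1-a)^2\tfrac{(m-1)\sigmatheta^2}{m}$, as in Claim~\ref{claim:proof_gauss_claim}. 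The conditional variance equals $\mathrm{Var}\big(aZ_i+\tfrac{1-a}{m}\sum_j Z_j\big)+\mathrm{Var}\big(\tfrac{1-a}{m}\sum_j W_j\big)$: the cross term vanishes because $\bbE[W_j\mid \overline{X}_j]=0$ makes the $q$-noise conditionally mean-zero and hence uncorrelated with every $Z_k$; the first piece is identical to Theorem~\ref{thm:gauss_estimate} and equals $\tfrac{\sigma_x^2}{n}\big(a^2+\tfrac{2a(1-a)+(1-a)^2}{m}\big)$; and the second is at most $\tfrac{(1-a)^2\sigma_q^2}{m}$ since the $W_j$ are independent across clients with variance $\le\sigma_q^2$. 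Adding the three contributions and grouping the last two as $\tfrac{(1-a)^2(m-1)}{m}\big(\sigmatheta^2+\tfrac{\sigma_q^2}{m-1}\big)$, the value of $a$ in \eqref{eqn:gaussian_generic-q} is precisely the one satisfying $\sigmatheta^2+\tfrac{\sigma_q^2}{m-1}=\tfrac{a}{1-a}\cdot\tfrac{\sigma_x^2}{n}$, which makes the whole per-coordinate expression telescope to $\tfrac{\sigma_x^2}{n}\big(a+\tfrac{1-a}{m}\big)$; multiplying by $d$ completes this part.

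For the two instantiations the common first step is the ``good event'' $\mathcal E$ that every coordinate of every $\overline{X}_i$ lies in $[-b,b]$. Writing $\overline{X}_i=\vct{\mu}+(\vct{\theta}_i-\vct{\mu})+(\overline{X}_i-\vct{\theta}_i)$ with $\|\vct{\mu}\|_\infty\le r$, $\vct{\theta}_i-\vct{\mu}\sim\calN(0,\sigmatheta^2\bbI_d)$ and $\overline{X}_i-\vct{\theta}_i\sim\calN(0,\tfrac{\sigma_x^2}{n}\bbI_d)$, a Gaussian tail bound on each Gaussian piece plus a union bound over the $m$ clients yields $\Pr[\mathcal E]\ge 1-\tfrac{2}{mn}$ with $b=r+\sigmatheta\sqrt{\log(m^2n)}+\tfrac{\sigma_x}{\sqrt n}\sqrt{\log(m^2n)}$ (up to the precise sub-Gaussian constant). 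On $\mathcal E$: for part~1 apply a standard dithered scalar quantizer that partitions $[-b,b]$ into $2^k-1$ equal cells and rounds randomly to one of the two surrounding grid points — this is unbiased with per-coordinate variance at most $\big(\tfrac{b}{2^k-1}\big)^2=\sigma_q^2$, so $\bbE\|q(\bx)-\bx\|^2\le d\sigma_q^2$ and the generic bound applies with this $\sigma_q$; for part~2, clip $\overline{X}_i$ to $[-b,b]$ (a no-op on $\mathcal E$) and add independent $\calN(0,\sigma_q^2)$ to each coordinate — since replacing a client's entire local dataset moves the clipped scalar by at most $2b$, the classical Gaussian-mechanism guarantee (valid for $\epsilon_0\in(0,1)$) gives user-level $(\epsilon_0,\delta)$-LDP as soon as $\sigma_q\ge\tfrac{2b\sqrt{2\log(1.25/\delta)}}{\epsilon_0}$, which is implied by $\sigma_q=\tfrac{b}{\epsilon_0}\sqrt{8\log(2/\delta)}$, and this $q$ again meets the hypotheses of the generic bound.

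The main obstacle is the second-moment bookkeeping in the generic bound: one must track that the quantization/privacy noise decouples from both the sampling noise $\{Z_j\}$ and the prior randomness $\{\vct{\theta}_i\}$ (this is exactly where the conditional-mean-zero property of $q$ enters), and then verify that the prescribed $a$ is the unique choice that collapses the sum of variance-and-bias terms to $\tfrac{\sigma_x^2}{n}(a+\tfrac{1-a}{m})$. Parts~1 and~2 of the ``furthermore'' are then essentially plug-and-play with textbook stochastic quantization and the Gaussian mechanism; the only mild subtleties there are that the hypotheses of the generic bound hold only after conditioning on $\mathcal E$ (hence the ``with probability $1-2/(mn)$'' phrasing, or, if one prefers an unconditional statement, one clips regardless and charges the $O(1/(mn))$ tail to a lower-order term) and pinning down the exact constant inside $b$.
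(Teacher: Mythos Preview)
Your proposal is correct and follows essentially the same approach as the paper: a one-coordinate bias--variance decomposition with the same conditional-mean-zero decoupling of the $q$-noise, the same Gaussian tail bound/union bound for the event $\mathcal E$, and the same dithered quantizer and clip-plus-Gaussian instantiations. The only cosmetic difference is that you make the $W_j/Z_j$ splitting and the cross-term vanishing explicit, whereas the paper absorbs these into a single step labeled ``from the conditions on $q$.''
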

\begin{proof}[Proof of Theorem~\ref{thm:gaussian_est-q}, Equation~\eqref{eqn:gaussian_generic-q}]

Similar to the proof of Theorem~\ref{thm:gauss_estimate}, here also we will derive the optimal estimator and prove the MSE for the one dimensional case, and the final result can be obtained by applying these to each of the $d$ coordinates separately.

Let $\theta=(\theta_1,\ldots,\theta_m)$ denote the personalized models vector. For given a constraint function $q$, we set the personalized model as follows:
\begin{equation}
	\hat{\theta}_i=\alpha \left(\frac{1}{n}\sum_{j=1}^{n}X_{i}^{j}\right)+(1-\alpha)\left(\frac{1}{m}\sum_{i=1}^{m}q(\overline{X}_i)\right)\qquad \forall i\in[m],
\end{equation}
where $\overline{X}_i=\frac{1}{n}\sum_{j=1}^{n}X_{i}^{j}$. From the second condition on the function $q$, we get that 
\begin{equation}~\label{eqn:expect_personal}
	\mathbb{E}\left[\hat{\theta}_i|\theta\right]=\alpha \theta_i+\frac{1-\alpha}{m}\sum_{l=1}^{m}\theta_l,
\end{equation}
Thus, by following similar steps as the proof of Theorem~\ref{thm:gauss_estimate}, we get that:
\begin{align}
	\mathbb{E}\left[\left(\hat{\theta}_i-\theta_i\right)^2\right]&=\mathbb{E}\left[\mathbb{E}\left[\left(\hat{\theta}_i-\theta_i\right)^2|\theta\right]\right] \notag \\
	&=\mathbb{E}\left[\mathbb{E}\left[\left(\hat{\theta}_i-\mathbb{E}\left[\hat{\theta}_i|\theta\right]+\mathbb{E}\left[\hat{\theta}_i|\theta\right]-\theta_i\right)^2|\theta\right]\right] \notag \\
	&=\mathbb{E}\left[\mathbb{E}\left[\left(\hat{\theta}_i-\mathbb{E}\left[\hat{\theta}_i|\theta\right]\right)^2|\theta\right]\right]+\mathbb{E}\left[\mathbb{E}\left[\left(\mathbb{E}\left[\hat{\theta}_i|\theta\right]-\theta_i\right)^2|\theta\right]\right] \notag \\
	&\stackrel{(a)}{=}\alpha^2\frac{\sigma_x^2}{n}+(1-\alpha)^2\mathbb{E}\left[\left(\frac{1}{m}\sum_{l=1}^{m}q(\overline{X}_l)-\theta_l\right)^2|\theta\right] \notag \\
	&\quad+2\alpha(1-\alpha)\mathbb{E}\left[\left(\overline{X}_i-\theta_i\right)\left(\frac{1}{m}\sum_{l=1}^{m}q(\overline{X}_l)-\theta_l\right)|\theta\right]+(1-\alpha)^2\mathbb{E}\left[\left(\frac{1}{m}\sum_{k=1}^{m}\theta_k-\theta_i\right)^2\right] \notag \\
	&\stackrel{(b)}{=}\alpha^2\frac{\sigma_x^2}{n}+\frac{(1-\alpha)^2\left(\frac{\sigma_x^2}{n}+\sigma_q^2\right)}{m}+\frac{2\alpha(1-\alpha)\sigma_x^2}{mn}+(1-\alpha)^2\mathbb{E}\left[\left(\frac{1}{m}\sum_{k=1}^{m}\theta_k-\theta_i\right)^2\right] \notag \\
	&\leq\alpha^2\frac{\sigma_x^2}{n}+\frac{(1-\alpha)^2\left(\frac{\sigma_x^2}{n}+\sigma_q^2\right)}{m}+2\alpha(1-\alpha)\frac{\sigma_x^2}{mn}+(1-\alpha)^2\frac{\sigma_{\theta}^2(m-1)}{m} \notag \\
	&\stackrel{(c)}{=}\frac{\sigma_x^2}{n}\left(\alpha^2+\frac{(1-\alpha)^2+2\alpha(1-\alpha)}{m}+\alpha(1-\alpha)\frac{m-1}{m}\right) \notag \\
	&=\frac{\sigma_x^2}{n}\left(\alpha+\frac{1-\alpha}{m}\right), \label{eq:gaussian_est-q-mse-bound}
\end{align}    
where step (a) follows by substituting the expectation of the personalized model from~\eqref{eqn:expect_personal}. Step (b) follows from the first and third conditions of the function $q$. Step (c) follows by choosing $\alpha=\frac{\sigma_{\theta}^2+\frac{\sigma_q^2}{m-1}}{\sigma_{\theta}^2+\frac{\sigma_q^2}{m-1}+\frac{\sigma_x^2}{n}}$. This derives the result stated in \eqref{eqn:gaussian_generic-q} in Theorem~\ref{thm:gaussian_est-q}.  
\end{proof}

\begin{proof}[Proof of Theorem~\ref{thm:gaussian_est-q}, Part 1]
The proof consists of two steps. First, we use the concentration property of the Gaussian distribution to show that the local sample means $\lbrace\overline{X}_i\rbrace$ are bounded within a small range with high probability. Second, we apply an unbiased stochastic quantizer on the projected sample mean. 

The local samples $X_{i}^{1},\ldots,X_{i}^{n}$ are drawn i.i.d. from a Gaussian distribution with mean $\theta_i$ and variance $\sigma_x^2$, and hence, we have that $\overline{X}_i\sim \mathcal{N}(\theta_i,\frac{\sigma_x^2}{n})$. Thus, from the concentration property of the Gaussian distribution, we get that $\Pr[|\overline{X}_i-\theta_i|>c_1]\leq \exp\left(-\frac{nc_1^2}{\sigma_x^2}\right)$ for all $i\in[m]$. Similarly, the models $\theta_1,\ldots,\theta_m$ are drawn i.i.d. from a Gaussian distribution with mean $\mu\in[-r,r]$ and variance $\sigma_{\theta}^2$, hence,, we get $\Pr[|\theta_i-\mu|>c_2]\leq \exp\left(-\frac{c_2^2}{\sigma_\theta^2}\right)$ for all $i\in[m]$. Let $\mathcal{E}=\left\{\overline{X}_i\in[-a,a]:\forall i\in [m]\right\}$, where $a=r+c_1+c_2$. Thus, from the union bound, we get that $\Pr[\mathcal{E}]>1-m(e^{-\frac{nc_1^2}{\sigma_x^2}}+e^{-\frac{c_2^2}{\sigma_\theta^2}})$. By setting $c_1=\sqrt{\frac{\sigma_x^2}{n}\log(m^2n)}$ and $c_2=\sqrt{\sigma_{\theta}^2\log(m^2n)}$, we get that $a=r+\frac{\sigma_x}{\sqrt{n}}\sqrt{\log(m^2n)}+\sigma_{\theta}\sqrt{\log(m^2n)}$, and $\Pr[\mathcal{E}]=1-\frac{2}{mn}$.

Let $q_k:[-a,a]\to\mathcal{Y}_k$ be a quantization function with $k$-bits, where $\mathcal{Y}_k$ is a discrete set of cardinality $|\mathbb{Y}_k|=2^{k}$. For given $x\in[-a,a]$, the output of the function $q_k$ is given by:
\begin{equation}
	q_k(x) = \frac{2a}{2^{k}-1}\left(\lfloor \tilde{x}\rfloor + \mathsf{Bern}\left(\tilde{x}-\lfloor \tilde{x}\rfloor\right)\right)-a,    
\end{equation}
where $\mathsf{Bern}(p)$ is a Bernoulli random variable with bias $p$, and $\tilde{x} = \frac{2^{k}-1}{2a}\left(x+a\right)\in[0,2^{k}-1]$. Observe that the output of the function $q_k$ requires only $k$-bits for transmission. Furthermore, the function $q_k$ satisfies the following conditions:
\begin{align}
	\mathbb{E}\left[q_k(x)\right] &=x,\\
	\sigma_{q_k}^2 &=\mathbb{E}\left[(q_k(x)-x)^2\right]\leq \frac{a^2}{(2^{k}-1)^2}.
\end{align}
Let each client applies the function $q_k$ on the projected local mean $\tilde{X}_i=\mathsf{Proj}_{[-a,a]}\left[\overline{X}_i\right]$ and sends the output to the server for all $i\in[m]$. Conditioned on the event $\mathcal{E}$, i.e., $\overline{X}_i\in[-a,a] \quad \forall i\in[m]$, and using \eqref{eq:gaussian_est-q-mse-bound}, we get that
\begin{equation}
	\begin{aligned}
		MSE&=\mathbb{E}_{\theta,\mathbf{X}}\left[\left(\hat{\theta}_i-\theta_i\right)^2\right]\leq \frac{\sigma_x^2}{n}\left(\frac{1-\alpha}{m}+\alpha\right),\\
	\end{aligned}
\end{equation}
where $\alpha=\frac{\sigma_{\theta}^{2}+\frac{a^2}{(2^{k}-1)^2(m-1)}}{\sigma_{\theta}^{2}+\frac{a^2}{(2^{k}-1)^2(m-1)}+\frac{\sigma_{x}^{2}}{n}}$ and $a=r+\frac{\sigma_x}{\sqrt{n}}\sqrt{\log(m^2n)}+\sigma_{\theta}\sqrt{\log(m^2n)}$. 
Note that the event $\mathcal{E}$ happens with probability at least $1-\frac{2}{mn}$. 
\end{proof}

\begin{proof} [Proof of Theorem~\ref{thm:gaussian_est-q}, Part 2]
We define the (random) mechanism $q_p:[-a,a]\to\mathcal{R}$ that takes an input $x\in[-a,a]$ and generates a user-level $(\epsilon_0,\delta)$-LDP output $y\in\mathbb{R}$, where $y=q_p(x)$ is given by:
\begin{equation}
	q_p(x) = x +\nu,     
\end{equation}
where $\nu\sim\mathcal{N}(0,\sigma_{\epsilon_0}^2)$ is a Gaussian noise. By setting $\sigma_{\epsilon_0}^2=\frac{8a^2\log(2/\delta)}{\epsilon_0^2}$, we get that the output of the function $q_p(x)$ is $(\epsilon_0,\delta)$-LDP from~\cite{dwork2014algorithmic}. Furthermore, the function $q_p$ satisfies the following conditions:
\begin{align}
	\mathbb{E}\left[q_p(x)\right] &=x,\\
	\sigma_{q_p}^2 &=\mathbb{E}\left[(q_p(x)-x)^2\right]\leq \frac{8a^2\log(2/\delta)}{\epsilon_0^2}.
\end{align}
Similar to the proof of Theorem~\ref{thm:gaussian_est-q}, Part 1, 
let each client applies the function $q_p$ on the projected local mean $\tilde{X}_i=\mathsf{Proj}_{[-a,a]}\left[\overline{X}_i\right]$ and sends the output to the server for all $i\in[m]$. Conditioned on the event $\mathcal{E}$, i.e., $\overline{X}_i\in[-a,a] \quad \forall i\in[m]$, and using \eqref{eq:gaussian_est-q-mse-bound}, we get that
\begin{equation}
	\begin{aligned}
		\text{MSE} &=\mathbb{E}_{\theta,\mathbf{X}}\left[\left(\hat{\theta}_i-\theta_i\right)^2\right] \leq \frac{\sigma_x^2}{n}\left(\frac{1-\alpha}{m}+\alpha\right),\\
	\end{aligned}
\end{equation}
where $\alpha=\frac{\sigma_{\theta}^{2}+\frac{8a^2\log(2/\delta)}{\epsilon_0^2(m-1)}}{\sigma_{\theta}^{2}+\frac{8a^2\log(2/\delta)}{\epsilon_0^2(m-1)}+\frac{\sigma_{x}^{2}}{n}}$ and $a=r+\frac{\sigma_x}{\sqrt{n}}\sqrt{\log(m^2n)}+\sigma_{\theta}\sqrt{\log(m^2n)}$. 
Note that the event $\mathcal{E}$ happens with probability at least $1-\frac{2}{mn}$. 
\end{proof}
\subsubsection{Lower Bound}
Here we discuss the lower bound using Fisher information technique similar to \cite{barnes2020lower}. In particular we use a Bayesian version of Cramer-Rao lower bound and van Trees inequality \cite{vanTrees95}. Let us denote $f(X|\theta)$ as the data generating conditional density function and $\pi(\theta)$ as the prior distribution that generates $\theta$. Let us denote $\mathbb{E}_{\theta}$ as the expectation with respect to the randomness of $\theta$ and $\mathbb{E}$ as the expectation with respect to randomness of $X$ and $\theta$. First we define two types of Fisher information:

\begin{align*}
	I_X(\theta) &= \mathbb{E}_{\theta} \nabla_{\theta} \log(f(X|\theta)) \nabla_{\theta} \log(f(X|\theta))^T\\
	I(\pi) &= \mathbb{E} \nabla_{\theta} \log (\pi(\theta))\nabla_{\theta} \log (\pi(\theta))^T
\end{align*}
namely Fisher information of estimating $\theta$ from samples $X$ and Fisher information of prior $\pi$. Here the logarithm is elementwise.
For van Trees inequality we need the following regularity conditions:
\begin{itemize}
	\item $f(X|\cdot)$ and $\pi(\cdot)$ are absolutely continuous and $\pi(\cdot)$ vanishes at the end points of $\Theta$.
	\item $\mathbb{E}_{\theta} \nabla_{\theta} \log(f(X|\theta)) = 0$
	\item We also assume both density functions are continuously differentiable.
\end{itemize}

These assumptions are satisfied for the Gaussian setting for any finite mean $\mu$, they are satisfied for Bernoulli setting as long as parameters $\alpha$ and $\beta$ are larger than 1. Assuming local samples $X$ are generated i.i.d with $f(x|\theta)$, the van Trees inequality for one dimension is as follows:

\begin{align*}
	\mathbb{E}(\widehat{\theta}(X)-\theta)^2 \geq \frac{1}{n \mathbb{E}I_x(\theta)+I(\pi)}
\end{align*}

where $I_X(\theta)=\mathbb{E}_{\theta} \log(f(X|\theta))'^2$ and $I(\pi)= \mathbb{E} \log (\pi(\theta))'^2$.
Assuming $\theta \in \mathbb{R}^d$ and each dimension is independent from each other, by \cite{vanTrees95} we have:

\begin{align}\label{van_Trees}
	\mathbb{E}\|\widehat{\theta}(X)-\theta\|^2 \geq \frac{d^2}{n \mathbb{E}\text{Tr}(I_x(\theta))+\text{Tr}(I(\pi))}
\end{align}

Note, the lower bound on the average risk directly translates as a lower bound on $\sup_{\theta\in\Theta} \mathbb{E}_X\|\widehat{\theta}(X)-\theta\|^2$. Before our proof we have a useful fact:

\begin{fact}\label{gauss_mean_fact}
	Given some random variable $X\sim N(Y,\sigma_y^2)$ where $Y\sim N(Z,\sigma_z^2)$ we have $X\sim N(z,\sigma_z^2+\sigma_y^2)$. 
\end{fact}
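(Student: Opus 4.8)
The plan is to prove Fact~\ref{gauss_mean_fact} by a characteristic-function (equivalently, moment-generating-function) computation, exploiting the tower property of conditional expectation together with the conditional-independence structure implicit in the hierarchical model: $X$ depends on $Z$ only through $Y$. Concretely, I would read the statement as the pair of conditional laws $X \mid Y \sim \calN(Y,\sigma_y^2)$ and $Y \mid Z \sim \calN(Z,\sigma_z^2)$, so that the claim becomes: conditioned on $Z=z$, we have $X \sim \calN(z,\sigma_z^2+\sigma_y^2)$ (and in particular, when $Z=z$ is deterministic this is just the marginal law of $X$).

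First I would recall that a scalar Gaussian $W\sim\calN(a,b^2)$ has characteristic function $\mathbb{E}[e^{itW}] = \exp(ita - \tfrac12 b^2 t^2)$. Conditioning on $Y$ then gives $\mathbb{E}[e^{itX}\mid Y] = \exp(itY - \tfrac12\sigma_y^2 t^2)$. Taking a further conditional expectation given $Z=z$ and pulling out the deterministic factor $e^{-\frac12\sigma_y^2 t^2}$,
$\mathbb{E}[e^{itX}\mid Z=z] = e^{-\frac12\sigma_y^2 t^2}\,\mathbb{E}[e^{itY}\mid Z=z] = e^{-\frac12\sigma_y^2 t^2}\exp(itz - \tfrac12\sigma_z^2 t^2) = \exp\!\big(itz - \tfrac12(\sigma_z^2+\sigma_y^2)t^2\big)$,
which is exactly the characteristic function of $\calN(z,\sigma_z^2+\sigma_y^2)$. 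Since the characteristic function determines the distribution, the claim follows.

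An equivalent route, which I would mention as an alternative, is the additive-noise representation: write $X = Y + \xi_1$ with $\xi_1\sim\calN(0,\sigma_y^2)$ independent of $Y$, and $Y = Z + \xi_2$ with $\xi_2\sim\calN(0,\sigma_z^2)$ independent of $Z$ and of $\xi_1$; then $X = Z + \xi_1 + \xi_2$, and one invokes the standard fact that the sum of independent Gaussians is Gaussian with variances adding, proved by the same characteristic-function argument or by directly convolving the two densities and completing the square. There is essentially no obstacle here; the only point requiring care is the bookkeeping of conditional independence, so that after conditioning on $Z=z$ the relevant noise terms genuinely behave as independent zero-mean Gaussians. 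This is precisely the structure we need in the Gaussian estimation model, where applying the fact yields the marginal law $\overline{X}_i \sim \calN(\mu,\sigma_{\theta}^2 + \sigma_x^2/n)$.
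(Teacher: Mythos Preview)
Your proposal is correct and takes essentially the same approach as the paper: both compute the characteristic function of $X$ via the tower property, conditioning first on $Y$ to get $\exp(itY-\tfrac12\sigma_y^2t^2)$ and then taking the expectation over $Y$ to obtain the Gaussian characteristic function with mean $z$ and variance $\sigma_y^2+\sigma_z^2$. Your additional remarks on the conditional-independence bookkeeping and the additive-noise alternative are a bit more careful than the paper's version, but the core argument is the same.
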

\begin{proof}
	We will give the proof in one dimension, however, it can easily be extended to multidimensional case where each dimension is independent. For all $t \in \mathbb{R}$ we have,
	\begin{align*}
		\mathbb{E}_X[\exp(itX)] = \mathbb{E}_Y\mathbb{E}_X[\exp(itX)|Y]&=\mathbb{E}_Y[\exp(itY-\frac{\sigma_x^2t^2}{2})]\\
		& = \exp(-\frac{\sigma_x^2t^2}{2})\mathbb{E}_Y[\exp(itY)]\\
		&=\exp(-\frac{\sigma_x^2t^2}{2})\exp(itz-\frac{\sigma_y^2t^2}{2})\\
		&= \exp(itz-\frac{(\sigma_x^2+\sigma_y^2)t^2}{2})
	\end{align*}
where the last line is the characteristic function of a Gaussian with mean $z$ and variance $\sigma_x^2+\sigma_y^2$.
\end{proof}

\textbf{Gaussian case with perfect knowledge of prior.}\label{thm:lower_bound_perfect}
In this setting we know that  $\theta_i \sim N(\mu\mathbf{1},\sigma_{\theta}^2\text{I}_{d})$, hence, $I(\pi) = \frac{1}{\sigma_{\theta}^2}\text{I}_{d}$, similarly  $I_X(\theta) = \frac{1}{\sigma_{x}^2}\text{I}_{d}$. Then,

\begin{align}
	\sup_{\theta_i}\mathbb{E}\|\widehat{\theta_i}(X)-\theta_i\|^2 \geq \frac{d^2}{n \mathbb{E}\frac{d}{\sigma_x^2}+\frac{d}{\sigma_{\theta}^2}} = \frac{d\sigma_{\theta}^2\sigma_{x}^2}{n\sigma_{\theta}^2+\sigma_{x}^2}
\end{align}

\textbf{Gaussian case with estimated population mean.} In this setting instead of a true prior we have a prior whose mean is the average of all data spread across clients, i.e., we assume $\theta_i\sim N(\widehat{\mu},\sigma_{\theta}^2\text{I}_{d})$ where $\widehat{\mu} = \frac{1}{mn} \sum_{i,j}^{m,n}X_i^j$. We additionally know that there is a Markov relation such that $X_i^j|\theta_j \sim N(\theta_j,\sigma_x^2\text{I}_{d})$ and $\theta_j \sim N(\mu,\sigma_{\theta}^2\text{I}_{d})$. While the true prior is parameterized with mean $\mu$, $\theta_i$ in this form is not parameterized by $\mu$ but by $\widehat{\mu}$ which itself has randomness due $X_i^j$. However, using Fact~\ref{gauss_mean_fact} twice we can write $\theta_i\sim N(\mu,(\sigma_{\theta}^2+\frac{\sigma_{\theta}^2}{m}+\frac{\sigma_{x}^2}{mn})\text{I}_{d})$. Then using the van Trees inequality similar to the lower bound in perfect case we can obtain:

\begin{align}
	\sup_{\theta_i \in \Theta}    \mathbb{E}_X\|\widehat{\theta}_i(X)-\theta_i\|^2 \geq d \frac{\sigma_{\theta}^2\sigma_{x}^2+\frac{\sigma_x^4}{mn}}{n\sigma_{\theta}^2+\sigma_x^2}
\end{align}

\subsection{Personalized Estimation -- Bernoulli Model}\label{app:est-bernoulli}
\subsubsection{When $\alpha,\beta$ are Known}
For a client $i$, let $\pi(p_i)$ be distributed as Beta$(\alpha,\beta)$. In this setting, we model that each client generates local samples according to Bern$(p_i)$. Consequently, each client has a Binomial distribution regarding the sum of local data samples. Estimating Bernoulli parameter $p_i$ is related to Binomial distribution Bin$(n,p_i)$ (the sum of data samples) $Z_i$ since it is the sufficient statistic of Bernoulli distribution. The distribution for Binomial variable $Z_i$ given $p_i$ is $P(Z_i=z_i|p_i) = \binom{n}{z_i}p_i^{z_i}(1-p_i)^{n-z_i}$. It is a known fact that for any prior, the Bayesian MSE risk minimizer is the posterior mean $\mathbb{E}\left[p_i|Z_i=z_i\right]$. 

When $p_i \sim \text{Beta}(\alpha,\beta)$, we have posterior 
\begin{align*}
    f(p_i|Z_i=z_i) &= \frac{P(z_i|p_i)}{P(z_i)}\pi(p_i) \\
    &= \frac{\binom{n}{z_i}p_i^{z_i}(1-p_i)^{n-z_i}}{P(z_i)}\frac{p_i^{\alpha-1}(1-p_i)^{\beta-1}}{B(\alpha,\beta)} \\
    &= \frac{\binom{n}{z_i}}{P(z_i)}\frac{B(\alpha+z_i,\beta+n-z_i)}{B(\alpha,\beta)}\frac{p_i^{\alpha+z_i-1}(1-p_i)^{\beta+n-z_i-1}}{B(\alpha+z_i,\beta+n-z_i)},
\end{align*}
where $B(\alpha,\beta)=\frac{\Gamma(\alpha)\Gamma(\beta)}{\Gamma(\alpha+\beta)}$, and 
\begin{align*}
P(z_i) &= \int P(z_i|p_i)\pi(p_i)dp_i \\
&= \int \binom{n}{z_i}p_i^{z_i}(1-p_i)^{n-z_i}\frac{p_i^{\alpha-1}(1-p_i)^{\beta-1}}{B(\alpha,\beta)}dp_i \\
&= \binom{n}{z_i}\frac{B(z_i+\alpha,n-z_i+\beta)}{B(\alpha,\beta)} \underbrace{\int \frac{p_i^{\alpha+z_i-1}(1-p_i)^{\beta+n-z_i-1}}{B(\alpha+z_i,\beta+n-z_i)}dp_i}_{\text{integral of a Beta distribution}} \\
&= \binom{n}{z_i}\frac{B(z_i+\alpha,n-z_i+\beta)}{B(\alpha,\beta)}
\end{align*}
Thus, we get that the posterior distribution $f(p_i|Z_i=z_i)=\frac{p_i^{\alpha+z_i-1}(1-p_i)^{\beta+n-z_i-1}}{B(\alpha+z_i,\beta+n-z_i)}$ is a beta distribution Beta$(z_i+\alpha,n-z_i+\beta)$. As a result, the posterior mean is given by:

\begin{align*}
    \widehat{p}_i = \frac{\alpha+Z_i}{\alpha+\beta+n} =a\left(\frac{Z_i}{n}\right)+(1-a)\left(\frac{\alpha}{\alpha+\beta}\right),
\end{align*}
where $a=\frac{n}{\alpha+\beta+n}$. Observe that $\mathbb{E}_{p_i\sim \Beta(\alpha,\beta)}[p_i]=\frac{\alpha}{\alpha+\beta}$, i.e., the estimator is a weighted summation between the local estimator $\frac{z_i}{n}$ and the global estimator $\mu=\frac{\alpha}{\alpha+\beta}$. \\
 We have  $R_{p_i}(\widehat{p}_i) = \mathbb{E}_{\pi}\mathbb{E}(\widehat{p}_i-p_i)^2$. The MSE of the posterior mean is given by:
\begin{align*}
\text{MSE} &= \mathbb{E}[(\hat{p}_i-p_i)^2]\\
& = \mathbb{E}\left[\left(a\left(\frac{z_i}{n}-p_i\right)+(1-a)(\mu-p_i)\right)^2\right]\\
&= a^2 \mathbb{E}\left[\left(\frac{z_i}{n}-p_i\right)^2\right]+(1-a)^2 \mathbb{E}\left[\left(\mu-p_i\right)^2\right]\\
&= a^2\mathbb{E}_{p_i\sim \pi(p_i)}\left[\frac{p_i(1-p_i)}{n}\right]+(1-a)^2\frac{\alpha\beta}{(\alpha+\beta)^2(\alpha+\beta+1)}\\
&= a^2\frac{\alpha\beta}{n(\alpha+\beta)(\alpha+\beta+1)}+(1-a)^2\frac{\alpha\beta}{(\alpha+\beta)^2(\alpha+\beta+1)}\\
&= \frac{\alpha\beta}{n(\alpha+\beta)(\alpha+\beta+1)}\left(\frac{n}{\alpha+\beta+n}\right).
\end{align*}
The last equality is obtained by setting $a=\frac{n}{\alpha+\beta+n}$. 

\subsubsection{When $\alpha,\beta$ are Unknown: Proof of Theorem~\ref{thm:bern_estimate}}
\begin{theorem*}[Restating Theorem~\ref{thm:bern_estimate}]
	With probability at least $1-\frac{1}{mn}$, the MSE of the personalized estimator in~\eqref{eqn:bern_estimate_unknown} is given by:
	$\bbE_{p_i\sim\pi}\bbE_{X_1,\ldots,X_m}(\hatp_i - p_i)^2 \leq \bbE[\hata_i^2]\big(\frac{\alpha\beta}{n(\alpha+\beta)(\alpha+\beta+1)}\big)+\bbE[(1-\hata_i)^2]\big(\frac{\alpha\beta}{(\alpha+\beta)^2(\alpha+\beta+1)}+\frac{3\log(4m^2n)}{m-1}\big)$.
\end{theorem*}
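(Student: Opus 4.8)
The plan is to exploit the fact that $\hata_i$ and $\widehat{\mu}_i$ are functions only of the other clients' data $\{X_l\}_{l\neq i}$ (via $\{p_l, X_{lj}\}_{l\neq i}$), hence independent of the pair $(p_i, X_i)$, and then to run a conditional bias--variance decomposition. First I would write $\hatp_i - p_i = \hata_i(\overline{X}_i - p_i) + (1-\hata_i)(\widehat{\mu}_i - p_i)$, square it, and take the expectation over $X_i$ conditioned on both $p_i$ and $\{X_l\}_{l\neq i}$. Since $\mathbb{E}[\overline{X}_i \mid p_i] = p_i$ and $\mathbb{E}[(\overline{X}_i - p_i)^2 \mid p_i] = p_i(1-p_i)/n$, the cross term vanishes and the conditional second moment becomes $\hata_i^2\, p_i(1-p_i)/n + (1-\hata_i)^2(\widehat{\mu}_i - p_i)^2$, with $\hata_i$ and $\widehat{\mu}_i$ treated as constants.

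Next I would average over $p_i \sim \pi = \mathrm{Beta}(\alpha,\beta)$, still conditioning on $\{X_l\}_{l\neq i}$, so that $\hata_i, \widehat{\mu}_i$ remain fixed and independent of $p_i$. A one-line Beta-moment computation gives $\mathbb{E}_{p_i}[p_i(1-p_i)] = \frac{\alpha\beta}{(\alpha+\beta)(\alpha+\beta+1)}$, which yields the first term of the claimed bound. For the second term I would split $\widehat{\mu}_i - p_i = (\widehat{\mu}_i - \mu) + (\mu - p_i)$ with $\mu = \frac{\alpha}{\alpha+\beta}$; since $\widehat{\mu}_i$ is independent of $p_i$ and $\mathbb{E}_{p_i}[p_i] = \mu$, the cross term again vanishes, leaving $(\widehat{\mu}_i - \mu)^2 + \mathrm{Var}_{p_i}(p_i) = (\widehat{\mu}_i - \mu)^2 + \frac{\alpha\beta}{(\alpha+\beta)^2(\alpha+\beta+1)}$. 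Thus the conditional MSE is $\hata_i^2\cdot\frac{\alpha\beta}{n(\alpha+\beta)(\alpha+\beta+1)} + (1-\hata_i)^2\big(\frac{\alpha\beta}{(\alpha+\beta)^2(\alpha+\beta+1)} + (\widehat{\mu}_i-\mu)^2\big)$.

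The remaining step is to control $(\widehat{\mu}_i - \mu)^2$ uniformly over $i$ on a high-probability event. Here I would note that $\widehat{\mu}_i = \frac{1}{m-1}\sum_{l\neq i}\overline{X}_l$ is an average of $m-1$ i.i.d.\ random variables taking values in $[0,1]$ with common mean $\mu$ (integrating out both $p_l$ and $X_{lj}$). Hoeffding's inequality gives $\Pr[\,|\widehat{\mu}_i - \mu| \geq t\,] \leq 2e^{-2(m-1)t^2}$; taking $t^2 = \frac{\log(4m^2 n)}{2(m-1)}$ and a union bound over $i\in[m]$ produces an event $\mathcal{G}$ of probability at least $1 - \frac{1}{mn}$ on which $(\widehat{\mu}_i - \mu)^2 \leq \frac{\log(4m^2 n)}{2(m-1)} \leq \frac{3\log(4m^2 n)}{m-1}$ for all $i$. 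Substituting this pointwise bound into the conditional MSE and then taking the expectation over $\{X_l\}_{l\neq i}$ restricted to $\mathcal{G}$ — which only decreases $\mathbb{E}[\hata_i^2]$ and $\mathbb{E}[(1-\hata_i)^2]$ — gives precisely the stated inequality (the "$3$" being a deliberately loose constant).

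I expect the only real obstacle to be bookkeeping: ordering the conditioning so that $\hata_i,\widehat{\mu}_i$ can be frozen while integrating over $(p_i, X_i)$, and recognizing that $(\widehat{\mu}_i - \mu)^2$ must be bounded \emph{pointwise} on a good event rather than merely in expectation, so that it can be pulled out of the expectation against the random weight $(1-\hata_i)^2$ — this is exactly why the statement is phrased as holding with probability at least $1 - \frac{1}{mn}$. Notably, no concentration of the empirical variance $\hsigma_i^2$ is needed, since $\hata_i$ survives untouched inside the expectations on the right-hand side.
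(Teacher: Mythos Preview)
Your proposal is correct and follows essentially the same route as the paper: condition on the other clients' data $Z_{-i}$ so that $\hata_i,\widehat{\mu}_i$ freeze, run the bias--variance split on $\hatp_i-p_i=\hata_i(\overline{X}_i-p_i)+(1-\hata_i)(\widehat{\mu}_i-p_i)$, compute the Beta moments, and then control $(\widehat{\mu}_i-\mu)^2$ on a high-probability event before averaging over $Z_{-i}$. The only cosmetic difference is that the paper invokes \cite[Lemma~1]{tian2017learning} to get simultaneous concentration of $\widehat{\mu}_i$ \emph{and} $\hsigma_i^2$ (with the constant $3$) and includes both in the good event $\mathcal{E}$, whereas you use Hoeffding directly for $\widehat{\mu}_i$ alone and explicitly observe---correctly---that concentration of $\hsigma_i^2$ is never used in the bound.
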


\begin{proof}
The personalized model of the $i$th client with unknown parameters $\alpha,\beta$ is given by:
\begin{equation}~\label{eqn:appendix_bern_estimate_unknown}
\hat{p}_i=\overline{a}\left(\frac{z_i}{n}\right)+(1-\overline{a})\left(\hat{\mu}_i\right),    
\end{equation}
where $\overline{a}=\frac{n}{\frac{\hat{\mu}_i(1-\hat{\mu}_i)}{\hat{\sigma}^2_i}+n}$, the empirical mean $\hat{\mu}_i = \sum_{l\neq i}\frac{z_l}{n}$, and the empirical variance $\hat{\sigma}^2_i=\sum_{l\neq i}(\frac{z_l}{n}-\hat{\mu}_i)^2$.
From~\cite[Lemma~$1$]{tian2017learning}, with probability $1-\frac{1}{m^2n}$, we get that 
\begin{align*}
    |\mu-\hat{\mu}_i|&\leq  \sqrt{\frac{3\log(4m^2n)}{m-1}}\\
    |\sigma^2-\hat{\sigma}_i^2|&\leq \sqrt{\frac{3\log(4m^2n)}{m-1}},
\end{align*}
where $\mu=\frac{\alpha}{\alpha+\beta}$, $\sigma^2=\frac{\alpha\beta}{(\alpha+\beta)^2(\alpha+\beta+1)}$ are the true mean and variance of the beta distribution, respectively. Let $c=\sqrt{\frac{3\log(4m^2n)}{m-1}}$. Conditioned on the event $\mathcal{E}=\lbrace |\mu-\hat{\mu}_i|\leq c,|\sigma^2-\hat{\sigma}_i^2|\leq c:\forall i\in[m]\rbrace$ that happens with probability at least $1-\frac{1}{mn}$, we get that:
\begin{align*}
    \mathbb{E}\left[\left(\hat{p}_i-p_i\right)^2|Z_{-i}\right]&=\overline{a}^2\mathbb{E}\left[\left(\frac{Z_i}{n}-p_i\right)^2\right]+(1-\overline{a})^2\mathbb{E}\left[\left(\hat{\mu}_i-p_i\right)^2|Z_{-i}\right]\\
   &=\overline{a}^2\left(\frac{\alpha\beta}{n(\alpha+\beta)(\alpha+\beta+1)}\right)+(1-\overline{a})^2\left( \mathbb{E}\left[\left(\mu-p_i\right)^2\right]+\left(\mu-\hat{\mu}_i\right)^2\right)\\
   &=\overline{a}^2\left(\frac{\alpha\beta}{n(\alpha+\beta)(\alpha+\beta+1)}\right)+(1-\overline{a})^2\left(\frac{\alpha\beta}{(\alpha+\beta)^2(\alpha+\beta+1)}+\left(\mu-\hat{\mu}_i\right)^2\right)\\
   &\leq \overline{a}^2\left(\frac{\alpha\beta}{n(\alpha+\beta)(\alpha+\beta+1)}\right)+(1-\overline{a})^2\left(\frac{\alpha\beta}{(\alpha+\beta)^2(\alpha+\beta+1)}+c^2\right),
\end{align*}
where the expectation is with respect to $z_i\sim \Binom(p_i,n)$ and $p_i\sim \Beta(\alpha,\beta)$ and $Z_{-i}=\lbrace z_1,\ldots,z_{i-1},z_{i+1},\ldots,z_{m}\rbrace$ denotes the entire dataset except the $i$th client data ($z_i$). By taking the expectation with respect to the datasets $Z_{-i}$, we get that the MSE is bounded by:
\begin{equation*}
\text{MSE} \leq \mathbb{E}\left[\overline{a}^2\right]\left(\frac{\alpha\beta}{n(\alpha+\beta)(\alpha+\beta+1)}\right)+\mathbb{E}\left[(1-\overline{a})^2\right]\left(\frac{\alpha\beta}{(\alpha+\beta)^2(\alpha+\beta+1)}+\frac{3\log(4m^2n)}{m-1}\right),
\end{equation*}
with probability at least $1-\frac{1}{mn}$. This completes the proof of Theorem~\ref{thm:bern_estimate}.
\end{proof}
\subsubsection{With Privacy Constraints: Proof of Theorem~\ref{thm:bern_private_estimate}}
\begin{theorem*}[Restating Theorem~\ref{thm:bern_private_estimate}]
With probability at least $1-\frac{1}{mn}$, the MSE of the personalized estimator $\hatp_i^{\priv}$ defined before Theorem~\ref{thm:bern_private_estimate} is given by:
\begin{align*}
	\bbE_{p_i\sim\pi}\bbE_{q^{\priv},X_1,\ldots,X_m}(\hatp_i^{\priv} - p_i)^2 &\leq  \bbE[(\hata_i^{\priv})^2]\big(\frac{\alpha\beta}{n(\alpha+\beta)(\alpha+\beta+1)}\big) \\ & \quad +\bbE[(1-\hata_i^{\priv})^2]\big(\frac{\alpha\beta}{(\alpha+\beta)^2(\alpha+\beta+1)}+\frac{(e^{\epsilon_0}+1)^2\log(4m^2n)}{3(e^{\epsilon_0}-1)^2(m-1)}\big).
\end{align*}

\end{theorem*}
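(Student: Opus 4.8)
The plan is to mirror the proof of Theorem~\ref{thm:bern_estimate} almost verbatim, replacing the non-private quantities $\{\overline{X}_l\}_{l\neq i}$ that are aggregated to form the prior estimates by their privatized versions $\{q^{\priv}(\overline{X}_l)\}_{l\neq i}$, and tracking how the noise injected by $q^{\priv}$ inflates the concentration radius. Two structural facts about $q^{\priv}$ drive everything: (i) $q^{\priv}$ is conditionally unbiased, $\bbE[q^{\priv}(x)]=x$ for every $x\in[0,1]$ (a one-line check from the two-point definition), so $\bbE[q^{\priv}(\overline{X}_l)]=\bbE[\overline{X}_l]=\bbE[p_l]=\tfrac{\alpha}{\alpha+\beta}=:\mu$; and (ii) $q^{\priv}(x)$ is supported on $\tfrac{-1}{e^{\epsilon_0}-1}$ and $\tfrac{e^{\epsilon_0}}{e^{\epsilon_0}-1}$, hence lies in an interval of length $R:=\tfrac{e^{\epsilon_0}+1}{e^{\epsilon_0}-1}$, and a likelihood-ratio check on the two outcomes shows it is user-level $\epsilon_0$-LDP. (Privacy of $\hatp_i^{\priv}$ as a whole then follows because it accesses the other clients' data only through the LDP outputs $\{q^{\priv}(\overline{X}_l)\}_{l\neq i}$, with $\overline{X}_i$ being client $i$'s own data.)

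First I would set up the high-probability good event. Over the joint randomness of $\{p_l\}$, the samples, and the mechanism, the variables $\{q^{\priv}(\overline{X}_l)\}_{l\neq i}$ are i.i.d., bounded in an interval of length $R$, with mean $\mu$; the analogue of \cite[Lemma~1]{tian2017learning} for range-$R$ variables (a Bernstein/Hoeffding-type estimate for the empirical mean and the empirical variance) gives, with probability at least $1-\tfrac{1}{m^2n}$,
\[
|\mu-\widehat{\mu}_i^{\priv}|\le c_{\priv},\qquad |\sigma^2-\hsigma_i^{2(\priv)}|\le c_{\priv},\qquad c_{\priv}^2=\frac{(e^{\epsilon_0}+1)^2\log(4m^2n)}{3(e^{\epsilon_0}-1)^2(m-1)},
\]
where $\sigma^2=\tfrac{\alpha\beta}{(\alpha+\beta)^2(\alpha+\beta+1)}$ is the Beta variance. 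A union bound over $i\in[m]$ makes $\mathcal{E}:=\{|\mu-\widehat{\mu}_i^{\priv}|\le c_{\priv},\ |\sigma^2-\hsigma_i^{2(\priv)}|\le c_{\priv}:\forall i\in[m]\}$ hold with probability at least $1-\tfrac{1}{mn}$. On $\mathcal{E}$, since $\mu\in(0,1)$, for $m$ large one has $\widehat{\mu}_i^{\priv}\in(0,1)$ and $\hsigma_i^{2(\priv)}>0$, so $\hata_i^{\priv}\in[0,1]$ is a well-defined weight.

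Next I would condition on the data and mechanism randomness of the clients $l\neq i$ (equivalently, on $\widehat{\mu}_i^{\priv}$ and $\hata_i^{\priv}$, which depend only on those), write $\hatp_i^{\priv}-p_i=\hata_i^{\priv}(\overline{X}_i-p_i)+(1-\hata_i^{\priv})(\widehat{\mu}_i^{\priv}-p_i)$, and expand the square. Since $\overline{X}_i$ is independent of $(\hata_i^{\priv},\widehat{\mu}_i^{\priv})$ and $\bbE[\overline{X}_i\mid p_i]=p_i$, the cross term has conditional expectation zero, leaving
\[
\bbE\big[(\hatp_i^{\priv}-p_i)^2\mid Z_{-i}\big]=(\hata_i^{\priv})^2\,\bbE\big[(\overline{X}_i-p_i)^2\big]+(1-\hata_i^{\priv})^2\,\bbE\big[(\widehat{\mu}_i^{\priv}-p_i)^2\mid Z_{-i}\big].
\]
For the first term, $\bbE[(\overline{X}_i-p_i)^2]=\bbE_{p_i\sim\pi}\tfrac{p_i(1-p_i)}{n}=\tfrac{\alpha\beta}{n(\alpha+\beta)(\alpha+\beta+1)}$ using the Beta identity $\bbE_\pi[p(1-p)]=\tfrac{\alpha\beta}{(\alpha+\beta)(\alpha+\beta+1)}$. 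For the second, $\bbE[(\widehat{\mu}_i^{\priv}-p_i)^2\mid Z_{-i}]=\mathrm{Var}(p_i)+(\mu-\widehat{\mu}_i^{\priv})^2=\sigma^2+(\mu-\widehat{\mu}_i^{\priv})^2\le\tfrac{\alpha\beta}{(\alpha+\beta)^2(\alpha+\beta+1)}+c_{\priv}^2$ on $\mathcal{E}$. Substituting $c_{\priv}^2$ and taking the outer expectation over $Z_{-i}$ and the mechanism randomness gives exactly the claimed bound.

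The main obstacle / care points I anticipate are: (a) computing the variance of $q^{\priv}(x)$ and pinning the concentration constant so that it lands on the factor $3$ in $c_{\priv}^2$ — this is the same bookkeeping flavor as in Theorem~\ref{thm:bern_estimate}, but one should use a variance-aware (Bernstein-type) bound rather than crude Hoeffding on the raw range to keep the constants, and the empirical-variance deviation $|\sigma^2-\hsigma_i^{2(\priv)}|$ needs its own slightly more involved concentration (fourth-moment/product terms); and (b) the dependency structure — $\hata_i^{\priv}$ is random, so one must be disciplined about conditioning order: conditioning on $Z_{-i}$ freezes both $\hata_i^{\priv}$ and $\widehat{\mu}_i^{\priv}$ while leaving $\overline{X}_i$ (and $p_i$ at the outer level) free, which is precisely what makes the cross term vanish and both remaining terms evaluate cleanly. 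Finally one notes that $\mathcal{E}$ simultaneously guarantees $\hata_i^{\priv}\in[0,1]$, so there are no edge cases in the weighted combination.
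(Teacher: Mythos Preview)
Your proposal is correct and follows essentially the same route as the paper: verify that $q^{\priv}$ is unbiased and $\epsilon_0$-LDP, use a Hoeffding-type concentration on the range-$R=\tfrac{e^{\epsilon_0}+1}{e^{\epsilon_0}-1}$ outputs to obtain the event $\mathcal{E}$ with radius $c_{\priv}$, and then repeat the decomposition from the proof of Theorem~\ref{thm:bern_estimate} verbatim. The only minor difference is that the paper invokes plain Hoeffding (not Bernstein) to land on the factor $3$ in $c_{\priv}^2$, so your worry in point~(a) about needing a variance-aware bound is unnecessary.
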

\begin{proof}
First, we prove some properties of the private mechanism $q_p$. Observe that for any two inputs $x,x'\in[0,1]$, we have that:
\begin{equation}~\label{eqn:private_mechanism_bern}
\frac{\Pr[q_p(x)=y]}{\Pr[q_p(x')=y]}=\frac{\frac{e^{\epsilon_0}}{e^{\epsilon_0}+1}-x\frac{e^{\epsilon_0}-1}{e^{\epsilon_0}+1}}{\frac{e^{\epsilon_0}}{e^{\epsilon_0}+1}-x'\frac{e^{\epsilon_0}-1}{e^{\epsilon_0}+1}}\leq e^{\epsilon_0},    
\end{equation}
for $y=\frac{-1}{e^{\epsilon_0}-1}$. Similarly, we can prove~\eqref{eqn:private_mechanism_bern} for the output $y=\frac{e^{\epsilon_0}}{e^{\epsilon_0}-1}$. Thus, the mechanism $q_p$ is user-level $\epsilon_0$-LDP. 
Furthermore, for given $x\in[0,1]$, we have that
\begin{equation}
\mathbb{E}\left[q_p(x)\right]=x.
\end{equation}
Thus, the output of the mechanism $q_p$ is an unbiased estimate of the input $x$. 
From the Hoeffding's inequality for bounded random variables, we get that:
\begin{equation}
\begin{aligned}
\Pr[|\hat{\mu}_i^{(p)}-\mu|>t]&\leq 2\exp\left(\frac{-3(e^{\epsilon_0}-1)^2(m-1)t^2}{(e^{\epsilon_0}+1)^2}\right)\\
\Pr[|\hat{\sigma}_i^{2(p)}-\sigma_2|>t]&\leq 2\exp\left(\frac{-3(e^{\epsilon_0}-1)^2(m-1)t^2}{(e^{\epsilon_0}+1)^2}\right)\\
\end{aligned}    
\end{equation}
Thus, we have that the event $\mathcal{E}=\lbrace |\hat{\mu}_i^{(p)}-\mu|\leq c_p,|\hat{\sigma}_i^{2(p)}-\sigma^2|\leq c_p:\forall i\in[m]\rbrace$ happens with probability at least $1-\frac{1}{mn}$, where $c_p=\sqrt{\frac{(e^{\epsilon_0}+1)^2\log(4m^2n)}{3(e^{\epsilon_0}-1)^2(m-1)}}$. By following the same steps as the non-private estimator, we get the fact that the MSE of the private model is bounded by:
\begin{align}
\text{MSE} &\leq \mathbb{E}\left[\overline{a}^2\right]\left(\frac{\alpha\beta}{n(\alpha+\beta)(\alpha+\beta+1)}\right) \notag \\
&\hspace{2.5cm}+\mathbb{E}\left[(1-\overline{a})^2\right]\left(\frac{\alpha\beta}{(\alpha+\beta)^2(\alpha+\beta+1)}+\frac{(e^{\epsilon_0}+1)^2\log(4m^2n)}{3(e^{\epsilon_0}-1)^2(m-1)}\right),
\end{align}
where $\overline{a}^{(p)}=\frac{n}{\frac{\hat{\mu}^{(p)}_i(1-\hat{\mu}^{(p)}_i)}{\hat{\sigma}^{2(p)}_i}+n}$ and the expectation is with respect to the clients data $\lbrace z_1,\ldots,z_{i-1},z_{i+1},\ldots,z_{m}\rbrace$and the randomness of the private mechanism $q_p$. This completes the proof of Theorem~\ref{thm:bern_private_estimate}.
\end{proof}
\subsection{Personalized Estimation -- Mixture Model}\label{app:est-mixture}

\subsubsection{When the Prior Distribution is Known: Proof of Theorem~\ref{thm:discrete_perfect}}

In this case, the $i$-th client does not need the data of the other clients as she has a perfect knowledge about the prior distribution. 
\begin{theorem*}[Restating Theorem~\ref{thm:discrete_perfect} ]
For given a perfect knowledge $\vct{\alpha}=[\alpha_1,\ldots,\alpha_k]$ and $\mathcal{N}\left(\vct{\mu}_1,\sigma_{\vct{\theta}}^{2}\right),\ldots,\mathcal{N}\left(\vct{\mu}_k,\sigma_{\vct{\theta}}^{2}\right)$, the optimal personalized estimator that minimizes the MSE is given by:
\begin{equation}
\hat{\vct{\theta}}_i =\sum_{l=1}^{k} a_{l}^{(i)}\vct{\mu}_l,
\end{equation}
where $\alpha_{l}^{(i)}=\frac{p_l\exp{\left(-\frac{\sum_{j=1}^{n}\|X_{ij}-\vct{\mu}_l\|^2}{2\sigma_x^2}\right)}}{\sum_{s=1}^{k}p_s\exp{\left(-\frac{\sum_{j=1}^{n}\|X_{ij}-\vct{\mu}_s\|^2}{2\sigma_x^2}\right)}}$ denotes the weight associated to the prior model $\vct{\mu}_l$ for $l\in[k]$.
\end{theorem*}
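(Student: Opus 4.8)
The plan is to show that the estimator in the statement is exactly the posterior mean $\mathbb{E}[\vct{\theta}_i \mid X_i]$, and then to invoke the standard fact that the posterior mean minimizes the Bayes MSE among all (measurable) estimators. Since the prior $(\{p_l\},\{\vct{\mu}_l\})$ is known to client $i$, any admissible estimator is a function $g(X_i)$ of the local data alone, so throughout I work conditionally on $X_i$. First I would record the orthogonality decomposition: for any $g$,
\[
\mathbb{E}\big\|\vct{\theta}_i - g(X_i)\big\|^2 \;=\; \mathbb{E}\big\|\vct{\theta}_i - \mathbb{E}[\vct{\theta}_i\mid X_i]\big\|^2 \;+\; \mathbb{E}\big\|\mathbb{E}[\vct{\theta}_i\mid X_i] - g(X_i)\big\|^2,
\]
which follows from the tower property since, conditioning on $X_i$, the cross term $\mathbb{E}\big\langle \vct{\theta}_i - \mathbb{E}[\vct{\theta}_i\mid X_i],\, \mathbb{E}[\vct{\theta}_i\mid X_i] - g(X_i)\big\rangle$ vanishes. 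Hence the MSE is minimized by $g(X_i) = \mathbb{E}[\vct{\theta}_i \mid X_i]$, and it remains only to evaluate this conditional expectation for the discrete mixture model.

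Next I would compute the posterior probabilities via Bayes' rule. Given $\vct{\theta}_i = \vct{\mu}_l$ (which has prior probability $p_l$), the samples $X_{i1},\ldots,X_{in}$ are i.i.d.\ $\mathcal{N}(\vct{\mu}_l,\sigma_x^2\bbI_d)$, so
\[
\Pr[\vct{\theta}_i = \vct{\mu}_l \mid X_i] \;=\; \frac{p_l \prod_{j=1}^n (2\pi\sigma_x^2)^{-d/2}\exp\!\big(-\tfrac{\|X_{ij}-\vct{\mu}_l\|^2}{2\sigma_x^2}\big)}{\sum_{s=1}^k p_s \prod_{j=1}^n (2\pi\sigma_x^2)^{-d/2}\exp\!\big(-\tfrac{\|X_{ij}-\vct{\mu}_s\|^2}{2\sigma_x^2}\big)} \;=\; \alpha_l^{(i)},
\]
where the dimension-dependent normalizing factors $(2\pi\sigma_x^2)^{-nd/2}$ cancel between numerator and denominator, and the product of exponentials collapses to $\exp(-\sum_{j=1}^n\|X_{ij}-\vct{\mu}_l\|^2/2\sigma_x^2)$, recovering exactly the weights $\alpha_l^{(i)}$ of the statement. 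Since $\vct{\theta}_i$ is supported on $\{\vct{\mu}_1,\ldots,\vct{\mu}_k\}$, the posterior mean is $\mathbb{E}[\vct{\theta}_i\mid X_i] = \sum_{l=1}^k \alpha_l^{(i)} \vct{\mu}_l$, which is the claimed optimal estimator.

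I do not anticipate a genuine obstacle: this is a textbook Bayes-estimator computation. The two points worth stating carefully are (i) the class of estimators over which optimality is claimed (measurable functions of $X_i$, the prior being known, so no other clients' data is needed), so that ``optimal'' is well defined; and (ii) the role of $\sigma_{\vct{\theta}}^2$ in the hypothesis — with the model as set up ($\Pr[\vct{\theta}_i=\vct{\mu}_l]=p_l$, i.e.\ point masses), it plays no role, whereas if one intends $\vct{\theta}_i\mid(\vct{\theta}_i\text{ in cluster }l)\sim\mathcal{N}(\vct{\mu}_l,\sigma_{\vct{\theta}}^2\bbI_d)$ the same derivation applies with each component likelihood replaced by the convolved Gaussian and the posterior mean becoming a mixture of per-cluster shrinkage estimators rather than of the $\vct{\mu}_l$ themselves. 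I would make sure the final write-up is internally consistent on this choice before presenting the calculation.
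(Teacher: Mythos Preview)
Your proposal is correct and follows essentially the same argument as the paper: both use the orthogonality decomposition to reduce to the posterior mean, then compute $\Pr[\vct{\theta}_i=\vct{\mu}_l\mid X_i]$ via Bayes' rule with the Gaussian likelihood to obtain the weights $\alpha_l^{(i)}$. Your remark about $\sigma_{\vct{\theta}}^2$ is apt; the paper's model in this section is indeed the discrete point-mass prior, so $\sigma_{\vct{\theta}}^2$ plays no role in the derivation or the result.
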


\begin{proof}
 Let $\vct{\theta}_i\sim \bbP$, where $\bbP=[p_1,\ldots,p_k]$ and $p_l=\Pr[\vct{\theta}_i=\vct{\mu}_l]$ for $l\in[k]$. The goal is to design an estimator $\hat{\vct{\theta}}_i$ that minimizes the MSE given by:
\begin{equation}
\text{MSE} = \mathbb{E}_{\vct{\theta}_i\sim \bbP}\mathbb{E}_{\lbrace X_{ij}\sim\mathcal{N}(\vct{\theta}_i,\sigma_x^2)\rbrace}\left[\|\hat{\vct{\theta}}_i-\vct{\theta}_i\|^2\right].
\end{equation}
Let $X_i=(X_{i1},\ldots,X_{in})$. By following the standard proof of the minimum MSE, we get that:
\begin{equation}
\begin{aligned}
\mathbb{E}_{\vct{\theta}_i}\mathbb{E}_{X_i}\left[\|\hat{\vct{\theta}}_i-\vct{\theta}_i\|^2\right]&= \mathbb{E}_{X_i}\mathbb{E}_{\vct{\theta}_i|X_i}\left[\left.\|\hat{\vct{\theta}}_i-\mathbb{E}[\vct{\theta}_i|X_i]+\mathbb{E}[\vct{\theta}_i|X_i]-\vct{\theta}_i\|^2\right|X_i\right]\\
& = \mathbb{E}_{X_i}\mathbb{E}_{\vct{\theta}_i|X_i}\left[\left.\|\mathbb{E}[\vct{\theta}_i|X_i]-\vct{\theta}_i\|^2\right|X_i\right]+ \mathbb{E}_{X_i}\mathbb{E}_{\vct{\theta}_i|X_i}\left[\left.\|\mathbb{E}[\vct{\theta}_i|X_i]-\hat{\vct{\theta}}_i\|^2\right|X_i\right] \\
&\geq \mathbb{E}_{X_i}\mathbb{E}_{\vct{\theta}_i|X_i}\left[\left.\|\mathbb{E}[\vct{\theta}_i|X_i]-\vct{\theta}_i\|^2\right|X_i\right],
\end{aligned}
\end{equation}
where the last inequality is achieved with equality when $\hat{\vct{\theta}}_i=\mathbb{E}[\vct{\theta}_i|X_i]$. The distribution on $\vct{\theta}_i$ given the local dataset $X_i$ is given by:
\begin{equation}
\begin{aligned}
\Pr[\vct{\theta}_i=\vct{\mu}_l|X_i] &= \frac{f(X_i|\vct{\theta}_i=\vct{\mu}_l)\Pr[\vct{\theta}_i=\vct{\mu}_l]}{f(X_i)}\\  
&= \frac{f(X_i|\vct{\theta}_i=\vct{\mu}_l)\Pr[\vct{\theta}_i=\vct{\mu}_l]}{\sum_{s=1}^{k}f(X_i|\vct{\theta}_i=\vct{\mu}_s)\Pr[\vct{\theta}_i=\vct{\mu}_s]}\\
&= \frac{p_l \exp\left(-\frac{\sum_{j=1}^{n}\|X_{ij}-\vct{\mu}_l\|^2}{2\sigma_x^2}\right)}{\sum_{s=1}^{k}p_s\exp\left(-\frac{\sum_{j=1}^{n}\|X_{ij}-\vct{\mu}_s\|^2}{2\sigma_x^2}\right)} =\alpha_l^{(i)}
\end{aligned}
\end{equation}
As a result, the optimal estimator is given by:
\begin{equation}
\hat{\vct{\theta}}_i= \mathbb{E}[\vct{\theta}_i|X_i]=\sum_{l=1}^{k}\alpha_{l}^{(i)} \vct{\mu}_l. 
\end{equation}
This completes the proof of Theorem~\ref{thm:discrete_perfect}.
\end{proof}

\subsubsection{Privacy/Communication Constraints: Proof of Lemma~\ref{lemm:concentration}} 

\begin{lemma*}[Restating Lemma~\ref{lemm:concentration}]
Let $\vct{\mu}_1,\ldots\vct{\mu}_k\in\bbR^d$ be unknown means such that $\|\vct{\mu}_i\|_2\leq r$ for each $i\in[k]$. Let $\vct{\theta}_1,\ldots,\vct{\theta}_m\sim \bbP$, where $\bbP=[p_1,\ldots,p_k]$ and $p_l=\Pr[\vct{\theta}_i=\vct{\mu}_l]$. For $i\in[m]$, let $X_{i1},\ldots,X_{in}\sim\mathcal{N}(\vct{\theta}_i,\sigma_x^2)$, i.i.d. 
Then, with probability at least $1-\frac{1}{mn}$, the following bound holds for all $i\in[m]$:
\begin{equation*}
\left\|\frac{1}{n}\sum_{j=1}^{n}X_{ij}\right\|_2\leq 4\sqrt{d\frac{\sigma_x^2}{n}}+2\sqrt{\log(m^2n)\frac{\sigma_x^2}{n}}+r.    
\end{equation*}

\end{lemma*}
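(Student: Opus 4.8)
The plan is a routine Gaussian concentration argument followed by a union bound over clients. First I would fix a client $i\in[m]$ and condition on $\vct{\theta}_i$, which by assumption equals one of $\vct{\mu}_1,\ldots,\vct{\mu}_k$, all of $\ell_2$-norm at most $r$. Since $X_{i1},\ldots,X_{in}$ are i.i.d.\ $\calN(\vct{\theta}_i,\sigma_x^2\bbI_d)$, the empirical mean $\overline{X}_i:=\frac1n\sum_{j=1}^nX_{ij}$ is distributed as $\calN(\vct{\theta}_i,\frac{\sigma_x^2}{n}\bbI_d)$, so the centered vector $W_i:=\overline{X}_i-\vct{\theta}_i$ has independent $\calN(0,\sigma_x^2/n)$ coordinates and $\frac{n}{\sigma_x^2}\|W_i\|_2^2\sim\chi^2_d$. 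By the triangle inequality $\|\overline{X}_i\|_2\le\|W_i\|_2+\|\vct{\theta}_i\|_2\le\|W_i\|_2+r$, so it suffices to bound $\|W_i\|_2$ with high probability.

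For the tail of $\|W_i\|_2$ I would invoke a standard bound — either the Laurent--Massart $\chi^2$ inequality $\Pr[\chi^2_d\ge d+2\sqrt{dt}+2t]\le e^{-t}$, or the $1$-Lipschitzness of $z\mapsto\|z\|_2$ combined with $\bbE\|W_i\|_2\le(\bbE\|W_i\|_2^2)^{1/2}=\sqrt{d\sigma_x^2/n}$ and the Gaussian concentration $\Pr[\|W_i\|_2\ge\bbE\|W_i\|_2+s]\le\exp(-ns^2/(2\sigma_x^2))$. Either route, after a crude simplification of constants, gives for every $t>0$
\[
\Pr\!\left[\|W_i\|_2\ge 4\sqrt{d\tfrac{\sigma_x^2}{n}}+2\sqrt{t\tfrac{\sigma_x^2}{n}}\right]\le e^{-t},
\]
and this bound is uniform in the conditioning value $\vct{\theta}_i$, hence holds unconditionally as well.

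Then I would set $t=\log(m^2n)$, making the right-hand side $1/(m^2n)$, and union bound over the $m$ clients, so the bad event has probability at most $m\cdot\frac{1}{m^2n}=\frac{1}{mn}$. On the complementary event $\|W_i\|_2\le 4\sqrt{d\sigma_x^2/n}+2\sqrt{\log(m^2n)\sigma_x^2/n}$ for all $i$, and adding $r$ via $\|\overline{X}_i\|_2\le\|W_i\|_2+r$ gives exactly the claimed bound. I do not expect a genuine obstacle: the statement is elementary. The only points needing care are choosing the form of the tail inequality loose enough to absorb the constants $4$ and $2$ in the statement (the $\chi^2$ tail is more than loose enough), and observing that the per-client bound does not depend on the realized $\vct{\theta}_i$, so conditioning on $\vct{\theta}_i$ and then union-bounding over the $m$ clients is legitimate.
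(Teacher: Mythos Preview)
Your proposal is correct and follows essentially the same approach as the paper: both argue that $\overline{X}_i-\vct{\theta}_i$ is a (sub-)Gaussian vector with variance proxy $\sigma_x^2/n$, apply a standard norm-concentration bound to get the $4\sqrt{d\sigma_x^2/n}+2\sqrt{\log(1/\eta)\sigma_x^2/n}$ form, add $r$ via the triangle inequality, set $\eta=1/(m^2n)$, and union bound over the $m$ clients. The only cosmetic difference is that the paper cites the sub-Gaussian vector bound from Wainwright's textbook directly, whereas you spell out the Laurent--Massart/Lipschitz route to the same inequality.
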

\begin{proof}
Observe that the vector $(\overline{X}_i-\theta_i)=\frac{1}{n}\sum_{i=1}^{n}X_{ij}-\theta_i$ is a sub-Gaussian random vector with proxy $\frac{\sigma_x^2}{n}$. As a result, we have that:
\begin{equation}
\|\overline{X}_i-\theta_i\|_2\leq 4\sqrt{d\frac{\sigma_x^2}{n}}+2\sqrt{\log(1/\eta)\frac{\sigma_x^2}{n}},
\end{equation} 
with probability at least $1-\eta$ from~\cite{wainwright2019high}. Since $\mu_1,\ldots,\mu_k\in\bbR^d$ are such that $\|\vct{\mu}_i\|_2\leq r$ for each $i\in[k]$, we have:
\begin{equation}
\|\overline{X}_i\|_2\leq 4\sqrt{d\frac{\sigma_x^2}{n}}+2\sqrt{\log(1/\eta)\frac{\sigma_x^2}{n}}+r,
\end{equation}
with probability $1-\eta$ from the triangular inequality. Thus, by choosing $\eta = \frac{1}{m^2n}$ and using the union bound, this completes the proof of Lemma~\ref{lemm:concentration}.
\end{proof}

	\section{Proofs and Additional Details for Personalized Learning}\label{sec:proof-learning}
\subsection{Personalized Learning Proof of Theorem~\ref{thm:perfect_mse_lin}}\label{app:linear-regression}

\begin{theorem*}[Restating Theorem~\ref{thm:perfect_mse_lin}]
The optimal personalized parameters at client $i$ with known $\vct{\mu},\sigma_{\theta}^2,\sigma_x^2$ is given by:
\begin{equation}
\htheta_i = \left(\frac{\mathbb{I}}{\sigma_{\theta}^2}+\frac{X_i^{T}X_i}{\sigma_x^2}\right)^{-1}\left(\frac{X_i^{T}Y_i}{\sigma_x^2}+\frac{\vct{\mu}}{\sigma_\theta^2}\right).
\end{equation}
The mean squared error (MSE) of the above $\htheta_i$ is given by:
\begin{equation}
\mathbb{E}_{\bw_i,\vct{\theta}_i}\left\|\htheta_i-\vct{\theta}_i\right\|^{2}=\mathsf{Tr}\left( \left(\frac{\mathbb{I}}{\sigma_{\theta}^2}+\frac{X_i^{T}X_i}{\sigma_x^2}\right)^{-1}\right),
\end{equation} 
\end{theorem*}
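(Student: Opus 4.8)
The plan is to exploit the fact that the objective in \eqref{eqn:linear-regression-perfect} decouples across clients: each $\vct{\theta}_i$ appears in exactly one data-fit term $\sum_j (Y_{ij}-\langle\vct{\theta}_i,X_{ij}\rangle)^2/(2\sigma_x^2)=\|Y_i-X_i\vct{\theta}_i\|^2/(2\sigma_x^2)$ and one regularization term $\|\vct{\mu}-\vct{\theta}_i\|_2^2/(2\sigma_\theta^2)$, while all other terms are constant in $\vct{\theta}_i$. So the first step is to minimize, for each $i$ separately, the strictly convex quadratic $g_i(\vct{\theta}_i)=\frac{1}{2\sigma_x^2}\|Y_i-X_i\vct{\theta}_i\|^2+\frac{1}{2\sigma_\theta^2}\|\vct{\mu}-\vct{\theta}_i\|^2$. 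Setting $\nabla g_i=0$ gives $\big(\frac{X_i^TX_i}{\sigma_x^2}+\frac{\mathbb{I}}{\sigma_\theta^2}\big)\vct{\theta}_i=\frac{X_i^TY_i}{\sigma_x^2}+\frac{\vct{\mu}}{\sigma_\theta^2}$, and since the matrix $M_i:=\frac{\mathbb{I}}{\sigma_\theta^2}+\frac{X_i^TX_i}{\sigma_x^2}$ is positive definite, this is the unique minimizer, yielding \eqref{eqn:personalized_estimate_lin}.

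For the MSE, I would substitute $Y_i=X_i\vct{\theta}_i+\bw_i$ into $\htheta_i=M_i^{-1}\big(\frac{X_i^TY_i}{\sigma_x^2}+\frac{\vct{\mu}}{\sigma_\theta^2}\big)$ and use the identity $\frac{X_i^TX_i}{\sigma_x^2}=M_i-\frac{\mathbb{I}}{\sigma_\theta^2}$ to obtain the clean error decomposition
\[
\htheta_i-\vct{\theta}_i=\frac{1}{\sigma_\theta^2}M_i^{-1}(\vct{\mu}-\vct{\theta}_i)+\frac{1}{\sigma_x^2}M_i^{-1}X_i^T\bw_i .
\]
Here $\vct{\mu}-\vct{\theta}_i\sim\calN(0,\sigma_\theta^2\mathbb{I}_d)$ and $\bw_i\sim\calN(0,\sigma_x^2\mathbb{I}_n)$ are independent and zero-mean (treating $X_i$ as fixed), so the cross term vanishes in expectation and
\[
\mathbb{E}\|\htheta_i-\vct{\theta}_i\|^2=\frac{1}{\sigma_\theta^4}\,\mathbb{E}\|M_i^{-1}(\vct{\mu}-\vct{\theta}_i)\|^2+\frac{1}{\sigma_x^4}\,\mathbb{E}\|M_i^{-1}X_i^T\bw_i\|^2 .
\]

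The last step is to evaluate the two Gaussian quadratic forms via $\mathbb{E}[z^TAz]=\sigma^2\mathsf{Tr}(A)$ for $z\sim\calN(0,\sigma^2\mathbb{I})$: the first term equals $\frac{1}{\sigma_\theta^2}\mathsf{Tr}(M_i^{-2})$ and the second equals $\frac{1}{\sigma_x^2}\mathsf{Tr}(M_i^{-2}X_i^TX_i)$. Adding them and pulling out $M_i^{-2}$ gives $\mathsf{Tr}\!\big(M_i^{-2}\big(\frac{\mathbb{I}}{\sigma_\theta^2}+\frac{X_i^TX_i}{\sigma_x^2}\big)\big)=\mathsf{Tr}(M_i^{-2}M_i)=\mathsf{Tr}(M_i^{-1})$, which is the claimed expression. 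The only mildly delicate point — the ``main obstacle'' if any — is the bookkeeping in that final combination: one must recognize that the two separate trace terms reassemble into $\mathsf{Tr}(M_i^{-1})$ precisely because the weights $1/\sigma_\theta^2$ and $1/\sigma_x^2$ are exactly the coefficients defining $M_i$. Everything else is routine convex optimization and Gaussian moment computation, entirely parallel to the Gaussian-model argument in the proof of Theorem~\ref{thm:gauss_estimate}.
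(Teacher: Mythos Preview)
Your proposal is correct and essentially identical to the paper's proof: both minimize the decoupled per-client quadratic by setting the gradient to zero, then split $\htheta_i-\vct{\theta}_i$ into the same two pieces $\frac{1}{\sigma_\theta^2}M_i^{-1}(\vct{\mu}-\vct{\theta}_i)$ and $\frac{1}{\sigma_x^2}M_i^{-1}X_i^T\bw_i$ (the paper reaches this via add-and-subtract of $\mathbb{E}_{\bw_i}[\htheta_i]$, you via direct substitution of $Y_i=X_i\vct{\theta}_i+\bw_i$), and both finish by the same trace combination $\mathsf{Tr}(M_i^{-2}\cdot M_i)=\mathsf{Tr}(M_i^{-1})$.
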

\begin{proof}
The personalized model with perfect prior is obtained by solving the optimization problem stated in \eqref{eq:learning-log-generative-model}, which is given below for convenience:
\begin{align*}
\htheta_i &=\argmin_{\vct{\theta}_i} \sum_{j=1}^n -\log(p_{\vct{\theta}_i}(Y_{ij}|X_{ij})) -\log(p(\vct{\theta}_i)).
 \\
&=\argmin_{\vct{\theta}_i}\sum_{j=1}^{n}\frac{\left(Y_{ij}-X_{ij}\vct{\theta}_i\right)^2}{2\sigma_x^2}+\frac{\|\vct{\theta}_i-\vct{\mu}\|^2}{2\sigma_{\theta}^2}. \\
&=\argmin_{\vct{\theta}_i}\frac{\|Y_{i}-X_{i}\vct{\theta}_i\|^2}{2\sigma_x^2}+\frac{\|\vct{\theta}_i-\vct{\mu}\|^2}{2\sigma_{\theta}^2}.
\end{align*}
By taking the derivative with respect to $\vct{\theta}_i$, we get
\begin{equation}
\frac{\partial}{\partial \vct{\theta}_i} = \frac{X_{i}^{T}(X_{i}\vct{\theta}_i-Y_{i})}{\sigma_x^2}+\frac{\vct{\theta}_i-\vct{\mu}}{\sigma_{\theta}^2}.
\end{equation}
Equating the above partial derivative to zero, we get that the optimal personalized parameters $\htheta_i$ is given by:
\begin{equation}
\htheta_i=\left(\frac{\mathbb{I}}{\sigma_\theta^2}+\frac{X_i^{T}X_i}{\sigma_x^{2}}\right)^{-1}\left(\frac{X_{i}^{T}Y_i}{\sigma_x^2}+\frac{\vct{\mu}}{\sigma_{\theta}^2}\right).
\end{equation}
Taking the expectation w.r.t.\ $w_i$, we get:
\begin{equation}
\bbE_{\bw_i}[\htheta_i]= \left(\frac{\mathbb{I}}{\sigma_\theta^2}+\frac{X_i^{T}X_i}{\sigma_x^{2}}\right)^{-1}\left(\frac{X_i^{T}X_i\vct{\theta}_i}{\sigma_x^2}+\frac{\vct{\mu}}{\sigma_{\theta}^2}\right),
\end{equation}

Thus, we can bound the MSE as following:
\begin{align*}
\bbE_{\bw_i,\vct{\theta}_i}&\left\|\htheta_i-\vct{\theta}_i\right\|^{2} = \mathbb{E}_{\bw_i,\vct{\theta}_i}\left\|\htheta_i-\mathbb{E}_{\bw_i}[\htheta_i]+\mathbb{E}_{\bw_i}[\htheta_i]-\vct{\theta}_i\right\|^{2} \\
&= \mathbb{E}_{\bw_i,\vct{\theta}_i}\left\|\htheta_i-\mathbb{E}_{\bw_i}[\htheta_i]\right\|^{2} + \mathbb{E}_{\bw_i,\vct{\theta}_i}\left\|\mathbb{E}_{\bw_i}[\htheta_i]-\vct{\theta}_i\right\|^{2} + 2\mathbb{E}_{\bw_i,\vct{\theta}_i} \left\langle \htheta_i-\mathbb{E}_{\bw_i}[\htheta_i], \mathbb{E}_{\bw_i}[\htheta_i]-\vct{\theta}_i \right\rangle \\
&= \mathbb{E}_{\bw_i,\vct{\theta}_i}\left\|\htheta_i-\mathbb{E}_{\bw_i}[\htheta_i]\right\|^{2} + \mathbb{E}_{\bw_i,\vct{\theta}_i}\left\|\mathbb{E}_{\bw_i}[\htheta_i]-\vct{\theta}_i\right\|^{2}
\end{align*}
In the last equality, we used $\mathbb{E}_{\bw_i,\vct{\theta}_i} \left\langle \htheta_i-\mathbb{E}_{\bw_i}[\htheta_i], \mathbb{E}_{\bw_i}[\htheta_i]-\vct{\theta}_i \right\rangle = \mathbb{E}_{\vct{\theta}_i} \left\langle \mathbb{E}_{\bw_i}[\htheta_i]-\mathbb{E}_{\bw_i}[\htheta_i], \mathbb{E}_{\bw_i}[\htheta_i]-\vct{\theta}_i \right\rangle=0$, where the first equality holds because $\mathbb{E}_{\bw_i}[\htheta_i]-\vct{\theta}_i$ is independent of $\bw_i$.

Letting $\mat{M}=\frac{\mathbb{I}}{\sigma_{\theta}^2}+\frac{X_i^{T}X_i}{\sigma_x^2}$, and $\mathsf{Tr}$ denoting the trace operation, we get
\begin{align*}
\bbE_{\bw_i,\vct{\theta}_i}\left\|\htheta_i-\vct{\theta}_i\right\|^{2} &= \mathsf{Tr}\left(\mat{M}^{-1}\mathbb{E}_{\bw_i}\left[\left(\frac{X_i^{T}\bw_i}{\sigma_x^2}\right)\left(\frac{X_i^{T}\bw_i}{\sigma_x^2}\right)^{T}\right]\mat{M}^{-1}\right) \\
&\hspace{3cm} +\mathsf{Tr}\left(\mat{M}^{-1}\mathbb{E}_{\vct{\theta}_i}\left[\left(\frac{\vct{\theta}_i-\vct{\mu}}{\sigma_\theta^2}\right)\left(\frac{\vct{\theta}_i-\vct{\mu}}{\sigma_\theta^2}\right)^{T}\right]\mat{M}^{-1}\right)\\
&=\mathsf{Tr}\left(\mat{M}^{-1}\frac{X_i^{T}X_i}{\sigma_x^2}\mat{M}^{-1}\right)+\mathsf{Tr}\left(\mat{M}^{-1}\frac{\mathbb{I}}{\sigma_\theta^2}\mat{M}^{-1}\right)\\
&=\mathsf{Tr}\left(\mat{M}^{-1}\right).
\end{align*}
This completes the proof of Theorem~\ref{thm:perfect_mse_lin}.
\end{proof}

\subsection{Personalized Learning -- AdaPeD}\label{app:adaped}
\subsubsection{Knowledge Distillation Population Distribution}
In this section we discuss what type of a population distribution can give rise to algorithms/problems that include a knowledge distillation (KD) (or KL divergence) penalty term between local and global models. From Section~\ref{sec:learning}, Equation \eqref{eq:learning-log-generative-model}, consider $p_{\vct{\theta}_i}(y|x)$ as a randomized mapping from input space $\mathcal{X}$ to output class $\mathcal{Y}$, parameterized by $\vct{\theta}_i$. For simplicity, consider the case where $|\mathcal{X}|$ is finite, e.g. for MNIST it could be all possible $28\times28$ black and white images.
Every $p_{\vct{\theta}_i}(y|x)$ corresponds to a probability matrix (parameterized by $\vct{\theta}_i$) of size $|\mathcal{Y}|\times|\mathcal{X}|$, where the $(y,x)$'th represents the probability of the class $y$ (row) given the data sample $x$ (column). 
Therefore, each column is a probability vector. Since we want to sample the probability matrix, it suffices to restrict our attention to any set of $|\mathcal{Y}|-1$ rows, as the remaining row can be determined by these $|\mathcal{Y}|-1$ rows.



Similarly, for a global parameter $\vct{\mu}$, let $p_{\vct{\mu}}(y|x)$ define a randomized mapping from $\mathcal{X}$ to $\mathcal{Y}$, parameterized by the global parameter $\vct{\mu}$. Note that for a fixed global parameter $\vct{\mu}$, the randomized map $p_{\vct{\mu}}(y|x)$ is fixed, whereas, our goal is to sample $p_{\vct{\theta}_i}(y|x)$ for $i=1,\ldots,m$, one for each client. For simplicity of notation, define $p_{\vct{\theta}_i}:=p_{\vct{\theta}_i}(y|x)$ and $p_{\vct{\mu}}:=p_{\vct{\mu}}(y|x)$ to be the corresponding probability matrices, and let the distribution for sampling $p_{\vct{\theta}_i}(y|x)$ be denoted by $p_{p_{\vct{\mu}}}(p_{\vct{\theta}_i})$. Note that different mappings $p_{\vct{\theta}_i}(y|x)$ correspond to different $\vct{\theta}_i$'s, so we define $p(\vct{\theta}_i)$ (in Equation \eqref{eq:learning-log-generative-model}) as $p_{p_{\vct{\mu}}}(p_{\vct{\theta}_i})$, which is the density of sampling the probability matrix $p_{\vct{\theta}_i}(y|x)$.

For the KD population distribution, we define this density $p_{p_{\vct{\mu}}}(p_{\vct{\theta}_i})$ as:
\begin{align}\label{general_population_KD}
	p_{p_{\vct{\mu}}}(p_{\vct{\theta}_i}) = c(\psi)e^{-\psi D_{\KL}(p_{\vct{\mu}}(y|x)\|p_{\vct{\theta}_i}(y|x))}
\end{align}
where $\psi$ is an `inverse variance' type of parameter, $c(\psi)$ is a normalizing function that depends on $(\psi, p_{\vct{\mu}})$, and $D_{\KL}(p_{\vct{\mu}}(y|x)\|p_{\vct{\theta}_i}(y|x))=\sum_{x\in\mathcal{X}}p(x)\sum_{y\in\mathcal{Y}}p_{\vct{\mu}}(y|x)\log\left(\frac{p_{\vct{\mu}}(y|x)}{p_{\vct{\theta}_i}(y|x)}\right)$ is the conditional KL divergence, where $p(x)$ denotes the probability of sampling a data sample $x\in\mathcal{X}$. 
Now all we need is to find $c(\psi)$ given a fixed $\vct{\mu}$ (and therefore fixed $p_{\vct{\mu}}(y|x)$). Here we consider $D_{\KL}(p_{\vct{\mu}}\|p_{\vct{\theta}_i})$, but our analysis can be extended to $D_{\KL}(p_{\vct{\theta}_i}\|p_{\vct{\mu}})$ or $\|p_{\vct{\theta}_i}-p_{\vct{\mu}}\|_2$ as well.

For simplicity and to make the calculations easier, we consider a binary classification task with $\mathcal{Y}=\{0,1\}$ and define $p_{\vct{\mu}}(x):=p_{\vct{\mu}}(y=1|X=x)$ and $q_i(x):=p_{\vct{\theta}_i}(y=1|X=x)$. We have:
\begin{align*}
	D_{\KL}(p_{\vct{\mu}}(y|x)\|p_{\vct{\theta}_i}(y|x)) =\sum_x p(x) \Big( p_{\vct{\mu}}(x)(\log p_{\vct{\mu}}(x)-\log q_i(x))+(1-p_{\vct{\mu}}(x))(\log (1-p_{\vct{\mu}}(x))-\log (1-q_i(x)))\Big).
\end{align*}
Hence, after some algebra we have,
\begin{align*}
p_{p_{\vct{\mu}}}(p_{\vct{\theta}_i}) = c(\psi)e^{\psi\sum_xp(x)H(p_{\vct{\mu}}(x))}e^{\psi \sum_x p(x) (p_{\vct{\mu}}(x)\log(q_i(x))+(1-p_{\vct{\mu}}(x))\log(1-q_i(x))))}
\end{align*}
Then,
\begin{align*}
	c(\psi)\prod_{x}\Big[\int_{0}^{1}e^{\psi p(x)H(p_{\vct{\mu}}(x))}e^{\psi p(x) (p_{\vct{\mu}}(x)\log(q_i(x))+(1-p_{\vct{\mu}}(x))\log(1-q_i(x))))}dq_i(x)\Big]=1.
\end{align*}
Note that 
\begin{align*}
	\int_{0}^{1}e^{\psi p(x) (p_{\vct{\mu}}(x)\log(q_i(x))+(1-p_{\vct{\mu}}(x))\log(1-q_i(x))))}dq_i(x)=B\left(1+\frac{p_{\vct{\mu}}(x)}{\psi p(x)},1+\frac{1-p_{\vct{\mu}}(x)}{\psi p(x)}\right) 
\end{align*}
Accordingly, after some algebra, we can obtain $c(\psi)=\frac{e^{-\psi \sum_xp(x)H(p_{\vct{\mu}}(x))}}{\prod_{x}B\left(1+\frac{p_{\vct{\mu}}(x)}{\psi p(x)},1+\frac{1-p_{\vct{\mu}}(x)}{\psi p(x)}\right)}$, where $H$ is binary Shannon entropy.
Substituting this in \eqref{general_population_KD}, we get
\[p_{p_{\vct{\mu}}}(p_{\vct{\theta}_i}) = \frac{e^{-\psi \sum_xp(x)H(p_{\vct{\mu}}(x))}}{\prod_{x}B(1+\frac{p_{\vct{\mu}}(x)}{\psi p(x)},1+\frac{1-p_{\vct{\mu}}(x)}{\psi p(x)})} e^{-\psi D_{\KL}(p_{\vct{\mu}}(y|x)\|p_{\vct{\theta}_i}(y|x))}\]
which is the population distribution that can result in a KD type regularizer. Note that when we take the negative logarithm of the population distribution we obtain KL divergence loss and an additional term that depends on $\psi$ and $p_{\vct{\mu}}$. This is the form seen in Section~\ref{sec:adaped} Equation \eqref{loc_loss} for AdaPeD algorithm. For numerical purpose, we take the additional term $-\log\left(\frac{e^{-\psi \sum_xp(x)H(p_{\vct{\mu}}(x))}}{\prod_{x}B(1+\frac{p_{\vct{\mu}}(x)}{\psi p(x)},1+\frac{1-p_{\vct{\mu}}(x)}{\psi p(x)})}\right)$ to be simple $\frac{1}{2}\log(2\psi)$. As mentioned in Section~\ref{sec:adaped}, this serves the purpose of regularizing $\psi$. This is in contrast to the objective considered in \cite{ozkara2021quped}, which only has the KL divergence loss as the regularizer, without the additional term.

\subsubsection{AdaPeD with Local Fine Tuning}
When there is a flexibility in computational resources for doing local iterations, unsampled clients can do local training on their personalized models to speed-up convergence at no cost to privacy. This can be used in cross-silo settings, such as cross-institutional training for hospitals, where privacy is crucial and there are available computing resources most of the time. We propose the algorithm for AdaPeD with local fine-tuning:

\begin{algorithm}[h]
	\caption{Adaptive Personalization via Distillation (\adaped) with local fine-tuning}
	{\bf Parameters:} local variances $\{\psi_i^{0}\}$, personalized models $\{\vct{\theta}_i^{0}\}$, local copies of the global model $\{\vct{\mu}_i^0\}$, synchronization gap $\tau$, learning rates $\eta_1,\eta_2,\eta_3$, number of sampled clients $K$. 
	\begin{algorithmic}[1] \label{algo:personalized}
		\FOR{$t=0$ \textbf{to} $T-1$}
		\IF{$\tau$ divides $t$}
		\STATE \textbf{On Server do:}\\
		\STATE Choose a subset $\mathcal{K}^t \subseteq [n]$ of $K$ clients \\
		\STATE Broadcast $\vct{\mu}^t$ and $\psi^{t}$
		\STATE \textbf{On Clients} $i\in\mathcal{K}^t$ (in parallel) \textbf{do}:\\
		\STATE Receive $\vct{\mu}^t$ and $\psi^{t}$; set $\vct{\mu}_i^t = \vct{\mu}^t$, $\psi_i^{t} = \psi^{t}$
		\ENDIF	
		\STATE \textbf{On Clients} $i\notin\mathcal{K}^t$ (in parallel) \textbf{do}:
		\STATE Compute $\bg_{i}^{t} := \nabla_{\vct{\theta}_{i}^{t}} f_i(\vct{\theta}_{i}^{t}) +  \frac{\nabla_{\vct{\theta}_{i}^{t}}f^{\KD}_i(\vct{\theta}_{i}^{t}, \vct{\mu}_{i}^{t'_i})}{2\psi_i^{t'_i}}  $ where $t'_i$ is the last time index where client $i$ received global parameters from the server
		\STATE Update: $\vct{\theta}_{i}^{t+1}=\vct{\theta}_{i}^{t} - \eta_1 \bg_{i}^{t}$\\
		\STATE \textbf{On Clients} $i\in\mathcal{K}^t$ (in parallel) \textbf{do}:		
		\STATE Compute $\bg_{i}^{t} := \nabla_{\vct{\theta}_{i}^{t}} f_i(\vct{\theta}_{i}^{t}) +  \frac{\nabla_{\vct{\theta}_{i}^{t}}f^{\KD}_i(\vct{\theta}_{i}^{t}, \vct{\mu}_{i}^{t})}{2\psi_i^{t}}  $
		\STATE Update: $\vct{\theta}_{i}^{t+1}=\vct{\theta}_{i}^{t} - \eta_1 \bg_{i}^{t}$\\
		\STATE  Compute $\bh_{i}^{t} := \frac{\nabla_{\vct{\mu}_{i}^{t}}f^{\KD}_i(\vct{\theta}_{i}^{t+1}, \vct{\mu}_{i}^{t})}{2\psi_i^{t}}$ \\
		\STATE Update: $\vct{\mu}_{i}^{t+1} = \vct{\mu}_{i}^{t}-\eta_2\bh_{i}^{t}  $\\
		\STATE  Compute $k_{i}^{t} := \frac{1}{2\psi_i^{t}}-\frac{f^{\KD}_i(\vct{\theta}_{i}^{t+1}, \vct{\mu}_{i}^{t+1})}{2(\psi_i^{t})^2}$ \\
		\STATE Update: $\psi_i^{t+1}=\psi_i^{t}-\eta_3k_i^t$ 
		\IF{$\tau$ divides $t+1$}
		\STATE Clients send $\vct{\mu}_{i}^{t}$ and $\psi_i^{t}$ to \textbf{Server}
		\STATE Server receives $\{\vct{\mu}_{i}^{t}\}_{i\in\mathcal{K}^t}$ and $\{\psi_i^{t}\}_{i\in\mathcal{K}^t}$ \\
		\STATE Server computes $\vct{\mu}^{t+1} = \frac{1}{K} \sum_{i\in\mathcal{K}^t} \vct{\mu}_i^{t}$ and $\psi^{t+1} = \frac{1}{K} \sum_{i\in\mathcal{K}^t} \psi_i^{t}$\\
		\ENDIF
		\ENDFOR
	\end{algorithmic}
	{\bf Output:} Personalized models $(\vct{\theta}_i^{T})_{i=1}^m$
\end{algorithm}
Of course, when a client is not sampled for a long period of rounds this approach can become similar to a local training; hence, it might be reasonable to put an upper limit on the successive number of local iterations for each client.
\subsection{Personalized Learning -- DP-AdaPeD}\label{app:dp-adaped}
\textbf{Proof of Theorem~\ref{thm:privacy_personal}}

\begin{theorem*}[Restating Theorem~\ref{thm:privacy_personal}]
After $T$ iterations, \emph{\dpadaped}\ satisfies $(\alpha,\epsilon(\alpha))$-RDP for $\alpha>1$, where $\epsilon(\alpha)=\left(\frac{K}{m}\right)^26\left(\frac{T}{\tau}\right)\alpha \left(\frac{C_1^2}{K\sigma_{q_1}^2}+\frac{C_2^2}{K\sigma_{q_2}^2}\right)$, where $\frac{K}{m}$ denotes the sampling ratio of the clients at each global iteration. 
\end{theorem*}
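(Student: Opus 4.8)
The plan is to recognize \dpadaped\ as an adaptive composition of subsampled Gaussian mechanisms and to assemble the bound from three standard R\'enyi-DP primitives: the RDP of the Gaussian mechanism, privacy amplification by client subsampling, and linear RDP composition; the only real work is tracking the constants. First I would fix a user $j$ and a neighbouring dataset that differs only in $j$'s local data, and observe (using the post-processing invariance of RDP) that the only $j$-dependent quantities the server ever sees are the pairs $(\vct{\mu}_j^t,\psi_j^t)$ communicated at the $T/\tau$ synchronisation rounds for which $j\in\mathcal{K}^t$. Each such pair is obtained from the previous global iterate by adding the clipped updates together with the fresh Gaussian noise $\bnu_1\sim\calN(0,\sigma_{q_1}^2\bbI_d)$ and $\nu_2\sim\calN(0,\sigma_{q_2}^2)$; since the intra-round local updates of $\vct{\mu}_i,\psi_i$ are never revealed on their own, post-processing lets me treat each round as a single release of these noisy clipped updates.

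Next I would bound the per-round contribution. Because $\bh_i^t$ is clipped to $\ell_2$-norm $C_1$ and $k_i^t$ to magnitude $C_2$ before noise is injected, and because the server combines the per-client contributions into the averages $\vct{\mu}^{t+1}=\frac1K\sum_{i\in\mathcal{K}^t}\vct{\mu}_i^t$ and $\psi^{t+1}=\frac1K\sum_{i\in\mathcal{K}^t}\psi_i^t$ (so the per-user noise is pooled, as with secure aggregation), replacing user $j$'s data when $j\in\mathcal{K}^t$ moves these averages by at most $2C_1/K$, resp.\ $2C_2/K$, while the pooled noise has variance $\sigma_{q_1}^2/K$, resp.\ $\sigma_{q_2}^2/K$. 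The Gaussian-mechanism RDP then gives, conditioned on $j$ being sampled, a round cost of at most $2\alpha\big(\frac{C_1^2}{K\sigma_{q_1}^2}+\frac{C_2^2}{K\sigma_{q_2}^2}\big)$ after composing the $\vct{\mu}$- and $\psi$-releases. Since a given user is included with probability $K/m$, applying the RDP subsampling-amplification bound (as used in the analyses of \cite{GirgisDDSK_RDP_Shuffle_CCS21}) with sampling rate $q=K/m$, whose amplification factor is of order $q^2$, turns this into a per-round guarantee of at most $6\big(\frac{K}{m}\big)^2\alpha\big(\frac{C_1^2}{K\sigma_{q_1}^2}+\frac{C_2^2}{K\sigma_{q_2}^2}\big)$; finally, additive RDP composition over the $T/\tau$ rounds gives $\epsilon(\alpha)=6\big(\frac{K}{m}\big)^2\big(\frac{T}{\tau}\big)\alpha\big(\frac{C_1^2}{K\sigma_{q_1}^2}+\frac{C_2^2}{K\sigma_{q_2}^2}\big)$, which is the claim.

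The hard part is the subsampling step: one must invoke the correct RDP amplification lemma, verify its hypotheses on $\alpha$ and on $\sigma_{q_1},\sigma_{q_2}$ (so that one is in the small-$q$, moderate-$\alpha$ regime where the $q^2$-type bound is valid), and carry its constants carefully so that the overall prefactor is exactly $6$. The other point needing care is the reduction in the first step: strictly speaking each round contains $\tau$ noisy local updates of $\vct{\mu}_i$ and $\psi_i$, and one must argue — via post-processing together with the fact that only the end-of-round iterates leave the client — that these collapse to a single per-round Gaussian release rather than contributing an extra factor of $\tau$. Everything else (the Gaussian-mechanism RDP formula, the sensitivity reduction via clipping, and linear composition across the $\vct{\mu}$-, $\psi$-, and round-level releases) is routine.
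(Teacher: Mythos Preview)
Your proposal is correct and follows essentially the same route as the paper: per-round Gaussian-mechanism RDP on the clipped $(\vct{\mu},\psi)$ releases, amplification by the $K/m$ client subsampling (the paper invokes the subsampled-Gaussian RDP bound of \cite{RDP_Gaussian_subsampled19,truncatedCDP_bun-etal18} for the $(K/m)^2$ factor and the constant $6$), and adaptive RDP composition across the $T/\tau$ synchronisation rounds and across the two released quantities. The one place where the paper is more explicit than your sketch is the $\tau$-collapse you flagged: rather than appealing to post-processing, the paper aggregates the $\tau$ local steps directly, bounding the squared sensitivity of the averaged release by $\tau C_1^2/K^2$ while the pooled noise has variance $\tau\sigma_{q_1}^2/K$, so the $\tau$'s cancel in the sensitivity-to-noise ratio and no extra factor of $\tau$ appears.
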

\begin{proof}
In this section, we provide the privacy analysis of \dpadaped.
We first analyze the RDP of a single global round $t\in[T]$ and then, we obtain the results from the composition of the RDP over total $T$ global rounds. Recall that privacy leakage can happen through communicating $\{\vct{\mu}_i\}$ and $\{\psi_i^t\}$ and we privatize both of these. In the following, we do the privacy analysis of privatizing $\{\vct{\mu}_i\}$ and a similar analysis could be done for $\{\psi_i^t\}$ as well.

At each synchronization round $t\in[T]$, the server updates the global model $\vct{\mu}^{t+1}$ as follows:
\begin{equation}
    \vct{\mu}^{t+1} = \frac{1}{K}\sum_{i\in\mathcal{K}{t}}\vct{\mu}_i^{t},
\end{equation}
where $\vct{\mu}_i^{t}$ is the update of the global model at the $i$-th client that is obtained by running $\tau$ local iterations at the $i$-th client. At each of the local iterations, the client clips the gradient $\bh_{i}^{t}$ with threshold $C_1$ and adds a zero-mean Gaussian noise vector with variance $\sigma_{q_1}^2\bbI_d$. When neglecting the noise added at the local iterations, the norm-$2$ sensitivity of updating the global model $\vct{\mu}_i^{t+1}$ at the synchronization round $t$ is bounded by:
\begin{equation}
\Delta \vct{\mu} = \max_{\mathcal{K}^{t},\mathcal{K}^{'t}}\|\vct{\mu}^{t+1}-\vct{\mu}'^{t+1}\|_2^2\leq \frac{\tau C_1^2}{K^2},     
\end{equation}
where $\mathcal{K}^{t},\mathcal{K}^{'t}\subset [m]$ are neighboring sets that differ in only one client. Additionally, $\vct{\mu}^{t+1}=\frac{1}{K}\sum_{i\in\mathcal{K}{t}}\vct{\mu}_i^{t}$ and $\vct{\mu}'^{t+1}=\frac{1}{K}\sum_{i\in\mathcal{K}{'t}}\vct{\mu}_i^{t}$. Since we add i.i.d. Gaussian noises with variance $\sigma_{q_1}^{2}$ at each local iteration at each client, and then, we take the average of theses vectors over $K$ clients, it is equivalent to adding a single Gaussian vector to the aggregated vectors with variance $\frac{\tau\sigma_{q_1}^{2}}{K}$. Thus, from the RDP of the sub-sampled Gaussian mechanism in~\cite[Table 1]{RDP_Gaussian_subsampled19}, \cite{truncatedCDP_bun-etal18}, we get that the global model $\vct{\mu}^{t+1}$ of a single global iteration of \dpadaped\  is  $(\alpha,\epsilon_t^{(1)}(\alpha))$-RDP, where $\epsilon_t(\alpha)$ is bounded by:
\begin{equation}
\epsilon_t^{(1)}(\alpha) =\left(\frac{K}{m}\right)^{2}\frac{6\alpha C_1^2 }{K\sigma_{q_1}^{2}}.    
\end{equation}
Similarly, we can show that the global parameter $\psi^{t+1}$ at any synchronization round of \dpadaped\  is  $(\alpha,\epsilon_t^{(2)}(\alpha))$-RDP, where $\epsilon_t(\alpha)$ is bounded by:
\begin{equation}
\epsilon_t^{(2)}(\alpha) =\left(\frac{K}{m}\right)^{2}\frac{6\alpha C_2^2 }{K\sigma_{q_2}^{2}}.    
\end{equation}
Using adaptive RDP composition~\cite[Proposition 1]{RDP_Mironov17}, we get that each synchronization round of \dpadaped\ is  $(\alpha,\epsilon_t^{(1)}(\alpha)+\epsilon_t^{(2)}(\alpha))$-RDP.
Thus, by running \dpadaped\ over $T/\tau$ synchronization rounds and from the composition of the RDP, we get that \dpadaped\ is $(\alpha,\epsilon(\alpha))$-RDP, where $\epsilon(\alpha)= \left(\frac{T}{\tau}\right)(\epsilon_t^{(1)}(\alpha)+\epsilon_t^{(2)}(\alpha))$. This completes the proof of Theorem~\ref{thm:privacy_personal}.
\end{proof}

	\newpage
	\bibliography{bibliography}

\begin{thebibliography}{10}

\bibitem{acar2021debiasing}
Durmus Alp~Emre Acar, Yue Zhao, Ruizhao Zhu, Ramon Matas, Matthew Mattina, Paul
  Whatmough, and Venkatesh Saligrama.
\newblock Debiasing model updates for improving personalized federated
  training.
\newblock In {\em International Conference on Machine Learning}, pages 21--31.
  PMLR, 2021.

\bibitem{ahmadian2019better}
Sara Ahmadian, Ashkan Norouzi-Fard, Ola Svensson, and Justin Ward.
\newblock Better guarantees for k-means and euclidean k-median by primal-dual
  algorithms.
\newblock {\em SIAM Journal on Computing}, 49(4):FOCS17--97, 2019.

\bibitem{alistarh2017qsgd}
Dan Alistarh, Demjan Grubic, Jerry Li, Ryota Tomioka, and Milan Vojnovic.
\newblock Qsgd: Communication-efficient sgd via gradient quantization and
  encoding.
\newblock {\em Advances in Neural Information Processing Systems}, 30, 2017.

\bibitem{Borja_HypTest-RDP20}
Borja Balle, Gilles Barthe, Marco Gaboardi, Justin Hsu, and Tetsuya Sato.
\newblock Hypothesis testing interpretations and renyi differential privacy.
\newblock In Silvia Chiappa and Roberto Calandra, editors, {\em International
  Conference on Artificial Intelligence and Statistics (AISTATS)}, volume 108
  of {\em Proceedings of Machine Learning Research}, pages 2496--2506. {PMLR},
  2020.

\bibitem{barnes2020lower}
Leighton~Pate Barnes, Yanjun Han, and Ayfer Ozgur.
\newblock Lower bounds for learning distributions under communication
  constraints via fisher information.
\newblock {\em Journal of Machine Learning Research}, 21(236):1--30, 2020.

\bibitem{bernstein2018signsgd}
Jeremy Bernstein, Yu-Xiang Wang, Kamyar Azizzadenesheli, and Animashree
  Anandkumar.
\newblock signsgd: Compressed optimisation for non-convex problems.
\newblock In {\em International Conference on Machine Learning}, pages
  560--569. PMLR, 2018.

\bibitem{truncatedCDP_bun-etal18}
Mark Bun, Cynthia Dwork, Guy~N. Rothblum, and Thomas Steinke.
\newblock Composable and versatile privacy via truncated {CDP}.
\newblock In {\em {ACM} {SIGACT} Symposium on Theory of Computing (STOC)},
  pages 74--86, 2018.

\bibitem{caldas2018leaf}
Sebastian Caldas, Sai Meher~Karthik Duddu, Peter Wu, Tian Li, Jakub
  Kone{\v{c}}n{\`y}, H~Brendan McMahan, Virginia Smith, and Ameet Talwalkar.
\newblock Leaf: A benchmark for federated settings.
\newblock {\em arXiv preprint arXiv:1812.01097}, 2018.

\bibitem{canonne2020discrete}
Cl{\'{e}}ment~L. Canonne, Gautam Kamath, and Thomas Steinke.
\newblock The discrete gaussian for differential privacy.
\newblock In {\em Neural Information Processing Systems (NeurIPS)}, 2020.

\bibitem{deng2020adaptive}
Yuyang Deng, Mohammad~Mahdi Kamani, and Mehrdad Mahdavi.
\newblock Adaptive personalized federated learning.
\newblock {\em arXiv preprint arXiv:2003.13461}, 2020.

\bibitem{dinh2020personalized}
Canh~T. Dinh, Nguyen~H. Tran, and Tuan~Dung Nguyen.
\newblock Personalized federated learning with moreau envelopes.
\newblock In {\em Advances in Neural Information Processing Systems}, 2020.

\bibitem{du2021fewshot}
Simon~Shaolei Du, Wei Hu, Sham~M. Kakade, Jason~D. Lee, and Qi~Lei.
\newblock Few-shot learning via learning the representation, provably.
\newblock In {\em International Conference on Learning Representations}, 2021.

\bibitem{Calibrating_DP06}
Cynthia Dwork, Frank McSherry, Kobbi Nissim, and Adam~D. Smith.
\newblock Calibrating noise to sensitivity in private data analysis.
\newblock In {\em Theory of Cryptography Conference (TCC)}, pages 265--284,
  2006.

\bibitem{dwork2014algorithmic}
Cynthia Dwork and Aaron Roth.
\newblock The algorithmic foundations of differential privacy.
\newblock {\em Foundations and Trends in Theoretical Computer Science},
  9(3-4):211--407, 2014.

\bibitem{fallah2020personalized}
Alireza Fallah, Aryan Mokhtari, and Asuman Ozdaglar.
\newblock Personalized federated learning: A meta-learning approach.
\newblock In {\em Advances in Neural Information Processing Systems}, 2020.

\bibitem{geyer2017differentially}
Robin~C Geyer, Tassilo Klein, and Moin Nabi.
\newblock Differentially private federated learning: A client level
  perspective.
\newblock {\em arXiv preprint arXiv:1712.07557}, 2017.

\bibitem{ghazi2020differentially}
Badih Ghazi, Ravi Kumar, and Pasin Manurangsi.
\newblock Differentially private clustering: Tight approximation ratios.
\newblock {\em Advances in Neural Information Processing Systems},
  33:4040--4054, 2020.

\bibitem{ghazi-user-level-dp-correlated21}
Badih Ghazi, Ravi Kumar, and Pasin Manurangsi.
\newblock User-level differentially private learning via correlated sampling.
\newblock In {\em Neural Information Processing Systems (NeurIPS)}, pages
  20172--20184, 2021.

\bibitem{ghosh2020efficient}
Avishek Ghosh, Jichan Chung, Dong Yin, and Kannan Ramchandran.
\newblock An efficient framework for clustered federated learning.
\newblock In {\em Advances in Neural Information Processing Systems}, 2020.

\bibitem{vanTrees95}
Richard Gill and Boris Levit.
\newblock Applications of the van trees inequality: A bayesian cramér-rao
  bound.
\newblock {\em Bernoulli}, 1:59--79, 03 1995.

\bibitem{girgis2021shuffled}
Antonious~M Girgis, Deepesh Data, Suhas Diggavi, Peter Kairouz, and
  Ananda~Theertha Suresh.
\newblock Shuffled model of federated learning: Privacy, accuracy and
  communication trade-offs.
\newblock {\em IEEE Journal on Selected Areas in Information Theory},
  2(1):464--478, 2021.

\bibitem{aistats_GirgisDDKS21}
Antonious~M. Girgis, Deepesh Data, Suhas~N. Diggavi, Peter Kairouz, and
  Ananda~Theertha Suresh.
\newblock Shuffled model of differential privacy in federated learning.
\newblock In {\em International Conference on Artificial Intelligence and
  Statistics (AISTATS)}, volume 130 of {\em Proceedings of Machine Learning
  Research}, pages 2521--2529. {PMLR}, 2021.

\bibitem{GirgisDDSK_RDP_Shuffle_CCS21}
Antonious~M. Girgis, Deepesh Data, Suhas~N. Diggavi, Ananda~Theertha Suresh,
  and Peter Kairouz.
\newblock On the r{\'{e}}nyi differential privacy of the shuffle model.
\newblock In {\em {ACM} {SIGSAC} Conference on Computer and Communications
  Security (CCS)}, pages 2321--2341, 2021.

\bibitem{hanzely2020lower}
Filip Hanzely, Slavomír Hanzely, Samuel Horváth, and Peter Richtárik.
\newblock Lower bounds and optimal algorithms for personalized federated
  learning.
\newblock In {\em Advances in Neural Information Processing Systems}, 2020.

\bibitem{hanzely2020federated}
Filip Hanzely and Peter Richtárik.
\newblock Federated learning of a mixture of global and local models.
\newblock {\em arXiv preprint arXiv:2002.05516}, 2020.

\bibitem{hu2020personalized}
Rui Hu, Yuanxiong Guo, Hongning Li, Qingqi Pei, and Yanmin Gong.
\newblock Personalized federated learning with differential privacy.
\newblock {\em IEEE Internet of Things Journal}, 7(10):9530--9539, 2020.

\bibitem{prateek2021differentially}
Prateek Jain, John Rush, Adam Smith, Shuang Song, and Abhradeep Guha~Thakurta.
\newblock Differentially private model personalization.
\newblock In {\em Advances in Neural Information Processing Systems},
  volume~34, 2021.

\bibitem{jain2021differentially}
Prateek Jain, John Rush, Adam Smith, Shuang Song, and Abhradeep Guha~Thakurta.
\newblock Differentially private model personalization.
\newblock {\em Advances in Neural Information Processing Systems}, 34, 2021.

\bibitem{james1961estimation}
William James and Charles Stein.
\newblock Estimation with quadratic loss.
\newblock In {\em Proceedings Berkeley Symposium on Mathematics and Statistics,
  Vol 1}, pages 361--379. University of California Press, 1961.

\bibitem{jiang2019improving}
Yihan Jiang, Jakub Kone{\v{c}}n{\`y}, Keith Rush, and Sreeram Kannan.
\newblock Improving federated learning personalization via model agnostic meta
  learning.
\newblock {\em arXiv preprint arXiv:1909.12488}, 2019.

\bibitem{kairouz2021advances}
Peter Kairouz, H~Brendan McMahan, Brendan Avent, Aur{\'e}lien Bellet, Mehdi
  Bennis, Arjun~Nitin Bhagoji, Kallista Bonawitz, Zachary Charles, Graham
  Cormode, Rachel Cummings, et~al.
\newblock Advances and open problems in federated learning.
\newblock {\em Foundations and Trends{\textregistered} in Machine Learning},
  14(1--2):1--210, 2021.

\bibitem{kasiviswanathan2011can}
Shiva~Prasad Kasiviswanathan, Homin~K Lee, Kobbi Nissim, Sofya Raskhodnikova,
  and Adam Smith.
\newblock What can we learn privately?
\newblock {\em SIAM Journal on Computing}, 40(3):793--826, 2011.

\bibitem{kohdak2019adaptive}
Mikhail Khodak, Maria-Florina~F Balcan, and Ameet~S Talwalkar.
\newblock Adaptive gradient-based meta-learning methods.
\newblock In {\em Advances in Neural Information Processing Systems}, 2019.

\bibitem{levy_user-level-dp21}
Daniel Levy, Ziteng Sun, Kareem Amin, Satyen Kale, Alex Kulesza, Mehryar Mohri,
  and Ananda~Theertha Suresh.
\newblock Learning with user-level privacy.
\newblock In {\em Neural Information Processing Systems (NeurIPS)}, pages
  12466--12479, 2021.

\bibitem{li2019fedmd}
Daliang Li and Junpu Wang.
\newblock Fedmd: Heterogenous federated learning via model distillation.
\newblock {\em arXiv preprint arXiv:1910.03581}, 2019.

\bibitem{Li2020Differentially}
Jeffrey Li, Mikhail Khodak, Sebastian Caldas, and Ameet Talwalkar.
\newblock Differentially private meta-learning.
\newblock In {\em International Conference on Learning Representations}, 2020.

\bibitem{li2021ditto}
Tian Li, Shengyuan Hu, Ahmad Beirami, and Virginia Smith.
\newblock Ditto: Fair and robust federated learning through personalization.
\newblock In {\em International Conference on Machine Learning}, pages
  6357--6368. PMLR, 2021.

\bibitem{lin2020ensemble}
Tao Lin, Lingjing Kong, Sebastian~U. Stich, and Martin Jaggi.
\newblock Ensemble distillation for robust model fusion in federated learning.
\newblock In {\em Advances in Neural Information Processing Systems}, 2020.

\bibitem{liu-theertha-user-level-dp20}
Yuhan Liu, Ananda~Theertha Suresh, Felix~X. Yu, Sanjiv Kumar, and Michael
  Riley.
\newblock Learning discrete distributions: user vs item-level privacy.
\newblock In {\em Neural Information Processing Systems (NeurIPS)}, 2020.

\bibitem{lloyd1982least}
Stuart Lloyd.
\newblock Least squares quantization in pcm.
\newblock {\em IEEE transactions on information theory}, 28(2):129--137, 1982.

\bibitem{lord1967}
Frederic~M. Lord.
\newblock Estimating true-score distributions in psychological testing (an
  empirical bayes estimation problem)*.
\newblock {\em ETS Research Bulletin Series}, 1967(2):i--51, 1967.

\bibitem{mansour2020approaches}
Yishay Mansour, Mehryar Mohri, Jae Ro, and Ananda~Theertha Suresh.
\newblock Three approaches for personalization with applications to federated
  learning.
\newblock {\em arXiv preprint arXiv:2002.10619}, 2020.

\bibitem{marfoq2021federated}
Othmane Marfoq, Giovanni Neglia, Aur{\'e}lien Bellet, Laetitia Kameni, and
  Richard Vidal.
\newblock Federated multi-task learning under a mixture of distributions.
\newblock {\em Advances in Neural Information Processing Systems}, 34, 2021.

\bibitem{mcmahan2017communicationefficient}
Brendan McMahan, Eider Moore, Daniel Ramage, Seth Hampson, and Blaise~Aguera
  y~Arcas.
\newblock Communication-efficient learning of deep networks from decentralized
  data.
\newblock In {\em Artificial Intelligence and Statistics}, pages 1273--1282.
  PMLR, 2017.

\bibitem{RDP_Mironov17}
Ilya Mironov.
\newblock R{\'{e}}nyi differential privacy.
\newblock In {\em {IEEE} Computer Security Foundations Symposium (CSF)}, pages
  263--275, 2017.

\bibitem{RDP_Gaussian_subsampled19}
Ilya Mironov, Kunal Talwar, and Li~Zhang.
\newblock R{\'{e}}nyi differential privacy of the sampled gaussian mechanism.
\newblock {\em CoRR}, abs/1908.10530, 2019.

\bibitem{ozkara2021quped}
Kaan Ozkara, Navjot Singh, Deepesh Data, and Suhas Diggavi.
\newblock Quped: Quantized personalization via distillation with applications
  to federated learning.
\newblock {\em Advances in Neural Information Processing Systems},
  34:3622--3634, 2021.

\bibitem{RaghuRBV20}
Aniruddh Raghu, Maithra Raghu, Samy Bengio, and Oriol Vinyals.
\newblock Rapid learning or feature reuse? towards understanding the
  effectiveness of {MAML}.
\newblock In {\em 8th International Conference on Learning Representations,
  {ICLR} 2020, Addis Ababa, Ethiopia, April 26-30, 2020}. OpenReview.net, 2020.

\bibitem{shen2020federated}
Tao Shen, Jie Zhang, Xinkang Jia, Fengda Zhang, Gang Huang, Pan Zhou, Kun
  Kuang, Fei Wu, and Chao Wu.
\newblock Federated mutual learning.
\newblock {\em arXiv preprint arXiv:2006.16765}, 2020.

\bibitem{smith2017federated}
Virginia Smith, Chao{-}Kai Chiang, Maziar Sanjabi, and Ameet~S. Talwalkar.
\newblock Federated multi-task learning.
\newblock In {\em Advances in Neural Information Processing Systems}, pages
  4424--4434, 2017.

\bibitem{Stein56}
Charles Stein.
\newblock Inadmissibility of the usual estimator for the mean of a multivariate
  normal distribution.
\newblock In {\em Proceedings of the {T}hird {B}erkeley {S}ymposium on
  {M}athematical {S}tatistics and {P}robability, 1954--1955, vol. {I}}, pages
  197--206. University of California Press, Berkeley-Los Angeles, Calif., 1956.

\bibitem{stemmer2020locally}
Uri Stemmer.
\newblock Locally private k-means clustering.
\newblock In {\em SODA}, pages 548--559, 2020.

\bibitem{tian2017learning}
Kevin Tian, Weihao Kong, and Gregory Valiant.
\newblock Learning populations of parameters.
\newblock {\em Advances in neural information processing systems}, 30, 2017.

\bibitem{tian2020rethinking}
Yonglong Tian, Yue Wang, Dilip Krishnan, Joshua~B Tenenbaum, and Phillip Isola.
\newblock Rethinking few-shot image classification: a good embedding is all you
  need?
\newblock In {\em European Conference on Computer Vision}, pages 266--282.
  Springer, 2020.

\bibitem{vanhaesebrouck2017decentralized}
Paul Vanhaesebrouck, Aur{\'e}lien Bellet, and Marc Tommasi.
\newblock Decentralized collaborative learning of personalized models over
  networks.
\newblock In {\em Artificial Intelligence and Statistics}, pages 509--517.
  PMLR, 2017.

\bibitem{vinayak2019maximum}
Ramya~Korlakai Vinayak, Weihao Kong, Gregory Valiant, and Sham Kakade.
\newblock Maximum likelihood estimation for learning populations of parameters.
\newblock In {\em International Conference on Machine Learning}, pages
  6448--6457. PMLR, 2019.

\bibitem{wainwright2019high}
Martin~J Wainwright.
\newblock {\em High-dimensional statistics: A non-asymptotic viewpoint},
  volume~48.
\newblock Cambridge University Press, 2019.

\bibitem{zantedeschi2020fully}
Valentina Zantedeschi, Aur{\'e}lien Bellet, and Marc Tommasi.
\newblock Fully decentralized joint learning of personalized models and
  collaboration graphs.
\newblock In {\em International Conference on Artificial Intelligence and
  Statistics}, pages 864--874. PMLR, 2020.

\bibitem{zhang2021personalized}
Michael Zhang, Karan Sapra, Sanja Fidler, Serena Yeung, and Jose~M. Alvarez.
\newblock Personalized federated learning with first order model optimization.
\newblock In {\em International Conference on Learning Representations}, 2021.

\bibitem{Zhang:EECS-2016-47}
Yuchen Zhang.
\newblock {\em Distributed machine learning with communication constraints}.
\newblock PhD thesis, EECS Department, University of California, Berkeley, May
  2016.

\end{thebibliography}
	\bibliographystyle{plain}


\newpage

	\renewcommand{\thesection}{\Alph{section}}
	\appendix
	\setcounter{equation}{0}
	{\allowdisplaybreaks
	\section{Preliminaries on Differential Privacy}\label{app:preliminary}

We give standard privacy definitions that we use in Section~\ref{sec:priv-defn}, some existing results on RDP to DP conversion and RDP composition in Section~\ref{sec:RDP-DP}, and user-level differential privacy in Section~\ref{sec:user-level-dp}.

\subsection{Privacy Definitions}\label{sec:priv-defn}
In this subsection, we define different privacy notions that we will use in this paper: local differential privacy (LDP), central different privacy (DP), and Renyi differential privacy (RDP), and their user-level counterparts.
\begin{definition}[Local Differential Privacy - LDP~\cite{kasiviswanathan2011can}]~\label{defn:LDPdef}
	For $\epsilon_0\geq0$, a randomized mechanism $\calR:\calX\to\calY$ is said to be $\eps_0$-local differentially private (in short, $\eps_{0}$-LDP), if for every pair of inputs $d,d'\in\calX$, we have 
	\begin{equation}~\label{ldp-def}
		\Pr[\calR(d)\in \calS] \leq e^{\eps_0}\Pr[\calR(d')\in \calS], \qquad \forall \calS\subset\calY.
	\end{equation}
\end{definition}
Let $\calD=\lbrace x_1,\ldots,x_n\rbrace$ denote a dataset comprising $n$ points from $\calX$. We say that two datasets $\calD=\lbrace x_1,\ldots,x_n\rbrace$ and $\calD^{\prime}=\lbrace x_1^{\prime},\ldots,x_n^{\prime}\rbrace$ are neighboring (and denoted by $\calD\sim\calD'$) if they differ in one data point, i.e., there exists an $i\in[n]$ such that $x_i\neq x'_i$ and for every $j\in[n],j\neq i$, we have $x_j=x'_j$.
\begin{definition}[Central Differential Privacy - DP \cite{Calibrating_DP06,dwork2014algorithmic}]\label{defn:central-DP}
	For $\epsilon,\delta\geq0$, a randomized mechanism $\calM:\calX^n\to\calY$ is said to be $(\epsilon,\delta)$-differentially private (in short, $(\epsilon,\delta)$-DP), if for all neighboring datasets $\calD\sim\calD^{\prime}\in\calX^{n}$ and every subset $\calS\subseteq \calY$, we have
	\begin{equation}~\label{dp_def}
		\Pr\left[\calM(\calD)\in\calS\right]\leq e^{\eps_0}\Pr\left[\calM(\calD^{\prime})\in\calS\right]+\delta.
	\end{equation}
	If $\delta = 0$, then the privacy is referred to as pure DP.
\end{definition}

\begin{definition}[$(\lambda,\epsilon(\lambda))$-RDP (Renyi Differential Privacy)~\cite{RDP_Mironov17}]\label{defn:RDP}
	A randomized mechanism $\calM:\calX^n\to\calY$ is said to have $\epsilon(\lambda)$-Renyi differential privacy of order $\lambda\in(1,\infty)$ (in short, $(\lambda,\epsilon(\lambda))$-RDP), if for any neighboring datasets $\calD\sim\calD'\in\calX^n$, the Renyi divergence between $\calM(\calD)$ and $\calM(\calD')$ is upper-bounded by $\eps(\lambda)$, i.e.,
	\begin{align*}
		D_{\lambda}(\calM(\calD)||\calM(\calD'))&=\frac{1}{\lambda-1}\log\left(\mathbb{E}_{\theta\sim\calM(\calD')}\left[\left(\frac{\calM(\calD)(\theta)}{\calM(\calD')(\theta)}\right)^{\lambda}\right]\right)\\
		&\leq \epsilon(\lambda),
	\end{align*}
	where $\calM(\calD)(\theta)$ denotes the probability that $\calM$ on input $\calD$ generates the output $\theta$. For convenience, instead of $\eps(\lambda)$ being an upper bound, we define it as $\eps(\lambda)=\sup_{\calD\sim\calD'}D_{\lambda}(\calM(\calD)||\calM(\calD'))$.
\end{definition}

\subsection{RDP to DP Conversion and RDP Composition}\label{sec:RDP-DP}
As mentioned after Theorem~\ref{thm:privacy_personal}, we can convert the RDP guarantees of \dpadaped\ to its DP guarantees using existing conversion results from literature. To the best of our knowledge, the following gives the best conversion.
\begin{lemma}[From RDP to DP~\cite{canonne2020discrete,Borja_HypTest-RDP20}]\label{lem:RDP_DP} 
	Suppose for any $\lambda>1$, a mechanism $\calM$ is $\left(\lambda,\epsilon\left(\lambda\right)\right)$-RDP. Then, the mechanism $\calM$ is $\left(\epsilon,\delta\right)$-DP, where $\epsilon,\delta$ are define below: 
	\begin{equation*}
		\begin{aligned}
			&\text{For a given }\delta\in(0,1) :\\
			& \epsilon = \min_{\lambda} \epsilon\left(\lambda\right)+\frac{\log\left(1/\delta\right)+\left(\lambda-1\right)\log\left(1-1/\lambda\right)-\log\left(\lambda\right)}{\lambda-1}\\
			&\text{For a given }\epsilon>0 : \\
			& \delta = \min_{\lambda}\frac{\exp\left(\left(\lambda-1\right)\left(\epsilon\left(\lambda\right)-\epsilon\right)\right)}{\lambda-1}\left(1-\frac{1}{\lambda}\right)^{\lambda}. 
		\end{aligned}
	\end{equation*}
\end{lemma}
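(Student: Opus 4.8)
The plan is to reduce the conversion to the hockey--stick (a.k.a.\ $E_\gamma$) divergence characterization of approximate DP and then to a one--variable optimization. Recall that $\calM$ is $(\epsilon,\delta)$-DP if and only if for every pair of neighboring datasets $\calD\sim\calD'$ one has $E_{e^\epsilon}(\calM(\calD)\,\|\,\calM(\calD'))\le\delta$, where $E_\gamma(P\|Q):=\sup_{S}\big(P(S)-\gamma Q(S)\big)=\bbE_{Q}\big[(\tfrac{dP}{dQ}-\gamma)_+\big]$ (the supremum being attained on $S^\star=\{x:\tfrac{dP}{dQ}(x)>\gamma\}$). So it suffices to upper bound $E_{e^\epsilon}(P\|Q)$ for $P=\calM(\calD)$, $Q=\calM(\calD')$ using only the quantity controlled by RDP. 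Writing $Z:=\tfrac{dP}{dQ}$ as a random variable under $Q$, the definition of R\'enyi divergence unpacks to $\bbE_Q[Z^\lambda]=e^{(\lambda-1)D_\lambda(P\|Q)}\le e^{(\lambda-1)\epsilon(\lambda)}$ for every $\lambda>1$ by the RDP hypothesis; no data--processing step is needed.

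The core step is a pointwise inequality linking the positive part to the $\lambda$-th moment. For fixed $t=e^\epsilon>0$ and $\lambda>1$, maximizing $z\mapsto (z-t)_+/z^\lambda$ over $z>0$ (the stationarity condition gives $z^\star=\tfrac{\lambda t}{\lambda-1}$) yields
\[
(z-t)_+\ \le\ \frac{(\lambda-1)^{\lambda-1}}{\lambda^\lambda\,t^{\lambda-1}}\;z^\lambda\qquad\text{for all }z\ge 0 .
\]
Taking expectations under $Q$, substituting $t=e^\epsilon$ and the moment bound above, and using the identity $\tfrac{(\lambda-1)^{\lambda-1}}{\lambda^\lambda}=\tfrac{1}{\lambda-1}\big(1-\tfrac1\lambda\big)^{\lambda}$, one obtains
\[
E_{e^\epsilon}(P\|Q)\ \le\ \frac{e^{(\lambda-1)(\epsilon(\lambda)-\epsilon)}}{\lambda-1}\Big(1-\frac1\lambda\Big)^{\lambda}.
\]
The right--hand side does not depend on the particular neighboring pair $(\calD,\calD')$, so the bound is uniform; minimizing over $\lambda>1$ gives the stated expression for $\delta$ in terms of $\epsilon$.

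For the dual statement (a target $\delta$, solve for $\epsilon$) I would simply rearrange the same inequality: requiring $\tfrac{e^{(\lambda-1)(\epsilon(\lambda)-\epsilon)}}{\lambda-1}(1-\tfrac1\lambda)^{\lambda}\le\delta$ is equivalent, after taking logarithms and dividing by $\lambda-1>0$, to $\epsilon\ge\epsilon(\lambda)+\tfrac{\log(1/\delta)+(\lambda-1)\log(1-1/\lambda)-\log\lambda}{\lambda-1}$, where I used $(\lambda-1)\log(1-\tfrac1\lambda)-\log\lambda=(\lambda-1)\log(\lambda-1)-\lambda\log\lambda$; taking the smallest such $\epsilon$ over $\lambda$ gives the first formula. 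I do not expect a serious obstacle: the only real content is the elementary optimization behind the pointwise bound and recognizing the $E_\gamma$-divergence form of $(\epsilon,\delta)$-DP. Since the statement is quoted from \cite{canonne2020discrete,Borja_HypTest-RDP20}, the plan is essentially to reproduce their tight argument; a cruder bound (Mironov's original $\delta=e^{(\lambda-1)(\epsilon(\lambda)-\epsilon)}$) follows from the same chain by discarding the factor $(1-1/\lambda)^{\lambda}/(\lambda-1)\le 1$.
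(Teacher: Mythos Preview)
Your argument is correct: the hockey--stick divergence characterization of $(\epsilon,\delta)$-DP together with the elementary pointwise bound $(z-t)_+\le\tfrac{(\lambda-1)^{\lambda-1}}{\lambda^\lambda t^{\lambda-1}}z^\lambda$ (obtained by one-variable optimization) and the R\'enyi moment bound $\bbE_Q[Z^\lambda]\le e^{(\lambda-1)\epsilon(\lambda)}$ yield exactly the stated $\delta$; the algebraic rearrangement to the $\epsilon$-formula is also right, since $(\lambda-1)\log(1-1/\lambda)-\log\lambda=\lambda\log(1-1/\lambda)-\log(\lambda-1)$.

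Note, however, that the paper does not supply its own proof of this lemma: it is stated as a known conversion result and attributed to \cite{canonne2020discrete,Borja_HypTest-RDP20}. So there is no in-paper proof to compare against. Your proposal is precisely the argument given in those references (in particular the tight conversion of Canonne et al.\ and Balle et al.), so you are reconstructing the cited proof rather than offering an alternative.
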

The main strength of RDP in comparison to other privacy notions comes from composition. The following result states that if we adaptively compose two RDP mechanisms with the same order, their privacy parameters add up in the resulting mechanism.
\begin{lemma}[Adaptive composition of RDP~{\cite[Proposition~1]{RDP_Mironov17}}]\label{lemm:compostion_rdp} 
	For any $\lambda>1$, let $\calM_1:\calX\to \calY_1$ be a $(\lambda,\epsilon_1(\lambda))$-RDP mechanism and $\calM_2:\calY_1\times \calX\to \calY$ be a $(\lambda,\epsilon_2(\lambda))$-RDP mechanism. Then, the mechanism defined by $(\calM_1,\calM_2)$ satisfies $(\lambda,\epsilon_1(\lambda)+\epsilon_2(\lambda))$-RDP. 
\end{lemma}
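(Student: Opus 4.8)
The plan is to prove the adaptive composition lemma directly from the definition of Rényi divergence, by factoring the joint output distribution of the composed mechanism into a marginal contribution from $\calM_1$ and a conditional contribution from $\calM_2$, and then bounding each factor separately by its own RDP guarantee. First I would fix an arbitrary pair of neighboring datasets $\calD\sim\calD'\in\calX^n$ and write $P$ (resp.\ $Q$) for the density of the joint output $(\calM_1,\calM_2)$ on input $\calD$ (resp.\ $\calD'$). Since $\calM_2$ reads both the output $y_1$ of $\calM_1$ and the dataset, the joint density factorizes as $P(y_1,y_2)=P_1(y_1)\,P_2(y_2\mid y_1)$, where $P_1$ is the law of $\calM_1(\calD)$ and $P_2(\cdot\mid y_1)$ is the law of $\calM_2(y_1,\calD)$; similarly $Q(y_1,y_2)=Q_1(y_1)\,Q_2(y_2\mid y_1)$.

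Next I would substitute this factorization into the moment $\mathbb{E}_{(y_1,y_2)\sim Q}[(P/Q)^\lambda]$ that appears in the definition of $D_\lambda$, and split the likelihood ratio into a $y_1$-part and a conditional $y_2$-part. Integrating out $y_2$ for each fixed $y_1$ produces exactly the factor $\exp\big((\lambda-1)\,D_\lambda(P_2(\cdot\mid y_1)\,\|\,Q_2(\cdot\mid y_1))\big)$. The key step is then to bound this inner factor uniformly in $y_1$: for every fixed first argument $y_1$, the conditional laws $P_2(\cdot\mid y_1)=\calM_2(y_1,\calD)$ and $Q_2(\cdot\mid y_1)=\calM_2(y_1,\calD')$ are evaluated on the neighboring pair $\calD\sim\calD'$, so the $(\lambda,\epsilon_2(\lambda))$-RDP guarantee of $\calM_2$ gives $D_\lambda(P_2(\cdot\mid y_1)\|Q_2(\cdot\mid y_1))\le\epsilon_2(\lambda)$. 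This lets me extract $e^{(\lambda-1)\epsilon_2(\lambda)}$ from the $y_1$-integral, leaving $\int Q_1(y_1)(P_1(y_1)/Q_1(y_1))^\lambda\,dy_1=e^{(\lambda-1)D_\lambda(P_1\|Q_1)}\le e^{(\lambda-1)\epsilon_1(\lambda)}$ by the RDP of $\calM_1$. Multiplying the two bounds, taking logarithms, and dividing by $\lambda-1$ yields $D_\lambda(P\|Q)\le\epsilon_1(\lambda)+\epsilon_2(\lambda)$; since the neighboring pair was arbitrary, this establishes $(\lambda,\epsilon_1(\lambda)+\epsilon_2(\lambda))$-RDP for the composition.

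The main obstacle is the correct handling of adaptivity: the hypothesis on $\calM_2$ must be read as holding \emph{uniformly} over every possible value of the first argument $y_1$ (the output fed in from $\calM_1$), while it is the dataset slot that varies between the neighboring pair. Making this uniformity explicit is precisely what justifies pulling $e^{(\lambda-1)\epsilon_2(\lambda)}$ out of the $y_1$-integral; without it the factorization argument would not close, since $y_1$ is itself drawn from the data-dependent distribution $Q_1$. A minor technical point I would flag is that the computation assumes the relevant densities exist; in the general case one replaces the integrals by expectations against a common dominating measure, but this does not change the structure of the argument.
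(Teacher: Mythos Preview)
Your argument is correct and is exactly the standard proof of this composition result. Note, however, that the paper does not supply its own proof of this lemma: it is quoted verbatim as \cite[Proposition~1]{RDP_Mironov17} and used as a black box, so there is no ``paper's proof'' to compare against beyond the cited reference, whose argument your proposal faithfully reproduces.
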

%

\subsection{User-level Differential Privacy~\cite{levy_user-level-dp21}}\label{sec:user-level-dp}
Consider a set of $m$ users, each having a local dataset of $n$ samples. Let $\calD_i=\lbrace x_{i1},\ldots,x_{in}\rbrace$ denote the local dataset at the $i$-th user for $i\in[m]$, where $x_{ij}\in\calX$ and $\calX\subset \mathbb{R}^d$. We define $\calD=(\calD_1,\ldots,\calD_{m})\in(\calX^n)^m$ as the entire dataset. 


We have already defined DP, LDP, and RDP in Section~\ref{sec:priv-defn} w.r.t.\ the item-level privacy. Here, we extend those definition w.r.t.\ the user-level privacy. In order to do that, we need a generic neighborhood relation between datasets: We say that two datasets $\calD$, $\calD'$ are neighboring with respect to distance metric $\mathsf{dis}$ if we have $\mathsf{dis}(\calD,\calD')\leq 1$. 

.
\textbf{Item-level DP/RDP vs.\ User-level DP/RDP.}
By choosing $\mathsf{dis}(\calD,\calD')=\sum_{i=1}^{m}\sum_{j=1}^{n}\mathbbm{1}\lbrace x_{ij}\neq x_{ij}' \rbrace$, we recover the standard definition of the DP/RDP from Definitions~\ref{defn:central-DP},~\ref{defn:RDP}, which we call {\em item-level} DP/RDP. In the item-level DP/RDP, two datasets $\calD$, $\calD'$ are neighboring if they differ in a single item. On the other hand, by choosing $\mathsf{dis}(\calD,\calD')=\sum_{i=1}^{m}\mathbbm{1}\lbrace \calD_i\neq \calD'_i \rbrace$, we call it \textit{user-level} DP/RDP, where two datasets $\calD,\calD'\in (\calX^{n})^m$ are neighboring when they differ in a local dataset of any single user. Observe that when each user has a single item ($n=1$), then both item-level and user-level privacy are equivalent.   

\textbf{User-level Local Differential Privacy (LDP).}
When we have a single user (i.e., $m=1$ and $\mathbb{\calD}=\calX^{n}$), by choosing $\mathsf{dis}\left(\calD,\calD'\right)=\mathbbm{1}\lbrace \calD\neq \calD' \rbrace$ for $\calD,\calD'\in\calX^n$, we call it \textit{user-level LDP}. In this case each user privatize her own local dataset using a private mechanism.

We can define user-level LDP/DP/RDP analogously to their item-level counterparts using the neighborhood relation $\mathsf{dis}$ defined above.

	\section{Personalized Learning -- Discrete Mixture Model}\label{app:disc-mixture}
	In this section, we present the linear regression problem as a generalization to the estimation problem with discrete priors. This model falls into the framework studied in \cite{marfoq2021federated} and can be seen as a special case of Gaussian mixture model with 0 variance. The goal is to illustrate how our framework can capture the model in \cite{marfoq2021federated}.

Consider a set of $m$ clients, where the $i$-th client has a local dataset $(X_{i1},Y_{i1}),\ldots,(X_{in},Y_{in})$ of $m$ samples, where $X_{ij}\in\mathbb{R}^{d}$ denotes the feature vector and $Y_{ij}\in\mathbb{R}$ denotes the corresponding response. Let $Y_i=(Y_{i1},\ldots,Y_{i1})\in\mathbb{R}^{n}$ and $X_i=(X_{i1},\ldots,X_{in})\in\mathbb{R}^{n\times d}$ denote the response vector and the feature matrix at the $i$-th client, respectively. Following the standard regression, we assume that the response vector $Y_i$ is obtained from a linear model as follows:
\begin{equation}
	Y_i = X_i \vct{\theta}_i+\bw_i,    
\end{equation}
where $\vct{\theta}_i$ denotes personalized model of the $i$-th client and $\bw_i\sim\mathcal{N}\left(0,\sigma_x^2\mathbb{I}_{n}\right)$ is a noise vector. The clients models are drawn i.i.d. from a discrete distribution $\vct{\theta}_1,\ldots,\vct{\theta}_{m}\sim \bbP$, where $\bbP=[p_1,\ldots,p_k]$ such that $p_l=\Pr[\vct{\theta}_i=\vct{\mu}_l]$ for $i\in[m]$ and $l\in[k]$.

Our goal is to solve the optimization problem stated in \eqref{eq:learning-log-generative-model} (for the linear regression with the above discrete prior) and learn the optimal personalized parameters $\{\htheta_i\}$.

We assume that the discrete distribution $\bbP$ and the prior candidates $\lbrace \vct{\mu}_l \rbrace_{l=1}^{k}$ are unknown to the clients. Inspired from Algorithm~\ref{algo:cluster} for estimation with discrete priors, we obtain Algorithm~\ref{algo:cluster_linear_regression} for learning with discrete prior. 
Note that this is \emph{not} a new algorithm, and is essentially the algorithm proposed in \cite{marfoq2021federated} applied to linear regression with Clustering algorithm instead of global model aggregation in the server. Here we show how our framework captures mixture model in \cite{marfoq2021federated} through this example.

\textbf{Description of Algorithm~\ref{algo:cluster_linear_regression}.} Client $i$ initializes its personalized parameters $\vct{\theta}_i^{(0)}=(X_i^{T}X_i)^{-1}X_i^{T}Y_i$, which is the optimal as a function of the local dataset at the $i$-th client without any prior knowledge. 
In any iteration $t$, for a given prior information $\bbP^{(t)}$, $\lbrace \vct{\mu}_l^{(t)}\rbrace$, the $i$-th client updates the personalized model as $\vct{\theta}_i^{t}= \sum_{l=1}^{k}\alpha_{l}^{(i)}\vct{\mu}_l^{(t)}$, where the weights $\alpha_{l}^{(i)}\propto p_l^{(t)}\exp\left(-\frac{\|X_i\vct{\mu}_l^{(t)}-Y_i\|^2}{2\sigma_x^2}\right)$ and sends its current estimate of the personalized parameter $\vct{\theta}_i^t$ to the server. Upon receiving $\vct{\theta}_1^t,\ldots,\vct{\theta}_m^t$, server will run $\mathsf{Cluster}$ algorithm to update the global parameters $\bbP,\vct{\mu}_1^{(t)},\ldots,\vct{\mu}_k^{(t)}$, and broadcasts them to the clients.

\begin{algorithm}[h]
	\caption{Alternating Minimization for Personalized Learning}
	{\bf Input:} Number of iterations $T$, local datasets $(X_i,Y_i)$ for $i\in[m]$.\\
	\vspace{-0.3cm}
	\begin{algorithmic}[1] 	\label{algo:cluster_linear_regression}
		\STATE \textbf{Initialize} $\vct{\theta}_i^{0}=(X_i^{T}X_i)^{-1}X_i^{T}Y_i$  for $i\in[m]$ (if $X_i^{T}X_i$ is not full-rank, take the pseudo-inverse), $\bbP^{(0)},\vct{\mu}_1^{(0)},\ldots,\vct{\mu}_k^{(0)}$.
		\FOR{$t=1$ \textbf{to} $T$}
		\STATE {\bf On Clients:}
		\FOR {$i=1$ \textbf{to} $m$:}
		\STATE Receive $\bbP^{(t-1)},\vct{\mu}_1^{(t-1)},\ldots,\vct{\mu}_k^{(t)}$ from the server 
		\STATE\label{step:personal_update} Update the personalized parameters and the coefficients:
		\begin{align*}
			\vct{\theta}_i^{t} \gets \sum_{l=1}^{k}\alpha_{l}^{(i)}\vct{\mu}_l^{(t-1)} \qquad \text{ and } \qquad
			\alpha_{l}^{(i)} =\frac{p_l^{(t-1)}\exp\left(-\frac{\|X_{i}\vct{\mu}_l^{(t-1)}-Y_i\|^2}{2\sigma_x^2}\right)}{\sum_{s=1}^{k}p_s^{(t-1)}\exp\left(-\frac{\|X_{i}\vct{\mu}_s^{(t-1)}-Y_i\|^2}{2\sigma_x^2}\right)}
		\end{align*}
		\STATE Send $\vct{\theta}_i^{(t)}$ to the server
		\ENDFOR
		\STATE {\bf At the Server:}
		\STATE Receive $\vct{\theta}_1^{(t)},\ldots,\vct{\theta}_m^{(t)}$ from the clients
		\STATE\label{step:clustering} Update the global parameters:
		$\bbP^{(t)},\vct{\mu}_1^{(t)},\ldots,\vct{\mu}_k^{(t)}\gets \mathsf{Cluster}\left(\vct{\theta}_1^{(t)},\ldots,\vct{\theta}_m^{(t)},k\right)$
		\STATE Broadcast $\bbP^{(t)},\vct{\mu}_1^{(t)},\ldots,\vct{\mu}_k^{(t)}$ to all clients
		\ENDFOR
	\end{algorithmic}
	{\bf Output:} Personalized models $\vct{\theta}_1^{T},\ldots,\vct{\theta}_m^{T}$.
\end{algorithm}

	\label{submission}
	}

\end{document}